\newcommand{\dtlinkcolor}{{0.8 0.8 1}} 
\definecolor{bleudefrance}{rgb}{0.19, 0.55, 0.91}
\definecolor{ao(english)}{rgb}{0.0, 0.5, 0.0}
\newcommand{\addcite}[0]{\ifthenelse{\boolean{showcomments}}
{\textcolor{purple}{(add cite(s)) }}{}}%
\newcommand{\addref}[0]{\ifthenelse{\boolean{showcomments}}
{\textcolor{purple}{(add ref) }}{}}%
\newcommand{\enrique}[1]{  \ifthenelse{\boolean{showcomments}}
{\todo[inline,color=bleudefrance]{Enrique: #1}}{}}
\newcommand{\rene}[1]{  \ifthenelse{\boolean{showcomments}}
{\todo[inline,color=cyan]{Ren\'e: #1}}{}}
\newcommand{\emmargin}[1]{\ifthenelse{\boolean{showcomments}}{\marginpar{\color{bleudefrance}\tiny EM: #1}}{}}
\newcommand{\hmmargin}[1]{\ifthenelse{\boolean{showcomments}}{\marginpar{\color{orange}\tiny HM: #1}}{}}
\newcommand{\hancheng}[1]{  \ifthenelse{\boolean{showcomments}}
{\todo[inline,color=orange]{Hancheng: #1}}{}}
\newcommand{\hl}[1]{\ifthenelse{\boolean{showcomments}}
{\textcolor{red}{#1}}{#1}}
\newcommand{\rank}{\mathrm{rank}}
\newcommand{\sign}{\mathrm{sign}}
\newcommand{\act}{\sigma}
\newcommand{\var}{\alpha}
\newcommand{\myparagraph}[1]{\smallskip\noindent\textbf{#1.}}
\newtheorem{theorem}{Theorem}
\newenvironment{customthm}[1]
  {\innercustomthm}
  {\endinnercustomthm}
\newenvironment{customlem}[1]
  {\innercustomlem}
  {\endinnercustomlem}
\newenvironment{customprop}[1]
  {\innercustomprop}
  {\endinnercustomprop}
\newtheorem{lemma}{Lemma}
\newtheorem{proposition}{Proposition}
\newtheorem{conjecture}{Conjecture}
\newtheorem{remark}{Remark}
\newtheorem*{claim}{Claim}
\title{Can Implicit Bias Imply Adversarial Robustness?}
\author{%
  Hancheng Min and Ren\'e Vidal \\
  \\
  Center for Innovation in Data Engineering and Science (IDEAS)
  \\
  University of Pennsylvania 
  \\
}
\date{}
\begin{document}

\maketitle

\begin{abstract}
    The implicit bias of gradient-based training algorithms has been considered mostly beneficial as it leads to trained networks that often generalize well. However, \citet{frei2023the} show that such implicit bias can harm adversarial robustness. Specifically, they show that if the data consists of clusters with small inter-cluster correlation, a shallow (two-layer) ReLU network trained by gradient flow generalizes well, but it is not robust to adversarial attacks of small radius. Moreover, this phenomenon occurs despite the existence of a much more robust classifier that can be explicitly constructed from a shallow network. 
    In this paper, we extend recent analyses of neuron alignment to show that a shallow network with a polynomial ReLU activation (pReLU) trained by gradient flow not only generalizes well but is also robust to adversarial attacks. Our results highlight the importance of the interplay between data structure and architecture design in the implicit bias and robustness of trained networks.
\end{abstract}

\section{Introduction}
Behind the success of deep neural networks in many application domains lies their vulnerability to \emph{adversarial attacks}, i.e., small and human-imperceptible perturbations to the input data. Such a phenomenon was observed in the seminal paper of \citet{szegedy2014intriguing} and has motivated a large body of work on building defenses against such attacks \citep{shafahi2019adversarial, papernot2016distillation,wong2019fast, guo2018countering, cohen2019certified, levine2020randomized, yang2020randomized, sulam2020adversarial,kinfu2022analysis}. 

However, many defense strategies have been shown to fail against new adaptive attacks~\citep {athalye2018obfuscated, carlini2019evaluating}, and understanding these failures seems to be a fundamental challenge. For example, \citet{fawzi2018adversarial, dohmatob2019generalized, shafahi2018adversarial} show the non-existence of robust classifiers for certain data distributions. Recently, \citet{pal2023adversarial} show that having a data distribution that concentrates within a small-volume subset of the ambient space is necessary for the existence of a robust classifier. These results highlight the importance of understanding and exploiting data structure in the process of finding classifiers with certified robustness, yet almost none of the existing defense strategies do so. 

Besides data distribution, additional issues arise from the training algorithms. For example, \citet{vardi2022margin} and \citet{frei2023the} show that when the data consists of clusters with small inter-cluster correlation, a shallow (two-layer) ReLU network trained by gradient flow generalizes well but fails to be robust against adversarial attacks of small radius, despite the existence of a much more robust classifier that can be explicitly constructed from a shallow network. 
This study unveils new challenges in the search for robust classifiers: Even if we know robust classifiers exist for certain data distribution, the \emph{implicit bias} from our training algorithm (the choice of network architecture, optimization algorithm, etc.) may prevent us from finding it.

\begin{figure*}[ht]
  \centering
  \vspace{-0.3cm}
  \includegraphics[width=\linewidth]{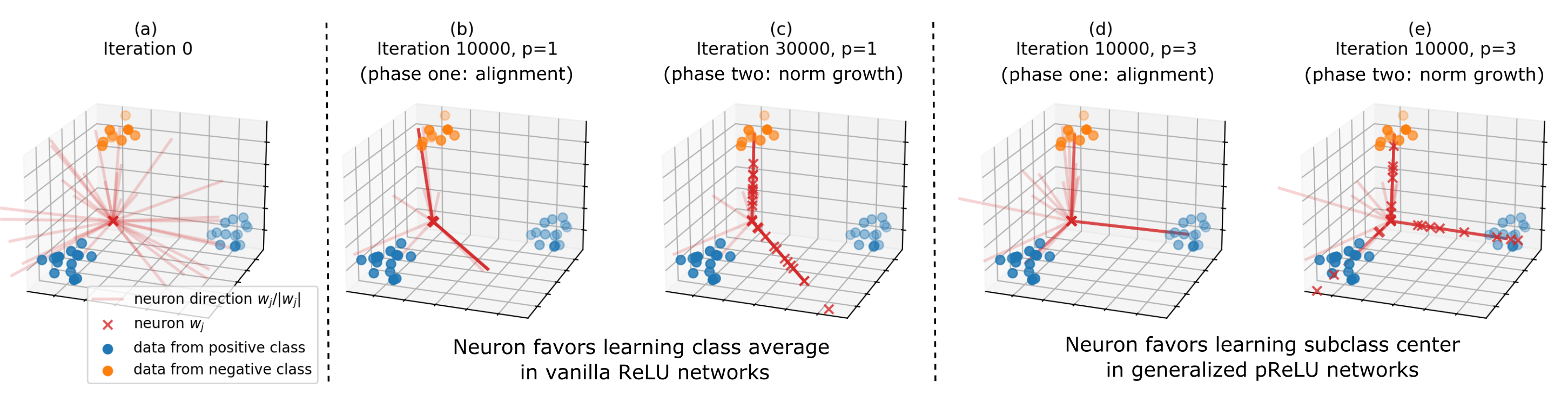}
  \vspace{-0.7cm}
  \caption{Visualizing the training of a pReLU network under small initialization (Details explained in later sections). The dataset has its positive class sampled from two subclasses. \textbf{(a)} At initialization, all neurons have small norms and point toward random directions; \textbf{When $p=1$} (vanilla ReLU network), \textbf{(b)} During the alignment phase, the neuron directions are aligned with either of the average class centers $\bar{\boldsymbol{\mu}}_+$ and $\bar{\boldsymbol{\mu}}_-$; \textbf{(c)} During the second phase, neurons keep the alignment with $\bar{\boldsymbol{\mu}}_+$ and $\bar{\boldsymbol{\mu}}_-$ while growing their norms; \textbf{When $p=3$}, \textbf{(d)} neurons learn subclass centers during alignment phase and \textbf{(e)} keep the alignment in the second phase. Note: the neurons pointing toward directions other than class/subclass centers are not activated by any data point and have small norms throughout training.}
  \label{fig_training_vis}
  \vspace{-0.4cm}
\end{figure*}

\myparagraph{Paper contributions} In this paper, we show that under the same setting studied in~\citet{frei2023the}, the implicit bias of gradient flow that leads to non-robust networks can be altered to favor robust networks by modifying the ReLU activation. Specifically, we consider a data distribution consisting of a mixture of $K$ Gaussians, referred to as \emph{subclasses}, which have small inter-subclass correlation and are grouped into two \emph{superclasses/classes}.
When training a two-layer binary classification network, we show, with formal theorems, and with conjectures validated experimentally, that (also illustrated in Figure \ref{fig_training_vis}):
\begin{itemize}[leftmargin=9pt,topsep=0pt,parsep=0pt]
    \item If the activation is a ReLU, neurons (rows of the first layer weight matrix) tend to learn only the average direction of each class, leading to a classifier that generalizes well on clean data, but is vulnerable to an adversarial attack with $l_2$ radius $\mathcal{O}\lp \frac{1}{\sqrt{K}}\rp$, i.e. the trained network is non-robust with many subclasses. This leads to a new neural alignment perspective on the nonrobustness of ReLU networks identified by \citet{frei2023the}.
    \item If the activation is replaced by a novel polynomial ReLU activation, proposed based on recent advances in understanding the neuron alignment in shallow networks, neurons tend to learn the direction of each subclass center, leading to a classifier that generalizes well on clean data and can sustain any adversarial attack with $\mathcal{O}\lp 1\rp$ radius.
\end{itemize}

Our analysis \textbf{(1)} highlights the importance of the interplay between data structure and network architecture 
in determining the robustness of the trained network, \textbf{(2)} explains how the implicit bias (regularization) of training a ReLU network fails to exploit the data structure and leads to non-robust networks, and \textbf{(3)} shows how the issue is resolved by using a polynomial ReLU activation function. Moreover, numerical experiments on real datasets show that shallow networks with our generalized ReLU activation functions are much more robust than those with a ReLU activation.

\myparagraph{Relation to existing analysis on implicit bias of neural networks} Our discussion is theory-centric. It builds upon the analysis of the implicit bias of training algorithms, but it is significantly different from prior theoretical analyses. The implicit bias of training algorithms has been studied extensively over past years for various architectures, including both linear networks \cite{saxe2014exact,gunasekar2017implicit,ji2019gradient, woodworth2020kernel,mtvm21icml,stoger2021small,jacot2021saddle,wang2023implicit}, and nonlinear networks \cite{lyu2019gradient,ji2020directional,maennel2018gradient,chizat20a,boursier2022gradient,min2023early,wang2023understanding, frei2022implicit, frei2023the, kumar2024directional, abbe2023transformers, ataee2023max}. Moreover, this phenomenon has been studied from different perspectives, including max-margin \cite{lyu2019gradient, chizat20a,ataee2023max}, min-norm \cite{gunasekar2017implicit,mtvm21icml}, sparsity/low-rankness \cite{saxe2014exact,woodworth2020kernel,wang2023implicit,abbe2023transformers}, and alignment \cite{maennel2018gradient,min2023early,kumar2024directional}. While most works consider these implicit biases beneficial for the success of neural networks in practice, and some existing work ~\citep{faghri2021bridging} has even shown that such biases help robustness in the cases of linear regression; few works \cite{frei2023the,boursier2024early} discuss the potential harm caused by such biases. Our work takes one step further by proposing fixes to the harms identified by prior work~\citep{frei2023the}, which sheds light on the potential of using deep learning theory to not only understand but also improve neural networks in practice.

\hl{\myparagraph{Paper organization} Our main formal results regarding adversarial robustness will be shown (in Section \ref{sec:main_adv}) for explicitly constructed classifiers that can be realized by shallow networks with different activation functions. Then we connect these robustness results to those of actual trained shallow networks by a conjecture that shallow networks, when initialized properly, can learn such constructions via gradient flow training. The rationale behind this conjecture is carefully explained in Section \ref{sec:theoretical_analsysis}, together with a preliminary theoretical analysis that supports this conjecture. Lastly, we empirically verify our conjecture in Section \ref{sec:num} via experiments on synthetic data, and then we conduct numerical experiments on real datasets, showing that our proposed new activation improves the robustness of shallow networks.}

\textbf{Notation}: We denote the the inner product between vectors $\boldsymbol{x}$ and $\boldsymbol{y}$ by $\lan \boldsymbol{x},\boldsymbol{y}\ran=\boldsymbol{x}^\top \boldsymbol{y}$, and the cosine of the angle between them as $\cos(\boldsymbol{x},\boldsymbol{y})=\langle \frac{\boldsymbol{x}}{\|\boldsymbol{x}\|},\frac{\boldsymbol{y}}{\|\boldsymbol{y}\|}\rangle$. For an $n\by m$ matrix $\mathbf{A}$, we let $\|\boldsymbol{A}\|$ and $\|\boldsymbol{A}\|_F$ denote the spectral and Frobenius norm of $\boldsymbol{A}$, respectively. We also define $\one_A$ as the indicator for a statement $A$: $\one_A=1$ if $A$ is true and $\one_A=0$ otherwise. We also let $\mathcal{N}(\boldsymbol{\boldsymbol{\mu}},\boldsymbol{\Sigma}^2)$ denote the normal distribution with mean $\boldsymbol{\boldsymbol{\mu}}$ and covariance matrix $\boldsymbol{\Sigma}^2$, and $\text{Unif}(S)$ denote the uniform distribution over a set $S$. Lastly, we let $[N]$ denote the integer set $\{1,\cdots,N\}$.

\section{Problem Setting}\label{sec:settings}
We consider a binary classification problem where within each class, the data consists of subclasses with small inter-subclass correlations, formally defined as follows.

\myparagraph{Data distribution} We assume $(X,Y,Z)\sim \mathcal{D}$, where a sample $(\boldsymbol{x},y,z)\in\mathbb{R}^D\times \{-1,1\}\times \mathbb{N}_+$ drawn from $\mathcal{D}$ consists of the \emph{observed data} and \emph{class label} $(\boldsymbol{x},y)$, and a latent (unobserved) variable $z$ denoting the \emph{subclass membership}. We denote the marginal distribution of the observed part $(X,Y)$ as $\mathcal{D}_{X,Y}$. In particular, given $1\leq K_1<K$, we let
\begin{itemize}[leftmargin=9pt,topsep=0pt,parsep=0pt]
    \item $Z\sim \mathcal{D}_Z=\text{Unif}([K])$; $Y=\one_{\{Z\leq K_1\}}-\one_{\{Z>K_1\}}$;
    \item $(X\mid Z\!=\!k)\sim \mathcal{N}\lp \boldsymbol{\boldsymbol{\mu}}_k,\frac{\var^2}{D}\boldsymbol{I}\rp$, $\forall 1\leq k\leq K$, where $\var>0$ is the \emph{intra-subclass variance}, and $\{\boldsymbol{\mu}_1,\cdots,\boldsymbol{\mu}_K\}$ are \emph{subclass centers} that forms an orthonormal basis of a $K$-dimensional subspace in $\mathbb{R}^D$, where $K<D$.
\end{itemize}
This data distribution has $K$ subclasses with small inter-subclass correlation if $\alpha$ is small (since the subclass centers are orthonormal to each other), and the observed labels only reveal the superclass/class membership: the first $K_1$ subclasses belong to the positive class ($y=+1$) and the remaining $K_2:=K-K_1$ belong to the negative class ($y=-1$). One such a dataset is depicted in Figure \ref{fig_training_vis}. 

\myparagraph{Classification task and robustness of a classifier} The learning task we consider is to find a binary classifier $f(\boldsymbol{x}):\mathbb{R}^D\ra \mathbb{R}$ such that given a randomly drawn $(\boldsymbol{x},y)$, $\sign (f(\boldsymbol{x}))$ predicts the correct class label $y$, i.e., $f(\boldsymbol{x})y>0$, with high probability. Moreover, we are interested in the robustness of $f(\boldsymbol{x})$ against additive adversarial attacks/perturbations on $\boldsymbol{x}$ with $l_2$-norm bounded by some constant $r>0$ (called the \emph{attack radius}). Specifically, given a randomly drawn $(\boldsymbol{x},y)$ from $\mathcal{D}_{X,Y}$, we would like $\inf_{\|\boldsymbol{d}\|=1}f(\boldsymbol{x}+r \boldsymbol{d})y>0$ to hold with high probability (formal statement in Section \ref{sec:main_adv}) for $r$ as large as possible. 


\myparagraph{Gradient flow training} A candidate classifier can be realized from a neural network $f(\boldsymbol{x};\boldsymbol{\theta})$ parameterized by its network weights $\boldsymbol{\theta}$. Given a size-$n$ training dataset $\{(\boldsymbol{x}_i,y_i),i=1,\cdots,n\}$, where each training sample is randomly drawn from $\mathcal{D}_{X,Y}$, we define a loss function
\be
    \mathcal{L}(\boldsymbol{\theta}; \{\boldsymbol{x}_i,y_i\}_{i=1}^n)=\sum\nolimits_{i=1}^n\ell(y_i,f(\boldsymbol{x}_i;\boldsymbol{\theta}))\,,
\ee
where $\ell(y,\hat{y}):\mathbb{R}\times \mathbb{R}\ra \mathbb{R}$ is either the exponential loss $\ell(y,\hat{y})=\exp\lp -y\hat{y}\rp$ or the logistic loss $\ell(y,\hat{y})=\log\lp 1+\exp\lp -y\hat{y}\rp\rp$. We train the network using \emph{gradient flow} (GF), $\dot{\boldsymbol{\theta}}(t)\in \partial \mathcal{L}(\boldsymbol{\theta}(t))$, where $\partial \mathcal{L}$ denotes the \emph{Clarke subdifferential}~\citep{clarke1990optimization} w.r.t. $\boldsymbol{\theta}$. With proper initialization $\boldsymbol{\theta}(0)$, one expects the trained network weights $\boldsymbol{\theta}(T)$ (for some large $T>0$) to be close to some minimizer of $\mathcal{L}(\boldsymbol{\theta})$ and $f(\boldsymbol{x};\boldsymbol{\theta}(T))$ to be a good classifier for $\mathcal{D}_{X,Y}$. 

\myparagraph{Shallow networks with pReLU activation} We specifically study the following shallow (two-layer) network, parametrized by $\boldsymbol{\theta}:=\{(\boldsymbol{w}_j,v_j)\in\mathbb{R}^{D}\times \mathbb{R},j=1,\cdots,h\}$, as a candidate classifier:
\be
    f_p(\boldsymbol{x};\{\boldsymbol{w}_j,v_j\}_{j=1}^h)=\sum\nolimits_{j=1}^hv_j\frac{[\act(\lan \boldsymbol{x},\boldsymbol{w}_j\ran)]^p}{\|\boldsymbol{w}_j\|^{p-1}}\,,\tag{\emph{Shallow pReLU Network}}
\ee
where $\act(\cdot)=\mathsf{ReLU}(\cdot)=\max\{\cdot,0\}$ and $p\geq 1$. When $p=1$, this is exactly a shallow ReLU network. When $p\geq 1$, $f_p$ can be loosely viewed as a shallow network with a polynomial ReLU activation and a special form of weight normalization \cite{salimans2016weight} on each $\boldsymbol{w}_j$. One of the most important reasons for this particular generalization of the ReLU activation is the ``extra penalty" on angle separation between the input $\boldsymbol{x}$ and \emph{neurons} $\boldsymbol{w}_j$s. To see this, assume $\|\boldsymbol{x}\|=1$, then 
\begin{align*}
    \frac{[\act(\lan \boldsymbol{x},\boldsymbol{w}_j\ran)]^p}{\|\boldsymbol{w}_j\|^{p-1}}&=\;\act(\lan \boldsymbol{x},\boldsymbol{w}_j\ran)\frac{\cos^{p-1}\lp \boldsymbol{x},\boldsymbol{w}_j\rp\|\boldsymbol{w}_j\|^{p-1}}{\|\boldsymbol{w}_j\|^{p-1}}=\act(\lan \boldsymbol{x},\boldsymbol{w}_j\ran)\cos^{p-1}\lp x,\boldsymbol{w}_j\rp\,,
\end{align*}
that is, for each neuron $\boldsymbol{w}_j$, when compared to ReLU activation ($p=1$), the post-activation value is much smaller (penalized) if the angle separation between $\boldsymbol{x}$ and $\boldsymbol{w}_j$ is large. When $p>1$, such penalties, as we will see later, promote the alignment between training data samples and neurons, and result in trained networks that capture well the intrinsic structure of the data. 
In addition, such generalized ReLU activation makes the function $f_p$ \emph{positively homogeneous} of degree two w.r.t. its parameters $\boldsymbol{\theta}$, i.e., $f_p(\boldsymbol{x};\gamma\boldsymbol{\theta})=\gamma^2f_p(\boldsymbol{x};\boldsymbol{\theta})$ for any $\boldsymbol{\theta}$ and $\gamma>0$. Many existing analyses~\citep{Du&Lee, lyu2021gradient,kumar2024directional} on positively homogeneous networks apply to the generalized pReLU networks. Lastly, it is worth noting that the concept of promoting alignment between data points and neurons via cosine penalty has been explored for improving interpretability~\citep{bohle2022b} of the trained network.

\section{Main Results on Adversarial Robustness}\label{sec:main_adv}
This section considers two distinct classifiers for $\mathcal{D}_{X,Y}$:
\be
    F^{(p)}(\boldsymbol{x})=\sum_{k=1}^{K_1}\act^p(\lan \boldsymbol{\mu}_k,\boldsymbol{x}\ran)-\sum_{k=K_1+1}^{K}\act^p(\lan \boldsymbol{\mu}_k,\boldsymbol{x}\ran)\,
    \label{eq_fp}
\ee
and
\be
    F(\boldsymbol{x})=\sqrt{K_1}\act(\lan \bar{\boldsymbol{\mu}}_+,\boldsymbol{x}\ran) - \sqrt{K_2}\act(\lan \bar{\boldsymbol{\mu}}_-,\boldsymbol{x}\ran)\,,\label{eq_f0}
\ee
where $\bar{\boldsymbol{\mu}}_+\!=\!\frac{1}{\sqrt{K_1}}\sum_{k=1}^{K_1}\boldsymbol{\mu}_k$ and $\bar{\boldsymbol{\mu}}_-\!=\!\frac{1}{\sqrt{K_2}}\sum_{k=K_1+1}^{K}\boldsymbol{\mu}_k$ are, respectively, the average direction of the positive and negative subclass centers. Here the average direction is computed by taking the sum of all $\boldsymbol{\mu}_k$s, then normalize it, thus we have $\bar{\boldsymbol{\mu}}_+ \!\in\!\mathbb{S}^{D-1}$ and $\bar{\boldsymbol{\mu}}_-\!\in\!\mathbb{S}^{D-1}$.

Based on existing analyses for the implicit bias of gradient-based training, we conjecture that both classifiers can be learned by gradient flow on shallow networks with pReLU activations and proper initialization. Before studying this conjecture, we analyze these two classifiers in more detail.

\myparagraph{Realizability of $F(\boldsymbol{x})$ and $F^{(p)}(\boldsymbol{x})$ by pReLU networks} We first notice that both $F(\boldsymbol{x})$ and $F^{(p)}(\boldsymbol{x})$ can be realized (non-uniquely) by a pReLU network $f_p(\boldsymbol{x};\{\boldsymbol{w}_j,v_j\}_{j=1}^h)$ for suitable choices of the weights $\{\boldsymbol{w}_j,v_j\}_{j=1}^h$, as stated in the following claim. (Verifying this claim is straightforward and we leave it to Appendix \ref{app_pf_thm}).

\medskip
\begin{claim}
    The following two statements are true:
    \begin{itemize}[leftmargin=9pt,topsep=0pt,parsep=0pt]
       \item 
        When $p=1$, let $I_+$ and $I_-$ be any two non-empty and disjoint subsets of $[h]$, let $\boldsymbol{\lambda}:=(\lambda_j) \in \mathbb{R}^h_{\geq 0}$ be any vector such that $\sum_{j\in I_+}\lambda_j^2=\sqrt{K_1}$ and $\sum_{j\in I_-}\lambda_j^2=\sqrt{K_2}$, and let
        \begin{align*}
        & \boldsymbol{w}_j=\lambda_j\bar{\boldsymbol{\mu}}_+,\ v_j=\lambda_j, &\forall j\in I_+\,,\\
        & \boldsymbol{w}_j=\lambda_j\bar{\boldsymbol{\mu}}_-,\ v_j=-\lambda_j, &\forall j\in I_-\,,\\
        & \boldsymbol{w}_j = 0,\  v_j = 0, &\text{otherwise}. 
        \end{align*}
        Then $f_p(x;\{\boldsymbol{w}_j,v_j\}_{j=1}^h)\equiv F(\boldsymbol{x})\,.$
        \item When $p\geq 1$, let $I_1,\cdots, I_K$ be any $K$ non-empty and disjoint subsets of $[h]$, let $\boldsymbol{\lambda}:=(\lambda_j) \in \mathbb{R}^h_{\geq 0}$ be any vector such that $\sum_{j\in I_k}\lambda_j^2=1,\forall k\in[K]$, and let 
        \begin{align*}
        & \boldsymbol{w}_j=\lambda_j\boldsymbol{\mu}_k,\ v_j=\lambda_j, &\forall j\in I_k, k\leq K_1\,,\\
        & \boldsymbol{w}_j=\lambda_j\boldsymbol{\mu}_k,\ v_j=-\lambda_j, &\forall j\in I_k, k>K_1\,,\\
        & \boldsymbol{w}_j = 0,\ v_j = 0 & \text{otherwise}. 
        \end{align*}
        Then $f_p(x;\{\boldsymbol{w}_j,v_j\}_{j=1}^h)\equiv F^{(p)}(\boldsymbol{x})\,.$
    \end{itemize}
\end{claim}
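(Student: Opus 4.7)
The plan is to directly substitute the prescribed weight assignments into the definition of $f_p$ and simplify term by term, exploiting two facts: (i) the ReLU activation $\sigma$ is $1$-positive-homogeneous, i.e.\ $\sigma(\lambda t) = \lambda \sigma(t)$ for $\lambda \geq 0$; and (ii) the subclass centers $\boldsymbol{\mu}_k$ are unit vectors (as they form an orthonormal set), and so are the aggregated centers $\bar{\boldsymbol{\mu}}_+$ and $\bar{\boldsymbol{\mu}}_-$ (by the explicit $1/\sqrt{K_1}$ and $1/\sqrt{K_2}$ normalization in their definitions). Neurons with $\boldsymbol{w}_j = v_j = 0$ contribute nothing to $f_p$ (we adopt the standard convention $v_j\,[\sigma(0)]^p/\|\boldsymbol{w}_j\|^{p-1}=0$ when $v_j=0$), so we only need to track the prescribed active neurons.

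For the first statement ($p=1$), the denominator $\|\boldsymbol{w}_j\|^{p-1}$ equals $1$, so $f_1$ is a standard shallow ReLU network. For $j \in I_+$, using $\lambda_j \geq 0$, the contribution is $v_j\,\sigma(\langle \boldsymbol{x}, \boldsymbol{w}_j\rangle) = \lambda_j \sigma(\lambda_j \langle \boldsymbol{x}, \bar{\boldsymbol{\mu}}_+\rangle) = \lambda_j^2\,\sigma(\langle \boldsymbol{x}, \bar{\boldsymbol{\mu}}_+\rangle)$. Summing over $j \in I_+$ and invoking $\sum_{j\in I_+}\lambda_j^2 = \sqrt{K_1}$ yields $\sqrt{K_1}\,\sigma(\langle \boldsymbol{x}, \bar{\boldsymbol{\mu}}_+\rangle)$. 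The $I_-$ block is identical up to the sign $v_j = -\lambda_j$, giving $-\sqrt{K_2}\,\sigma(\langle \boldsymbol{x}, \bar{\boldsymbol{\mu}}_-\rangle)$. Together these reproduce $F(\boldsymbol{x})$ in \eqref{eq_f0}.

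For the second statement ($p \geq 1$), we have $\|\boldsymbol{w}_j\| = \lambda_j\|\boldsymbol{\mu}_k\| = \lambda_j$ for $j \in I_k$. By $p$-fold application of positive homogeneity,
\begin{align*}
v_j\,\frac{[\sigma(\langle \boldsymbol{x},\boldsymbol{w}_j\rangle)]^p}{\|\boldsymbol{w}_j\|^{p-1}}
= (\pm \lambda_j)\,\frac{\lambda_j^p\,[\sigma(\langle \boldsymbol{x},\boldsymbol{\mu}_k\rangle)]^p}{\lambda_j^{p-1}}
= \pm\,\lambda_j^2\,[\sigma(\langle \boldsymbol{x},\boldsymbol{\mu}_k\rangle)]^p,
\end{align*}
with the sign $+$ when $k \leq K_1$ and $-$ otherwise. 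Summing over $j \in I_k$ and using $\sum_{j \in I_k}\lambda_j^2 = 1$ produces $\pm\,[\sigma(\langle \boldsymbol{x},\boldsymbol{\mu}_k\rangle)]^p$, and then summing over $k = 1,\ldots,K$ recovers exactly $F^{(p)}(\boldsymbol{x})$ in \eqref{eq_fp}.

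There is no real obstacle here; the only mild subtlety is the $0/0$ appearance in the normalization when $p>1$ for the inactive neurons, which is harmless under the standing convention that a neuron with $v_j=0$ contributes $0$ to $f_p$ (alternatively, one can choose $I_+, I_-$ or $I_1,\ldots,I_K$ to cover all of $[h]$ so the issue does not arise). Everything else reduces to the two routine identities above: homogeneity of $\sigma$ and the unit-norm property of $\boldsymbol{\mu}_k$, $\bar{\boldsymbol{\mu}}_\pm$.
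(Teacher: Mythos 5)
Your proof is correct and follows essentially the same route as the paper's: direct substitution of the prescribed weights, positive homogeneity of the ReLU to pull out the $\lambda_j$ factors, and the normalization constraints $\sum_{j\in I_\pm}\lambda_j^2=\sqrt{K_{1,2}}$ (resp.\ $\sum_{j\in I_k}\lambda_j^2=1$) to collapse each block. Your explicit remark on the convention for zero neurons is a harmless addition that the paper leaves implicit.
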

\begin{remark}
    As shown in the claim, there are infinitely many parameterizations of $f_p$ (determined by the choice of subsets of $[h]$ and $\boldsymbol{\lambda}$) that lead to the same function $F(\boldsymbol{x})$ (or $F^{(p)}(\boldsymbol{x})$), due to the \emph{symmetry} in the network weights. That is, the resulting network represents the same function if one re-indexes (permutes) the neurons, or if one does the following rescaling on some weights $(\boldsymbol{w}_j,v_j)\rightarrow(\gamma \boldsymbol{w}_j,v_j/\gamma)$ for some $j\in[j]$ and some $\gamma>0$.
\end{remark}
Notice that the classifier $F$ corresponds to a vanilla shallow ReLU network whose non-zero neurons are either aligned with $\bar{\boldsymbol{\mu}}_+$, the average direction of the positive subclass centers, or $\bar{\boldsymbol{\mu}}_-$, the negative counterpart. On the other hand, $F^{(p)}$ corresponds to a pReLU network whose non-zero neurons are aligned with one of the subclass centers. As we will see soon, both $F(\boldsymbol{x})$ and $F^{(p)}(\boldsymbol{x})$ achieve high prediction accuracy on clean samples $(\boldsymbol{x},y)$ from $D_{X,Y}$, i.e., learning subclass centers is not necessary for good generalization. However, we will also show that $F^{(p)}$ is much more robust against adversarial perturbation than $F$. This is why we argue that \emph{learning subclass centers significantly improve robustness}.

\myparagraph{Generalization and Robustness of $F(\boldsymbol{x})$ and $F^{(p)}(\boldsymbol{x})$} Having established that both $F$ and $F^{(p)}$ are realizable by a pReLU network $f_p$, we now study the generalization performance and robustness of $F$ and $F^{(p)}$. Under the conjecture that both $F$ and $F^{(p)}$ can be learned by gradient flow training on a pReLU network $f_p$, we expect that such generalization and robustness properties extend to $f_p$.

The following result states that both $F$ and $F^{(p)}$ achieve good generalization performance on $\mathcal{D}_{X,Y}$ when the dimension $D$ of the input data $X$ is sufficiently large and the number of subclasses $K$ is not too large.
\begin{proposition}[Generalization on clean data]\label{prop_gen}
    Given a test sample $(\boldsymbol{x},y)\sim \mathcal{D}_{X,Y}$, and classifiers $F$ and $F^{(p)}$, $p\geq 1$, defined in \eqref{eq_f0} and \eqref{eq_fp}, resp., there is a constant $C$ such that
    \begin{align*}
        \prob\lp F(\boldsymbol{x})y>0\rp &\geq 1-4\exp\lp -\frac{CD}{4\var^2K}\rp,\\
        \prob\lp F^{(p)}(\boldsymbol{x})y>0\rp &\geq 1-2(K+1)\exp\lp -\frac{CD}{\var^2K^2}\rp.
    \end{align*}
\end{proposition}

However, the following theorem shows a significant difference between $F$ and $F^{(p)}$ regarding their adversarial robustness, when the number of subclasses $K$ is large.
\begin{theorem}[$l_2$-adversarial robustness]\label{thm_robust}
  Given a test sample $(\boldsymbol{x},y)\sim \mathcal{D}_{X,Y}$, and classifiers $F$ and $F^{(p)}$, defined in \eqref{eq_f0} and \eqref{eq_fp}, resp.,
  the following statement is true for the same constant $C$ in Proposition \ref{prop_gen}:
    \begin{itemize}[leftmargin=9pt,topsep=0pt,parsep=0pt]
        \item $\lp \text{Non-robustness of } F \text{ against }\mathcal{O}\lp\frac{1}{\sqrt{K}}\rp \text{-attack} \rp$ Let $d_0:=\frac{\sqrt{K_1}\bar{\boldsymbol{\mu}}_+-\sqrt{K_2}\bar{\boldsymbol{\mu}}_-}{\sqrt{K}}\in\mathbb{S}^{D-1}$. Then for any $\rho\geq 0$,
        \begin{align*}
            &\;\prob\!\lp F\!\lp \boldsymbol{x}-\frac{y(1+\rho)}{\sqrt{K}}\boldsymbol{d}_0\rp y\!>\!0\rp \!\leq\! 2\exp\!\lp -\frac{CD\rho^2}{K\var^2}\rp\!.
        \end{align*}
        \item $\lp \text{Robustness of } F^{(p)} \text{ against }\mathcal{O}\lp 1\rp \text{-attack }\rp$ Let $p\geq 2$. Then for any $0\leq \delta\leq \sqrt{2}$,
        \begin{align*}
            &\;\prob\lp \min_{\|\boldsymbol{d}\|\leq 1} F^{(p)}\lp \boldsymbol{x}+\frac{\sqrt{2}-\delta}{2}\boldsymbol{d}\rp y>0\rp \geq 1-2(K+1)\exp\lp -\frac{CD\delta^2}{2K^2\var^2}\rp\,.
        \end{align*}
    \end{itemize}
\end{theorem}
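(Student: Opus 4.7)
The plan is to prove both parts of Theorem~\ref{thm_robust} by reducing $F$ and $F^{(p)}$ to low-dimensional functions of Gaussian projections onto the subclass centers, then applying sub-Gaussian tail bounds together with a constrained-optimization argument for the adversary.

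For the non-robustness of $F$: using orthonormality of the $\boldsymbol{\mu}_k$, one computes $\langle \bar{\boldsymbol{\mu}}_+, \boldsymbol{d}_0\rangle = \sqrt{K_1/K}$ and $\langle \bar{\boldsymbol{\mu}}_-, \boldsymbol{d}_0\rangle = -\sqrt{K_2/K}$. Condition on $y=+1$ drawn from subclass $k^* \le K_1$ and write $\boldsymbol{x} = \boldsymbol{\mu}_{k^*} + \boldsymbol{\eta}$ with $\boldsymbol{\eta} \sim \mathcal{N}(0, (\alpha^2/D)\boldsymbol{I})$; since $\bar{\boldsymbol{\mu}}_+ \perp \bar{\boldsymbol{\mu}}_-$, the scalar projections $\epsilon_\pm := \langle \bar{\boldsymbol{\mu}}_\pm, \boldsymbol{\eta}\rangle$ are independent $\mathcal{N}(0, \alpha^2/D)$. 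Substituting $\boldsymbol{x}' := \boldsymbol{x} - (1+\rho)\boldsymbol{d}_0/\sqrt{K}$ and working on the high-probability event that both ReLU arguments remain positive, the deterministic parts cancel and yield
\begin{equation*}
F(\boldsymbol{x}') = -\rho + \sqrt{K_1}\,\epsilon_+ - \sqrt{K_2}\,\epsilon_-,
\end{equation*}
reducing the task to a one-sided Gaussian tail $\Pr(\sqrt{K_1}\epsilon_+ - \sqrt{K_2}\epsilon_- > \rho)$; the correction has variance $K\alpha^2/D$, producing the stated bound (with the $y=-1$ case symmetric and the event that some ReLU argument is non-positive absorbed into the factor of $2$).

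For the robustness of $F^{(p)}$ with $p \ge 2$, observe that $F^{(p)}(\boldsymbol{x}+\boldsymbol{d})$ depends on $\boldsymbol{d}$ only through $d_k := \langle \boldsymbol{\mu}_k, \boldsymbol{d}\rangle$ with $\sum_k d_k^2 \le \|\boldsymbol{d}\|^2$, so $\min_{\|\boldsymbol{d}\|\le r} F^{(p)}$ reduces to a finite-dimensional constrained problem. Conditioning on $y=+1$ from subclass $k^*$ and setting $\epsilon_k := \langle \boldsymbol{\mu}_k, \boldsymbol{x}\rangle - \mathbf{1}_{\{k=k^*\}}$, inspection shows the adversary's optimum sets $d_{k^*} = -a \in [-1, 0]$, sets $d_k \ge 0$ for $k > K_1$, and wastes no budget on the remaining coordinates; for $p \ge 2$, monotonicity of $\ell^p$ norms gives $\sum_{k > K_1} b_k^p \le (\sum_{k > K_1} b_k^2)^{p/2}$ on the nonnegative orthant, so concentrating the opposite-class attack on a single coordinate is optimal. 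In the noise-free case this reduces the robustness claim to the inequality $(1-a)^p \ge (r^2 - a^2)^{p/2}$, equivalent to $r^2 \le 1/2 + 2(a-1/2)^2$, whose worst case at $a=1/2$ yields the critical threshold $r \le 1/\sqrt{2}$, precisely matching $r=(\sqrt{2}-\delta)/2$ at $\delta=0$.

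To conclude, define the concentration event $\mathcal{A} = \{|\epsilon_k| \le \tau : k \in [K]\}$ with $\tau$ of order $\delta/K$; a union bound of $K$ scalar Gaussian tails gives $\Pr(\mathcal{A}^c)$ of the stated form (with the extra ``$+1$'' absorbing, e.g., control of the noise component orthogonal to $\operatorname{span}\{\boldsymbol{\mu}_k\}$). On $\mathcal{A}$, the lower bound $\sigma^p(1 + \epsilon_{k^*} + d_{k^*}) \ge (1 - \tau - a)_+^p$ and the upper bound $\sigma^p(\epsilon_k + d_k) \le (\tau + d_k^+)^p$, combined with Cauchy--Schwarz applied to the sum over $K_2$ opposite-class terms, produce a perturbed version of the clean-case inequality whose slack $1/2 - r^2 = \sqrt{2}\delta/2 - \delta^2/4 = \Theta(\delta)$ provides a margin that absorbs the noise corrections. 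The main obstacle is this last step: carefully propagating the per-coordinate noise bound through the opposite-class sum and through the exponent $p$ so that $\tau = O(\delta/K)$ suffices uniformly over $a \in [0,1]$ and $p \ge 2$---this is what produces the $K^2$ in the denominator of the stated exponent.
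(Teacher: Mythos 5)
Your strategy is essentially the paper's: condition on the latent subclass, reduce $F$ and $F^{(p)}$ to the Gaussian projections onto $\bar{\boldsymbol{\mu}}_\pm$ (resp.\ the $\boldsymbol{\mu}_k$), and finish with Hoeffding tails and a union bound; your clean-case computation $a+\sqrt{1-a^2}\le\sqrt{2}$ together with $\|\cdot\|_p\le\|\cdot\|_2$ for $p\ge 2$ is exactly how the paper obtains its margin $M^*(\delta)=\delta/\sqrt{2}$. One step, however, is not right as written. In the $F$ part you work on the event that \emph{both} ReLU arguments are positive and claim the complementary event is ``absorbed into the factor of $2$.'' The harmless case is the positive-class ReLU being inactive (then $F\le 0$ and the target event fails); the problematic case is the \emph{negative-class} ReLU being inactive, which only makes $F$ larger. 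Its probability is of order $\exp\bigl(-CD(1+\rho)^2K_2/(K^2\alpha^2)\bigr)$, which is \emph{not} dominated by the claimed bound $2\exp\bigl(-CD\rho^2/(K\alpha^2)\bigr)$ when $\rho$ is large and $K_2\ll K$, so a naive union bound over that event does not close the argument. The fix is one line and is what the paper does: use $-\sqrt{K_2}\,\sigma(z)\le -\sqrt{K_2}\,z$ regardless of the sign of $z$ (equivalently, note that ``second argument $\le 0$ and $F>0$'' still implies $\sqrt{K}\langle\boldsymbol{\varepsilon},\boldsymbol{d}_0\rangle>\rho$ by adding the two inequalities), so only the positive-class ReLU needs a case split.

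For the $F^{(p)}$ part, the step you defer as the ``main obstacle'' does go through with precisely the tools you name, and in the same way the paper argues. On the event $\max_k|\langle\boldsymbol{\mu}_k,\boldsymbol{\varepsilon}\rangle|\le\tau$, with $a:=\max(0,-d_{k^*})$ and $r=(\sqrt{2}-\delta)/2$, the bounds $\sigma^p(1+\epsilon_{k^*}+rd_{k^*})\ge(1-\tau-ra)^p$ and $\sum_{k>K_1}\sigma^p(\epsilon_k+rd_k)\le\bigl(\sqrt{K_2}\,\tau+r\sqrt{1-a^2}\bigr)^p$ (the latter by $\|\cdot\|_p\le\|\cdot\|_2$ followed by the $\ell_2$ triangle inequality) reduce robustness to $1-(1+\sqrt{K_2})\tau\ge r(a+\sqrt{1-a^2})$ uniformly in $a$, i.e.\ to $\tau\le\delta/\bigl(\sqrt{2}(1+\sqrt{K_2})\bigr)$. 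So in fact $\tau=O(\delta/\sqrt{K})$ suffices, giving an exponent of order $\delta^2 D/(K\alpha^2)$ after the union bound, slightly stronger than the stated $K^2$; weakening $\tau$ to $\asymp\delta/K$ recovers the theorem verbatim and matches the paper's route of splitting the margin $M^*(\delta)$ over $K_2+1$ coordinates via $\|\cdot\|_p\le\|\cdot\|_1$.
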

We refer the reader to Appendix \ref{app_pf_thm} for the proofs for Proposition \ref{prop_gen} and Theorem \ref{thm_robust}. Our theoretical results should be understood in the high-dimensional or low-intra-class-variance regime so that $\frac{D}{\alpha^2}\gg K^2\log K$. On the one hand, Theorem \ref{thm_robust} shows that $F$, a classifier that corresponds to shallow ReLU networks whose neurons are aligned with either $\bar{\boldsymbol{\mu}}_+$ or $\bar{\boldsymbol{\mu}}_-$, is vulnerable to an adversarial attack of radius $\mathcal{O}\lp \frac{1}{\sqrt{K}}\rp$, which diminishes as the number of subclasses grows. On the other hand, Theorem \ref{thm_robust} shows that $F^{(p)}$, $p\geq 2$, a classifier that corresponds to shallow pReLU networks whose neurons are aligned with one of the subclass centers, is $\mathcal{O}\lp 1\rp$-robust against adversarial attacks. 

\smallskip
\begin{remark}
    Complementary results to Theorem \ref{thm_robust} in Appendix \ref{app_robust_comp} show that $F$ is robust against any attack with $l_2$-norm slightly smaller than $\frac{1}{\sqrt{K}}$, and $F^{(p)}$ is not robust to some attack with $l_2$-norm slightly larger than $\frac{\sqrt{2}}{2}$. Therefore, $\frac{1}{\sqrt{K}}$ and $\frac{\sqrt{2}}{2}$ are the ``critical" attack radius for $F$ and $F^{(p)}$, resp. We conjecture this is also the case for the associated trained networks, which we verify in Section \ref{ssec:val}.
\end{remark}

\myparagraph{Conjecture on the outcome of gradient flow training} 
We now study the conjecture that both classifiers can be learned by gradient flow on a shallow network with pReLU activations and proper initialization. Specifically, we pose the following:

\begin{conjecture}\label{conj_conv}
    Suppose that the intra-subclass variance $\alpha>0$ is sufficiently small, that one has a training dataset of sufficiently large size, and that we run gradient flow training on $f_p(\boldsymbol{x};\boldsymbol{\theta}), \boldsymbol{\theta}=\{\boldsymbol{w}_j,v_j\}_{j=1}^h$ of sufficiently large width $h$ for sufficiently long time $T$, starting from random initialization of the weights with a sufficiently small initialization scale. If $p=1$, then we have $\inf_{c>0}\sup_{\boldsymbol{x}\in\mathbb{S}^{D-1}}|cf_p(\boldsymbol{x};\boldsymbol{\theta}(T))-F(\boldsymbol{x})|\ll 1$; If $p\in[3,\bar{p})$ for some $\bar{p}>3$, then we we have $\inf_{c>0}\sup_{\boldsymbol{x}\in\mathbb{S}^{D-1}}|cf_p(\boldsymbol{x};\boldsymbol{\theta}(T))-F^{(p)}(\boldsymbol{x})|\ll 1$.
\end{conjecture}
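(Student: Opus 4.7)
My plan is to split the argument into two classical phases for small-initialization gradient flow on a positively homogeneous network: an early \emph{directional alignment} phase, in which the norms $\|\boldsymbol{w}_j\|$ are too small for the loss to drop appreciably and only the directions $\hat{\boldsymbol{w}}_j:=\boldsymbol{w}_j/\|\boldsymbol{w}_j\|$ evolve meaningfully, followed by a \emph{margin-maximization} phase where the Lyu--Li framework for 2-positive-homogeneous models takes over. The early-phase dynamics would be derived by replacing the empirical gradient by its population counterpart (valid for small $\alpha$ and large $n$, with a concentration argument) and restricting to the tangent bundle of $\mathbb{S}^{D-1}$ for each neuron. This yields an autonomous flow on the sphere driven by an \emph{effective data signal} whose form depends on $p$, and the whole problem reduces to classifying the stable equilibria of that flow.

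The heart of the argument is this classification. For $p=1$, a neuron with sign $v_j>0$ receives a signal proportional to $\sum_{k\leq K_1}\boldsymbol{\mu}_k\,\mathbf{1}\{\langle \hat{\boldsymbol{w}}_j,\boldsymbol{\mu}_k\rangle>0\}$; one would show that with high probability over a small random initialization, every positive subclass indicator stays active throughout the alignment phase, so the flow converges to the class-average direction $\bar{\boldsymbol{\mu}}_+$ (and symmetrically to $\bar{\boldsymbol{\mu}}_-$ for $v_j<0$). For $p\geq 3$, the extra factor $\cos^{p-1}(\boldsymbol{x},\boldsymbol{w}_j)$ amplifies contributions from the nearest subclass center and suppresses the others; since the $\boldsymbol{\mu}_k$ are orthonormal, a linearization around $\bar{\boldsymbol{\mu}}_+$ shows that this average direction becomes a saddle once $p$ exceeds a critical threshold, and the only stable fixed points inside each class-hemisphere are the individual $\boldsymbol{\mu}_k$. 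A Voronoi-style basin-of-attraction argument together with a covering estimate on the sphere then shows that, with sufficiently large width $h$, the random initialization places at least one neuron in each of the $K$ basins, so every subclass center is ``covered'' by some aligned neuron when Phase~1 ends.

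Once directional alignment is in place, Phase~2 invokes the results of Lyu \& Li on homogeneous networks under exponential or logistic loss to conclude that gradient flow converges in direction to a KKT point of the $L^2$-margin-maximization problem. For either target, the claim stated above already exhibits an explicit parameterization of $F$ (respectively $F^{(p)}$) by the network $f_p$, and one can verify directly that these parameterizations are feasible KKT points, with margin determined by the orthonormality of the $\boldsymbol{\mu}_k$. A local uniqueness/stability argument around these KKT points, combined with the fact that the output of the alignment phase already lies in their basin, then yields $\inf_{c>0}\sup_{\boldsymbol{x}\in\mathbb{S}^{D-1}}|cf_p(\boldsymbol{x};\boldsymbol{\theta}(T))-F(\boldsymbol{x})|\ll 1$ (respectively $F^{(p)}$).

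The hardest step is almost certainly the handoff between the two phases and the identification of the \emph{correct} KKT point. Margin maximization for non-linear homogeneous networks is known to admit many spurious KKT points, and the true global max-margin classifier need not be $F$ or $F^{(p)}$; one must argue instead that the Phase~1 output lies in a small neighborhood of the intended KKT point and that gradient flow remains in that neighborhood throughout Phase~2. The existence of a \emph{finite} upper threshold $\bar{p}$ reflects exactly this subtlety: for very large $p$ the effective signal around each $\boldsymbol{\mu}_k$ becomes too sharply peaked, so random initialization can fail to populate all $K$ basins before norms start to grow, and the clean two-phase picture breaks down. Making this precise --- in particular, quantifying how small $\alpha$ and the initialization scale must be as functions of $p$, $K$, and $h$, and controlling the concentration of the empirical gradient throughout both phases --- is where the bulk of the technical work will concentrate.
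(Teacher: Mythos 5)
The statement you are trying to prove is stated in the paper as a \emph{conjecture}: the paper deliberately offers no proof of it, only a preliminary rationale (Lemma \ref{lem_small_norm_informal}, which approximates the alignment-phase dynamics by the effective signal $\boldsymbol{x}^{(p)}(\boldsymbol{w}_j)$, and Theorem \ref{thm_align}, a \emph{local} alignment-bias statement proved under a simplified training set consisting of the exact orthonormal subclass centers) together with numerical validation, and it explicitly defers a full convergence analysis to future work. Your two-phase roadmap --- a directional alignment phase on the sphere followed by a Lyu--Li-style margin/fitting phase for the 2-homogeneous model --- is essentially the same strategy the paper's Section \ref{sec:theoretical_analsysis} envisions, so as a plan it is compatible with the paper; but as written it is an outline, not a proof, and the load-bearing steps it leaves open are precisely the ones the paper identifies as the obstruction.

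Concretely: (i) the classification of stable equilibria of the spherical flow is not established --- the paper only proves the sign of $\frac{d}{dt}\cos(\boldsymbol{w}_j,\bar{\boldsymbol{\mu}}_+)$ and $\frac{d}{dt}\cos(\boldsymbol{w}_j,\boldsymbol{\mu}_k)$ on the level set $\cos=1-\delta$, for the simplified dataset; your stronger claim that for $p=1$ every positive neuron keeps all positive indicators active and converges to $\bar{\boldsymbol{\mu}}_+$ is false in general (neurons initialized with $\langle \boldsymbol{w}_j,\boldsymbol{\mu}_k\rangle<0$ for some $k$ can converge to averages over proper subsets of the positive subclasses, or stay inactive), and nothing in the proposal excludes such limiting directions, which would change the limiting function away from $F$; (ii) the handoff from Phase~1 to Phase~2 and the identification of the \emph{particular} KKT point realizing $F$ or $F^{(p)}$, rather than one of the many spurious KKT points of the nonconvex margin problem, is asserted via an unproved ``local uniqueness/stability'' step --- you flag this yourself, and it is exactly where the paper says the difficulty lies; (iii) the replacement of the empirical gradient by a population signal, uniformly over both phases and with quantitative dependence on $\alpha$, $n$, $h$ and the initialization scale, is also missing (the paper's lemmas are proved with training points equal to the $\boldsymbol{\mu}_k$, not for the Gaussian-mixture samples). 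Finally, note that the paper attributes the upper threshold $\bar p$ to vanishing post-activation values and gradients as $p$ grows (pReLU tending to an indicator), not to a failure of basin coverage at initialization; your mechanism is different and would need to be reconciled with, or replace, that one. In short, the proposal is a sensible sketch of the same program the paper proposes, but it does not close the conjecture.
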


Our conjecture states that with proper initialization and sufficiently long training time, gradient flow finds a network that is, up to a scaling factor $c>0$\footnote{Since $f_p(x;\boldsymbol{\theta})$ is positively homogeneous of degree two w.r.t. its parameter $\boldsymbol{\theta}$, $\|\boldsymbol{\theta}(t)\|\ra \infty$ as training time $t\ra \infty$~\citep{lyu2019gradient,ji2020directional}, the output of $f_p(\boldsymbol{x};\boldsymbol{\theta}(T))$ can never match that of $F(\boldsymbol{x})$ or $F^{(p)}(\boldsymbol{x})$ without a proper choice of scaling factor $c>0$. However, we note, that such a scaling factor will not change the prediction $\sign(f_p(\boldsymbol{x};\boldsymbol{\theta}(T)))$.}, close to $F$ in $l_\infty$-distance over $\mathbb{S}^{D-1}$, if $p=1$. That is, when training a vanilla ReLU network, the neurons learn average directions $\bar{\boldsymbol{\mu}}_+$, and $\bar{\boldsymbol{\mu}}_-$ of subclass centers. However, when $p\geq 3$, gradient flow finds a network close to $F^{(p)}$, i.e., the neurons learn individual subclass centers. Note that $p$ can not be too large, as the post activation $\frac{[\act(\lan \boldsymbol{x},\boldsymbol{w}_j\ran)]^p}{\|\boldsymbol{w}_j\|^{p-1}}$ converges to $\one_{\cos\lp \boldsymbol{x},\boldsymbol{w}_j\rp=1}\cdot\lan \boldsymbol{x},\boldsymbol{w}_j\ran$ when $p$ grows ($\|\boldsymbol{x}\|=1$), effectively zeroing out post activation values almost everywhere and also the gradient, staggering gradient flow training. 

Given Conjecture \ref{conj_conv}, Theorem \ref{thm_robust} can be used to infer the robustness of the networks $f_p(\boldsymbol{x};\boldsymbol{\theta}(T))$ obtained from gradient flow training with small initialization. When training a vanilla ReLU network in this regime, we expect the trained network to be close to $F$, thus non-robust against $\mathcal{O}\lp \frac{1}{\sqrt{K}}\rp$-attacks. When training a pReLU network with $p\geq 3$, we expect the trained network to be close to $F^{(p)}$, which is more robust than its ReLU counterpart. 

Our conjecture is based on existing work on the implicit bias of gradient flow on shallow networks with small initialization, and we carefully explain the rationale behind it in Section \ref{sec:theoretical_analsysis}. However, formally showing such convergence results is challenging as the data distribution $\mathcal{D}_{X,Y}$ considered here is more complicated than those for which convergence results can be successfully shown~\cite{boursier2022gradient,min2023early,chistikov2023learning, wang2023understanding}. Instead, we provide a preliminary analysis of this conjecture. Moreover, in Section \ref{sec:num}, we provide numerical evidence that supports our conjecture.

\myparagraph{Comparison with prior work}
\citet{frei2023the} consider a similar setting as ours with three minor differences. Firstly, their data distribution differs from ours by a scaling factor of $\sqrt{D}$ on the input data. Second, they allow for tiny correlations between the two subclass centers. Third, they consider shallow ReLU networks with bias.
They show that any network trained by gradient flow is non-robust against $\mathcal{O}\lp \frac{1}{\sqrt{K}}\rp$ attacks, which covers any initialization, but their analysis does not characterize what classifier the trained network corresponds to. While we consider specifically small initialization, we can at least conjecture, and numerically verify, what the corresponding classifier is at the end of the training. In addition, \citet{frei2023the} show the existence of $\mathcal{O}(1)$-robust ReLU network if neurons are aligned with subclasses and there is some carefully chosen bias. While it already sheds light on the need for learning subclasses, their proposed network can not be found by gradient flow. On the contrary, our proposed robust $F^{(p)}$ can be obtained by gradient flow.


\section{Discussion on Gradient Flow Training}\label{sec:theoretical_analsysis}
In Section \ref{sec:main_adv}, we conjectured that under gradient flow starting from a small initialization, a vanilla ReLU network favors learning average directions $\bar{\boldsymbol{\mu}}_+$ and $\bar{\boldsymbol{\mu}}_-$ of subclass centers, while a pReLU network favors learning every subclass centers $\boldsymbol{\mu}_k,\ k\in[K]$, resulting in a significant difference between these two networks in terms of adversarial robustness.
In this section, we provide a theoretical explanation of such alignment preferences in the scope of implicit bias of gradient flow training with small initialization~\citep{maennel2018gradient,boursier2024early}. We remind the reader that gradient flow training is described in Section \ref{sec:settings}.


\subsection{Gradient flow with small initialization} 
We first briefly describe the training trajectory of gradient flow on shallow networks with a small initialization. With a sufficiently small initialization scale (to be defined later), the gradient flow training is split into two phases with distinct dynamic behaviors of the neurons. The first phase is often referred to as the initial/early \emph{alignment phase}~\citep{boursier2024early,min2023early}, during which the neurons keep small norms while changing their directions towards one of the \emph{extremal vectors}~\citep{maennel2018gradient}, which are jointly determined by the training dataset and the activation function. The second phase is often referred to as the fitting/convergence phase. Within the fitting phase, neurons keep a good alignment with the extremal vectors and start to grow their norms, and the loss keeps decreasing until convergence, upon which one obtains a trained network whose dominant neurons (those with large norms) are all aligned with one of the extremal vectors. 

Notably, the neurons favor different extremal vectors depending on the activation function, precisely depicted by Figure \ref{fig_training_vis}. With the same dataset and with the same initialization of the weights, pReLU activation makes a significant difference in neuron alignment: When $p=1$ (vanilla ReLU network), the average class centers $\bar{\boldsymbol{\mu}}_+$ and $\bar{\boldsymbol{\mu}}_-$ are those extremal vectors ``attracting" neurons during the alignment phase, leading to a trained ReLU network that has effectively two neurons (one aligned with $\bar{\boldsymbol{\mu}}_+$ and another with $\bar{\boldsymbol{\mu}}_-$) at the end of the training. However, when $p=3$, the subclass centers $\boldsymbol{\mu}_1,\cdots,\boldsymbol{\mu}_k$ become extremal vectors that are ``attracting" neurons
, the resulting pReLU networks successfully learn every subclass center, which, we have argued in Section \ref{sec:main_adv}, substantially improves the robustness (over vanilla ReLU net) against adversarial attack. While the case visualized in Figure \ref{fig_training_vis} is a simple one in 3-dimension with 3 subclasses, we will provide experiments in higher dimension and with more subclasses in Section \ref{sec:num}.

Despite the seemingly simple dynamic behavior of the neurons, the rigorous theoretical analysis is much more challenging due to the complicated dependence~\citep{maennel2018gradient,boursier2022gradient,wang2023understanding, kumar2024directional} of the extremal vectors on the dataset and the activation function. Hence prior works \citep{boursier2022gradient,min2023early, chistikov2023learning, wang2023understanding} assume simple training datasets and their analyses are for ReLU activation. Here, we are faced with a relatively more complex data distribution that generates our dataset, and at the same time deals with a pReLU activation, thus a full theoretical analysis of the convergence, that would prove our Conjecture \ref{conj_conv}, still has many challenges and deserves a careful treatment in a separate future work. For now, we provide some preliminary theoretical analysis of the neural alignment in pReLU networks that supports our conjecture.

\subsection{Preliminary theoretical analysis on neuron alignment in pReLU networks}
We first make the following two simplifying assumptions:

\myparagraph{Small and balanced initialization} First, we obtain i.i.d. samples $w_{j0}, j=1,\cdots,h$ drawn from some random distribution such that almost surely $\|w_{j0}\|\leq M,\forall j$ for some $M>0$\footnote{For example, if we sample every entries of $w_{j0}$ by a uniform distribution over $\lp -\frac{1}{\sqrt{D}}, -\frac{1}{\sqrt{D}}\rp$.}, and then and then initialize the weights as
\be
    \boldsymbol{w}_j(0)=\epsilon w_{j0},\ \ v_j(0)\in\{\|\boldsymbol{w}_j(0)\|,-\|\boldsymbol{w}_j(0)\|\}, \forall j\in[h]\,.\label{eq_init}
\ee
That is, $w_{j0}$ determines the initial direction of the neurons and we use $\epsilon$ to control the initialization scale. This balanced assumption is standard in the analysis of shallow networks with small initialization \citep{soltanolkotabi2023implicit,boursier2022gradient,min2023early}.

\myparagraph{Simplified training dataset} The training dataset $\{(\boldsymbol{x}_k,y_k), k=1,\cdots,K\}$ is the collection of exact subclass centers, together with their class label. That is, $\boldsymbol{x}_k=\boldsymbol{\mu}_k,\quad y_k=\one_{\{k\leq K_1\}}-\one_{\{k>K_1\}},\forall k\in[K]$. Similar datasets with orthogonality among data points $\boldsymbol{x}_k$s are studied in~\citet{boursier2022gradient} for ReLU networks.

\myparagraph{Neuron alignment in pReLU networks} The key to the theoretical understanding of neuron alignment is the following lemma.
\begin{lemma}\label{lem_small_norm_informal}
Given some initialization from \eqref{eq_init}, if $\epsilon=\mathcal{O}( \frac{1}{\sqrt{h}})$, then there exists $T=\boldsymbol{\theta}(\frac{1}{K}\log\frac{1}{\sqrt{h}\epsilon})$ such that the trajectory under gradient flow training with the simplified training dataset almost surely satisfies that $\forall t\leq T$,
\begin{align*}
    &\;\max_j\lV \frac{d}{dt} \frac{\boldsymbol{w}_j(t)}{\|\boldsymbol{w}_j(t)\|}-\sign(v_j(0)) \mathcal{P}^\perp_{\boldsymbol{w}_j(t)} \boldsymbol{x}^{(p)}(\boldsymbol{w}_j(t))\rV=\mathcal{O}\lp\epsilon k\sqrt{h}\rp\,,\label{eq_err}
\end{align*}
where $\mathcal{P}^\perp_{\boldsymbol{w}}=I-\frac{\boldsymbol{w}\boldsymbol{w}^\top}{\|\boldsymbol{w}\|^2}$ and
\be
    \boldsymbol{x}^{(p)}(w)=\sum\nolimits_{k=1}^K\gamma_k(w)y_k\boldsymbol{x}_k\cdot p[\cos(\boldsymbol{x}_k,w)]^{p-1},
\ee
with $\gamma_k(\boldsymbol{w})$ being a subgradient of $\act(z)$ at $z=\lan \boldsymbol{x}_k,\boldsymbol{w}\ran$ when $p=1$ and $\gamma_k(\boldsymbol{w})=\one_{\cos(\boldsymbol{x}_k,w)\geq 0}$ when $p>1$.
\end{lemma}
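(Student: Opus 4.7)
The plan is to expand the gradient flow equations and decompose the direction derivative $\frac{d}{dt}\frac{\boldsymbol{w}_j}{\|\boldsymbol{w}_j\|} = \frac{1}{\|\boldsymbol{w}_j\|}\mathcal{P}^\perp_{\boldsymbol{w}_j}\dot{\boldsymbol{w}}_j$ into a main term matching the lemma plus a small approximation error. Writing $f_p(\boldsymbol{x};\boldsymbol{\theta}) = \sum_j v_j g_p(\boldsymbol{x};\boldsymbol{w}_j)$ with $g_p(\boldsymbol{x};\boldsymbol{w}) := [\sigma(\langle\boldsymbol{x},\boldsymbol{w}\rangle)]^p/\|\boldsymbol{w}\|^{p-1}$, the first ingredient is a balance invariant: $g_p$ is $1$-positive-homogeneous in $\boldsymbol{w}$, so Euler's identity yields $\boldsymbol{w}\cdot \nabla_{\boldsymbol{w}} g_p = g_p$, and a direct calculation gives $\frac{d}{dt}(v_j^2 - \|\boldsymbol{w}_j\|^2) = 0$; combined with the balanced initialization \eqref{eq_init}, this gives $v_j(t)/\|\boldsymbol{w}_j(t)\| = \mathrm{sign}(v_j(0))$ for all $t$. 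The second ingredient is an explicit decomposition: for unit-norm $\boldsymbol{x}_k = \boldsymbol{\mu}_k$, one computes $\nabla_{\boldsymbol{w}} g_p(\boldsymbol{x};\boldsymbol{w}) = [\sigma(\cos(\boldsymbol{x},\boldsymbol{w}))]^p\,\boldsymbol{w}/\|\boldsymbol{w}\| + p\,[\cos(\boldsymbol{x},\boldsymbol{w})]^{p-1}\gamma(\boldsymbol{x},\boldsymbol{w})\,\mathcal{P}^\perp_{\boldsymbol{w}}\boldsymbol{x}$, with $\gamma$ a ReLU subgradient; the radial $\boldsymbol{w}/\|\boldsymbol{w}\|$ part is annihilated by $\mathcal{P}^\perp_{\boldsymbol{w}_j}$.

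Substituting $\dot{\boldsymbol{w}}_j = -\sum_k \ell'(y_k,f_p(\boldsymbol{x}_k;\boldsymbol{\theta}))\,v_j\,\nabla_{\boldsymbol{w}_j} g_p(\boldsymbol{x}_k;\boldsymbol{w}_j)$ into the projection formula yields $\mathrm{sign}(v_j(0))\sum_k[-\ell'(y_k,f_p(\boldsymbol{x}_k;\boldsymbol{\theta}))]\,\gamma_k\,p[\cos(\boldsymbol{x}_k,\boldsymbol{w}_j)]^{p-1}\mathcal{P}^\perp_{\boldsymbol{w}_j}\boldsymbol{x}_k$, and the lemma's RHS is recovered by the linearization $-\ell'(y_k,\hat y)\approx y_k$ (exact at $\hat y = 0$ for exponential loss; for logistic loss, a positive multiple of $y_k$ that can be absorbed into a time rescaling). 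The remainder is bounded by $\mathcal{O}(pK\max_k|f_p(\boldsymbol{x}_k;\boldsymbol{\theta}(t))|)$ using $|y+\ell'(y,\hat y)|\leq C|\hat y|$ for small $|\hat y|$. To bound $|f_p|$ on $[0,T]$, a Gronwall estimate on the norms follows by applying Euler's identity a second time to $\boldsymbol{w}_j\cdot\dot{\boldsymbol{w}}_j$ and using $|\ell'|\leq 1$: $\frac{d}{dt}\|\boldsymbol{w}_j\|^2 \leq 2K\|\boldsymbol{w}_j\|^2$, whence $\|\boldsymbol{w}_j(t)\|\leq \epsilon M e^{Kt}$ and $\max_j\|\boldsymbol{w}_j(t)\| = \mathcal{O}(1/\sqrt{h})$ for $t\leq T=\Theta(\tfrac{1}{K}\log\tfrac{1}{\sqrt{h}\epsilon})$. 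A Hoeffding-type bound over the random signs $\mathrm{sign}(v_j(0))$ applied to the signed-sum representation $f_p(\boldsymbol{x}_k;\boldsymbol{\theta}(t)) = \sum_j \mathrm{sign}(v_j(0))\|\boldsymbol{w}_j(t)\|^2[\sigma(\cos(\boldsymbol{x}_k,\boldsymbol{w}_j(t)))]^p$ (which follows from balance and homogeneity of $\sigma$) then upgrades this to $\max_k|f_p|=\mathcal{O}(\sqrt{h}\max_j\|\boldsymbol{w}_j\|^2)=\mathcal{O}(\epsilon\sqrt{h})$ at scale $\epsilon$, and multiplying by the $pK$ prefactor produces the stated $\mathcal{O}(\epsilon K\sqrt{h})$ error.

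The main obstacle will be the concentration step: at $t>0$ both $\|\boldsymbol{w}_j(t)\|$ and $\cos(\boldsymbol{x}_k,\boldsymbol{w}_j(t))$ depend on the entire sign pattern $\{\mathrm{sign}(v_j(0))\}_j$ through the coupled ODE, so a naive Hoeffding inequality is not directly applicable. A leave-one-out or coupling argument, together with a Lipschitz estimate of the flow as a function of each individual sign, is needed to decouple the randomness from the trajectory up to a subleading error; the cleanest way to close this is likely a bootstrap in which one assumes $\max_k|f_p|=\mathcal{O}(\epsilon\sqrt{h})$, verifies the direction ODE and the Gronwall norm bound, and then re-derives a consistent $|f_p|$ estimate on a slightly shrunk interval before extending by continuity. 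A secondary technicality is the non-smoothness of ReLU at $\langle\boldsymbol{x}_k,\boldsymbol{w}_j\rangle=0$: a standard argument using that the training set is finite and the flow is absolutely continuous renders the subgradient selection $\gamma_k(\boldsymbol{w}_j(t))$ well-defined Lebesgue-a.e., which suffices for a Carath\'eodory interpretation of the inclusion in the lemma.
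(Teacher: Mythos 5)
Your decomposition is the same one the paper uses: balancedness gives $v_j=\sign(v_j(0))\|\boldsymbol{w}_j\|$ along the flow, the radial component of $\nabla_{\boldsymbol{w}}g_p$ is killed by $\mathcal{P}^\perp_{\boldsymbol{w}_j}$, the loss derivative is replaced by $y_k$ at cost $|{-\nabla_{\hat y}\ell(y_k,\hat y)}-y_k|\leq 2|\hat y|$, and the resulting error is controlled by $2Kp\max_k|f_p(\boldsymbol{x}_k;\boldsymbol{\theta}(t))|$, with a Gr\"onwall/stopping-time argument keeping the neuron norms small on $[0,T]$. Up to that point your plan is correct (modulo the small slip that $|\ell'|\leq 1$ is false for the exponential loss when $y\hat y<0$; inside the bootstrap regime one instead uses $|\nabla_{\hat y}\ell|\leq |y_k|+2|f_p|$, which changes nothing essential).

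The genuine gap is the final step, where you bound $\max_k|f_p|$ by a Hoeffding-type cancellation over the random signs $\sign(v_j(0))$ and then flag the sign--trajectory coupling as the ``main obstacle.'' That obstacle is self-inflicted: no concentration is needed, and the paper uses none. Since the lemma only asks for some $T=\Theta(\frac{1}{K}\log\frac{1}{\sqrt h\epsilon})$, you are free to shrink the constant, e.g.\ $T=\frac{1}{4K}\log\frac{1}{\sqrt h\epsilon}$; then the same Gr\"onwall estimate (run inside a stopping-time bootstrap exactly as you anticipate) yields $\max_j\|\boldsymbol{w}_j(t)\|^2\lesssim \epsilon M^2/\sqrt h$ on $[0,T]$, rather than the $\mathcal{O}(1/h)$ you obtain by running to the larger time. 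With that sharper norm bound the purely deterministic inequality
\begin{align*}
|f_p(\boldsymbol{x}_k;\boldsymbol{\theta}(t))|\;=\;\Bigl|\sum_{j}\sign(v_j(0))\,\|\boldsymbol{w}_j(t)\|^2\,[\act(\cos(\boldsymbol{x}_k,\boldsymbol{w}_j(t)))]^p\Bigr|\;\leq\;\sum_j\|\boldsymbol{w}_j(t)\|^2\;\leq\; h\max_j\|\boldsymbol{w}_j(t)\|^2\;\lesssim\;\epsilon\sqrt h\,M^2
\end{align*}
already gives the required $\mathcal{O}(\epsilon\sqrt h)$ control on the network outputs, and multiplying by the $Kp$ prefactor gives the stated $\mathcal{O}(\epsilon K\sqrt h)$ error; the ``almost surely'' in the statement refers only to the a.s.\ bound $\|w_{j0}\|\leq M$ at initialization, not to any high-probability event over the signs. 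As written, your proof is incomplete because the leave-one-out/coupling argument you would need to justify Hoeffding is never carried out; replacing that step by the trivial triangle-inequality bound above (i.e.\ trading the extra $\sqrt h$ you hoped to gain from cancellation for a smaller constant in $T$) closes the argument and recovers exactly the paper's proof.
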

Lemma \ref{lem_small_norm_informal} suggests that during the alignment phase $t\leq T$, one have the following approximation
\be
    \frac{d}{dt} \frac{\boldsymbol{w}_j(t)}{\|\boldsymbol{w}_j(t)\|}\simeq \sign(v_j(0)) \mathcal{P}^\perp_{\boldsymbol{w}_j(t)} x^{(p)}(\boldsymbol{w}_j(t))\,,
\ee
which essentially shows that when $\boldsymbol{w}_j$ is a \emph{positive neuron} ($\sign(v_j(0))>0$), then gradient flow dynamics during alignment phase pushes $\boldsymbol{w}_j/\|\boldsymbol{w}_j\|$ toward the direction of $\boldsymbol{x}^{(p)}(\boldsymbol{w}_j)$. Notably, $\boldsymbol{x}^{(p)}(\boldsymbol{w}_j)$ is a weighted combination of $\boldsymbol{x}_k$s and the mixing weights critically depend on $p$. Roughly speaking, when $p=1$, $\boldsymbol{x}^{(p)}(\boldsymbol{w}_j)$, weighing equally on $\boldsymbol{x}_k$s that activate $\boldsymbol{w}_j$, are more aligned with $\bar{\boldsymbol{\mu}}_+$ and $\bar{\boldsymbol{\mu}}_-$, while when $p\geq 3$, $\boldsymbol{x}^{(p)}(\boldsymbol{w}_j)$, weighing more on $\boldsymbol{x}_k$s that are close to $\boldsymbol{w}_j$ in angle, are more aligned with one of the subclass centers, thus by moving toward $\boldsymbol{x}^{(p)}(\boldsymbol{w}_j)$ in direction, the neuron $\boldsymbol{w}_j$ is likely to align with average class centers in the former case, and with subclass centers in the latter case (We illustrate this in an example in Appendix \ref{app_align_bias_vis}). This trend leads to the following alignment bias.
\begin{theorem}[Alignment bias of positive neurons]\label{thm_align}
    Given some sufficiently small $\delta>0$ and a fixed choice of $p\geq 1$, then $\exists \epsilon_0:=\epsilon_0(\delta,p)>0$ such that for any solution of the gradient flow on $f_p(\boldsymbol{x};\boldsymbol{\theta})$ with the simplified training dataset, starting from some initialization from \eqref{eq_init} with initialization scale $\epsilon<\epsilon_0$, almost surely we have that at any time $t\leq T=\boldsymbol{\theta}(\frac{1}{n}\log\frac{1}{\sqrt{h}\epsilon})$ and $\forall j$ with $\sign(v_j(0))>0$, 
    \be
        \left.\frac{d}{dt}\! \cos\!\lp \boldsymbol{w}_j(t), \bar{\boldsymbol{\mu}}_+\rp \rvt_{\cos(\boldsymbol{w}_j(t),\bar{\boldsymbol{\mu}}_+)=1-\delta}\! \begin{cases}
            >0,&\! \text{when } p=1\\
            <0,&\! \text{when } p\geq 3\\
        \end{cases},
    \ee
    and
    \be
        \left.\frac{d}{dt}\! \cos\!\lp \boldsymbol{w}_j(t), \boldsymbol{\mu}_k\rp \rvt_{\cos(\boldsymbol{w}_j(t),\boldsymbol{\mu}_k)=1-\delta}>0,\forall k\leq K_1\,.\label{eq_align_pref_subclass}
    \ee
\end{theorem}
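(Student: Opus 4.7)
The plan is to route everything through Lemma~\ref{lem_small_norm_informal}. First, I would choose $\epsilon_0(\delta,p)$ small enough that the $\mathcal{O}(\epsilon K\sqrt{h})$ residual in that lemma is much smaller than the $\delta$-dependent quantities to be lower-bounded below. Then, since $\sign(v_j(0))>0$, applying the chain rule to $\cos(\boldsymbol{w}_j,u)=\langle u,\boldsymbol{w}_j/\|\boldsymbol{w}_j\|\rangle$ reduces the theorem to evaluating the sign of
\begin{equation*}
\bigl\langle u,\,\mathcal{P}^\perp_{\boldsymbol{w}_j}\boldsymbol{x}^{(p)}(\boldsymbol{w}_j)\bigr\rangle
\end{equation*}
at the point $\cos(\boldsymbol{w}_j,u)=1-\delta$, for $u\in\{\bar{\boldsymbol{\mu}}_+,\boldsymbol{\mu}_k\}$. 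To make this inner product tractable, I would parametrize $\boldsymbol{w}_j/\|\boldsymbol{w}_j\|=(1-\delta)u+\eta v$ with $\eta=\sqrt{1-(1-\delta)^2}$, $v\perp u$, $\|v\|=1$, and let $a_{k'}:=\langle\boldsymbol{\mu}_{k'},v\rangle$. Under this parametrization, the cosines $\cos(\boldsymbol{\mu}_{k'},\boldsymbol{w}_j)$, activations $\gamma_{k'}$, and the vector $\boldsymbol{x}^{(p)}(\boldsymbol{w}_j)$ itself all admit closed forms in $(\eta,\{a_{k'}\})$.

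The rest is case analysis. \textbf{(i)} For $u=\boldsymbol{\mu}_k$, $k\leq K_1$, the self-term $p(1-\delta)^{p-1}\boldsymbol{\mu}_k$ in $\boldsymbol{x}^{(p)}$ dominates the other contributions, each of magnitude at most $p\eta^{p-1}$, yielding $\langle\boldsymbol{\mu}_k,\mathcal{P}^\perp\boldsymbol{x}^{(p)}\rangle\approx p(1-\delta)^{p-1}(2\delta-\delta^2)>0$. \textbf{(ii)} For $u=\bar{\boldsymbol{\mu}}_+$ and $p=1$, every positive subclass is activated (for $\delta$ small), and an explicit evaluation gives $\langle\bar{\boldsymbol{\mu}}_+,\mathcal{P}^\perp\boldsymbol{x}^{(1)}\rangle=\sqrt{K_1}(2\delta-\delta^2)+(1-\delta)\eta B$, where $B\geq 0$ collects the nonnegative contribution from any activated negative subclasses, hence strictly positive. \textbf{(iii)} For $u=\bar{\boldsymbol{\mu}}_+$ and $p\geq 3$, I would Taylor-expand the sums in $\eta$: the $\mathcal{O}(1)$ and $\mathcal{O}(\eta)$ terms cancel (the latter using $\sum_{k\leq K_1}a_k=\sqrt{K_1}\langle\bar{\boldsymbol{\mu}}_+,v\rangle=0$), and the $\mathcal{O}(\eta^2)$ coefficient reduces to $pK_1^{(2-p)/2}(1-\delta)^{p-1}\bigl[1-(p-1)\sigma^2+O(\delta)\bigr]$ with $\sigma^2:=\sum_{k\leq K_1}a_k^2$. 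Negative-subclass contributions are only $\mathcal{O}(\eta^p)$ per subclass, hence subdominant since $p>2$. So the sign is strictly negative whenever $\sigma^2>1/(p-1)$.

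The main obstacle is justifying $\sigma^2>1/(p-1)$ on the actual trajectory, which is a trajectory statement rather than a pointwise inequality on the sphere. My plan is to bootstrap from case (i): for $p\geq 3$, the flow pulls $\boldsymbol{w}_j$ toward an individual $\boldsymbol{\mu}_k$, so the only way $\cos(\boldsymbol{w}_j,\bar{\boldsymbol{\mu}}_+)$ can equal $1-\delta$ during the alignment phase is if $\boldsymbol{w}_j(0)$ is already that close to $\bar{\boldsymbol{\mu}}_+$ while the flow rotates it toward some $\boldsymbol{\mu}_k$; along such a trajectory the component of $\boldsymbol{w}_j$ perpendicular to $\bar{\boldsymbol{\mu}}_+$ lies predominantly inside $\mathrm{span}(\boldsymbol{\mu}_1,\ldots,\boldsymbol{\mu}_{K_1})\ominus\bar{\boldsymbol{\mu}}_+$, forcing $\sigma^2$ close to one. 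Almost-sure genericity of the random initialization, propagated through the smooth dynamics, rules out the measure-zero event $\sigma^2\leq 1/(p-1)$. Combined with sub-$\eta^2$ control of the Lemma~\ref{lem_small_norm_informal} residual --- which is what ultimately fixes $\epsilon_0(\delta,p)$ --- this delivers the required strict sign.
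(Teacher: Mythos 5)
Your overall route is the same as the paper's up to the decisive step: reduce via Lemma \ref{lem_small_norm_informal} to the sign of $\langle u,\mathcal{P}^\perp_{\boldsymbol{w}_j}\boldsymbol{x}^{(p)}(\boldsymbol{w}_j)\rangle$, absorb the $\mathcal{O}(\epsilon K\sqrt{h})$ residual into the choice of $\epsilon_0(\delta,p)$, and then do a case analysis; your case (i) and your direct evaluation in case (ii) are fine, and your Taylor expansion in case (iii) is correct as far as it goes --- the $\mathcal{O}(\eta^2)$ coefficient is indeed proportional to $1-(p-1)\sigma^2$, so for $p\geq 3$ the sign genuinely hinges on $\sigma^2=\sum_{k\leq K_1}\langle\boldsymbol{\mu}_k,v\rangle^2$ being bounded away from $1/(p-1)$. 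The genuine gap is in how you try to secure that bound. Your bootstrap is circular: arguing that the trajectory can only hit the level set $\cos(\boldsymbol{w}_j,\bar{\boldsymbol{\mu}}_+)=1-\delta$ if it ``started near $\bar{\boldsymbol{\mu}}_+$ and is rotating away toward some $\boldsymbol{\mu}_k$'' presupposes that $\cos(\boldsymbol{w}_j,\bar{\boldsymbol{\mu}}_+)$ cannot be increasing there, which is exactly the inequality being proved. And the genericity step is simply false: the set of perpendicular directions $v$ with $\sigma^2\leq 1/(p-1)$ is an open set of positive measure on the sphere, not a null event, so almost-sure genericity of the initialization cannot exclude it. If anything the intuition points the other way: for $D\gg K$ a random initial direction has almost all of its mass \emph{outside} $\mathrm{span}(\boldsymbol{\mu}_1,\dots,\boldsymbol{\mu}_K)$, so at early times in the window $t\leq T$ (which the theorem includes) the component of $\boldsymbol{w}_j$ orthogonal to $\bar{\boldsymbol{\mu}}_+$ is typically dominated by out-of-span directions, i.e.\ $\sigma^2$ is small, and your $\eta^2$ coefficient would then be \emph{positive}. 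So your case (iii) as written does not close.

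For comparison, the paper does not attempt a trajectory or genericity argument either: its proof works on the configuration where $\boldsymbol{w}_j$ lies in the span of the positive subclass centers and is activated exactly by them (in your notation, $\sigma^2=1$, so the coefficient is $2-p<0$), writes $z_k=\langle\boldsymbol{x}_k,\boldsymbol{w}_j/\|\boldsymbol{w}_j\|\rangle$ with $\sum_{k\leq K_1}z_k^2=1$, and obtains the exact identity $\langle\bar{\boldsymbol{\mu}}_+,\mathcal{P}^\perp_{\boldsymbol{w}_j}\boldsymbol{x}^{(p)}(\boldsymbol{w}_j)\rangle\propto g_p(2;\{z_k\})-g_p(1;\{z_k\})$ with $g_p(q)=(\sum_k z_k^q)(\sum_k z_k^{p+1-q})$; strict convexity of $g_p$ in $q$ with minimizer $q^*=(p+1)/2$ (Lemma \ref{lem_sc}) gives the strict sign for $p=1$ and $p\geq 3$ simultaneously, with compactness excluding a zero infimum --- no Taylor expansion and no separate treatment of the negative class needed. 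The quantity you are missing is precisely that span restriction; if you want to keep your expansion-based route you would have to supply an actual argument controlling the out-of-span part of $\boldsymbol{w}_j$ at any time the level set is hit (or restrict the claim as the paper does), rather than appealing to measure zero.
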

This theorem shows how different choice of $p$ alters the neurons' preferences on which directions to align. If we define the neighborhood of some $\boldsymbol{\mu}$ direction as $\mathcal{B}(\boldsymbol{\mu}, \delta):=\{\boldsymbol{z}\in\mathbb{S}^{D-1}:\cos\lp \boldsymbol{\mu},\boldsymbol{z}\rp\geq 1-\delta\}$, then specifically for a positive neuron: When $p=1$, then any neuron with $\boldsymbol{w}_j(t_0)\in \mathcal{B}(\bar{\boldsymbol{\mu}}_+, \delta)$ necessarily has $\boldsymbol{w}_j(t)\in \mathcal{B}(\bar{\boldsymbol{\mu}}_+, \delta),\forall t\in[t_0,T]$, i.e. it keeps at least $1-\delta$ alignment with $\bar{\boldsymbol{\mu}}_+$ until the end of the alignment phase (at time $T$). Therefore, there is a preference of staying close to $\bar{\boldsymbol{\mu}}_+$ for positive neurons. Interestingly, such preference no longer exists when $p\geq 3$. In particular, any positive neuron with $\boldsymbol{w}_j(t_0)\notin \mathcal{B}(\bar{\boldsymbol{\mu}}_+, \delta)$ necessarily has $\boldsymbol{w}_j(t)\notin \mathcal{B}(\bar{\boldsymbol{\mu}}_+, \delta),\forall t\in[t_0,T]$, i.e., any neuron initialized with some angular distance to $\bar{\boldsymbol{\mu}}_+$ will not get any closer to $\bar{\boldsymbol{\mu}}_+$. Instead, the neurons are now more likely to stay close to some subclass centers, as shown in \eqref{eq_align_pref_subclass}. Similar results can be said for negative neurons (whose index $j$ has $\sign(v_j(0))<0$): they favor average negative class center $\bar{\boldsymbol{\mu}}_-$ when $p=1$ and favor subclass directions when $p\geq 3$. We refer the interested readers to Appendix \ref{app_pf_align} for a complete Theorem \ref{thm_align} (and its proof) that also includes the statement for negative neurons.


\section{Numerical Experiments}\label{sec:num}
Our numerical experiments\footnote{Code available at \url{https://github.com/hanchmin/pReLU_ICML24}.} have two parts: First, we conduct experiments to validate Conjecture \ref{conj_conv}. Then, we provide preliminary experiments on real datasets to highlight the potential of using pReLU activation for obtaining more robust classifiers.
\subsection{Numerical evidence supports our conjecture}\label{ssec:val}
We run SGD (batch size $100$ and step size $0.2$ for $2\times 10^5$ epochs) with small initialization (all weights initialized as mean-zero Gaussian with standard deviation $10^{-7}$) to train a pReLU network with $h=2000$ neurons on a dataset drawn from $\mathcal{D}_{X,Y}$ with $D=1000$, $K=10$, and $K_1=6$. With different choices of intra-subclass variance $\alpha$, we take the network $f_p$ at the end of the training and estimate (we refer the readers to Appendix \ref{app_add_conj} for the estimation algorithm) its distance $\text{dist}(f_p,F)=\inf_{c>0}\sup_{\boldsymbol{x}\in\mathbb{S}^{D-1}}|cf_p(\boldsymbol{x})-F(\boldsymbol{x})|$ to the classifier $F(\boldsymbol{x})$, or the distance $\text{dist}(f_p,F^{(p)})$ to $F^{(p)}(\boldsymbol{x})$, depending on the value of $p$. As one sees from Figure \ref{fig_conjecture_val}, when $p=1$, $f_p$ is close to $F$, and the estimated distance slightly increases as $\alpha$ gets larger. Similarly, when $p=3$, $f_p$ is close to $F^{(p)}$. Furthermore, we investigate the alignment between the dominant neurons in $f_p$ and class/subclass centers. Figure \ref{fig_conjecture_val} shows that indeed neurons in $f_p$ learn only average class centers $\bar{\boldsymbol{\mu}}_+$ and $\bar{\boldsymbol{\mu}}_-$ when $p=1$ while learning every subclass center $\boldsymbol{\mu}_k,k\in[K]$ when $p=3$. Lastly, as our Theorem \ref{thm_robust} predicts, $f_p, p=3$ is much more robust than the one with $p=1$. This series of numerical evidence strongly support Conjecture \ref{conj_conv} (with additional experiments in Appendix \ref{app_add_conj}).

\begin{figure*}[t]
  \centering
  \vspace{-0.3cm}
  \includegraphics[width=\linewidth]{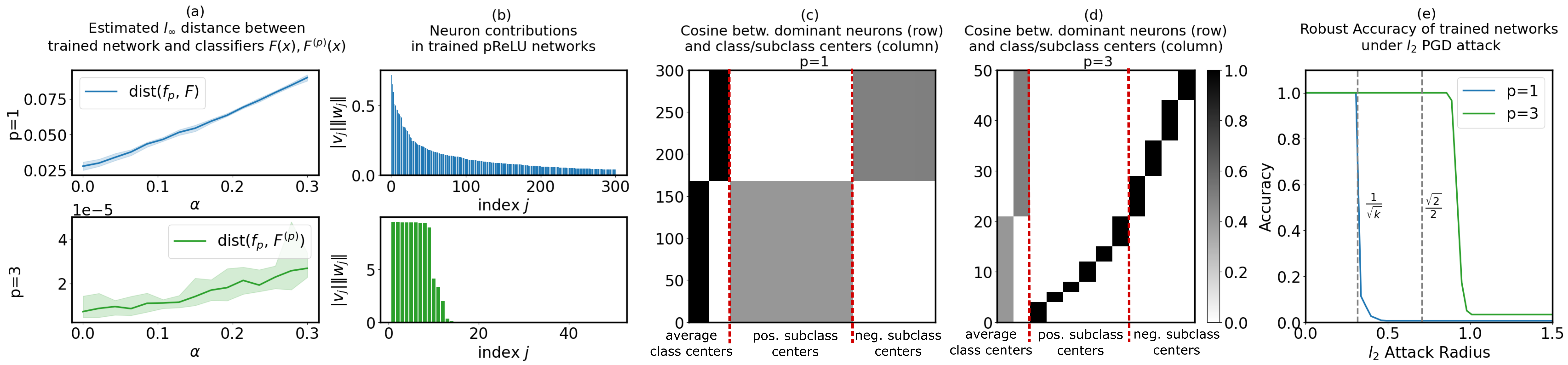}
  \vspace{-0.7cm}
  \caption{Numerical experiment ($K=10,K_1=6$) validates Conjecture \ref{conj_conv}. \textbf{(a)} We train pReLU networks using SGD with small initialization, then estimate the distance $\text{dist}(f_p,F)$ between the trained network $f_p$ and the classifier $F$, when $p=1$ (top plot); When $p=3$, we estimate $\text{dist}(f_p,F^{(p)})$ instead (bottom plot). The training is done under different choices of intra-subclass variance $\alpha$ and repeated 10 times per $\alpha$; the Solid line shows the average and the shade denotes the region between max and min values. \textbf{(b)} Given a trained network obtained from an instance of this training ($\alpha=0.1$), we reorder the neurons w.r.t. their contributions $|v_j|\|\boldsymbol{w}_j\|$ and then plot the contributions in a bar plot; \textbf{(c)(d)} For neurons with large contributions, we plot a colormap, with each pixel represents some  $\cos(\boldsymbol{w}_j,\boldsymbol{\mu})$, where $\boldsymbol{\mu}$ could be either average class centers $\bar{\boldsymbol{\mu}}_+$ and $\bar{\boldsymbol{\mu}}_-$ or subclass centers $\boldsymbol{\mu}_k,k\in[K]$. Note: For visibility, the neurons are reordered again so that neurons aligned with the same $\boldsymbol{\mu}$ are grouped together. \textbf{(e)} Lastly, we carry out $l_2$ PGD attack on a test dataset and plot the robust accuracy of the trained network under different choices of attack radius.}
  \label{fig_conjecture_val}
  \vspace{-0.4cm}
\end{figure*}
\subsection{Experiments on real datasets}
Although we have shown good alignment between our theory and our numerical experiments in \ref{ssec:val}. The settings largely remain unrealistic. To show the potential value of pReLU networks in practical settings in finding a robust classifier, we now study classification (albeit slightly modified) problems on real datasets. 
\subsubsection{Parity Classification on MNIST}\label{ssec:mnist}
We first consider training a pReLU network of width $h=500$ to predict whether an MNIST digit is even or odd. This poses a problem similar to the one studied in our theoretical analyses: each digit is naturally a subclass and they form classes of even digits and odd digits. Moreover, once the data is centered, as shown in Figure \ref{fig_mnist_training}, two data points of the same digit are likely to have a large positive correlation, and two points of different digits to have a small, or negative correlation, which resembles the our orthogonality assumption among subclass centers. Therefore, with appropriate data preprocessing, we expect the pReLU network to find a more robust classifier when $p$ is large.

\myparagraph{Data preprocessing} We relabel MNIST data by parity, which leads to a binary classification. Then we center both the training and test set by subtracting off the mean image of all training data and then normalize the residue, resulting in a centered, normalized training/test set \footnote{When training pReLU networks, some normalization of the data is required to improve training stability. To see this, notice that the post-activation value for $i$th data scales as $\|x_i\|^p$; When $p$ is large, this term diminishes or explodes depending on where the value $\|x\|$ is smaller or larger than one.}. 

\myparagraph{Training pReLU} We use Kaiming initialization~\cite{he2015delving} with non-small variance for all the weights and run Adam with cross-entropy loss and with batch size $1000$ for 50 epochs and summarize the training results in Figure \ref{fig_mnist_training}. First of all, as $p$ increases, the trained network becomes more robust against the adversarial $l_\infty$-attack computed from an \emph{adaptive projected gradient ascent} (APGD) algorithm~\cite{croce2020reliable}. Interestingly, pReLU with $p>1$ even has a slight edge over vanilla ReLU net in terms of test accuracy on clean data. Given that the MNIST dataset does not satisfy our data distribution, there is no reason to expect that neurons in pReLU can learn each subclass (in this case, the individual digit) and indeed they cannot. However, we highlight that pReLU empirically is more capable of learning distinct patterns within each superclass/class: We stack the hidden post-activation representation of each training sample into a matrix and compute its stable rank (defined as $\frac{\|\boldsymbol{A}\|_F^2}{\|\boldsymbol{A}\|^2}$ for a matrix $\boldsymbol{A}$, as an approximation of $\rank(\boldsymbol{A})$). From Figure \ref{fig_mnist_training}, we see that the hidden feature matrix of MNIST obtained from pReLU network has a much larger stable rank than the one from vanilla ReLU net, i.e. the hidden features collapse less when $p$ is large, and we conjecture it to be the reason why pReLU still gains much more adversarial robustness than vanilla ReLU. Theoretically investigating such a phenomenon is an interesting future direction.

\begin{figure*}[t]
  \centering
  \vspace{-0.2cm}
  \includegraphics[width=\linewidth]{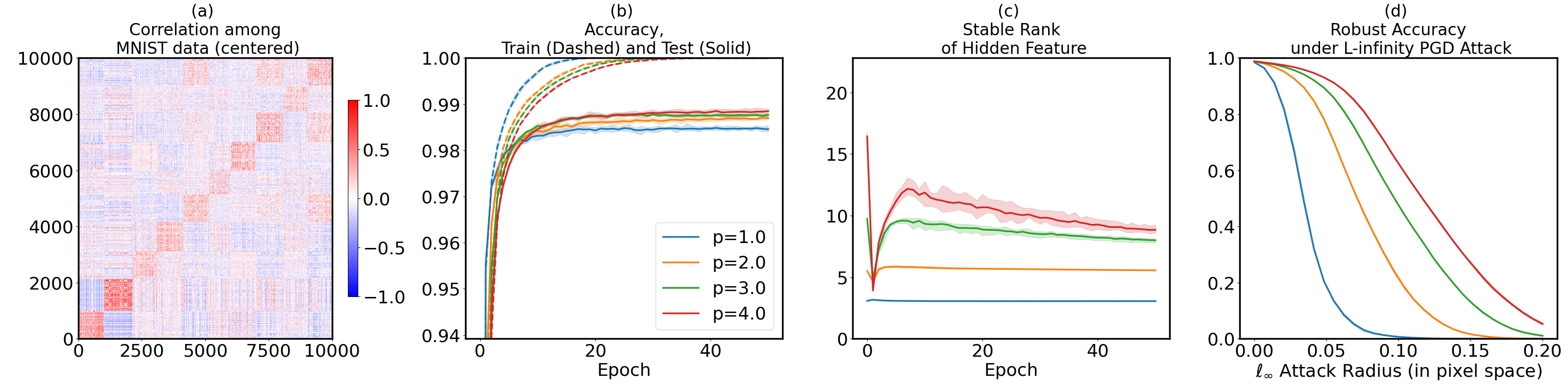}
  \vspace{-0.9cm}
  \caption{Parity prediction on MNIST dataset with pReLU networks. \textbf{(a)} We plot the data correlation as a colormap, where each pixel represents some $\cos\lp x_i,x_j\rp$ between two centered data $x_i,x_j$ from MNIST training dataset; \textbf{(b)} We run Adam with batch size $1000$ to train a pReLU network under Kaiming initialization (repeated 10 times), then plot the training/testing accuracy during training for different choice of $p$ (The shade region indicates the range between the minimum and maximum values over 10 randomized runs); \textbf{(c)} We stack the hidden post-activation representation of each training sample into a matrix and compute its stable rank, and plot the evolution of this stable rank during training; \textbf{(d)} After training for $50$ epoch, we carry out APGD $l_\infty$-attack on MNIST test dataset (in pixel space) and plot the robust accuracy of the trained pReLU network under different choice of attack radius.} 
  \label{fig_mnist_training}
\end{figure*}
\begin{figure*}[t]
  \centering
  \vspace{-0.3cm}
  \includegraphics[width=\linewidth]{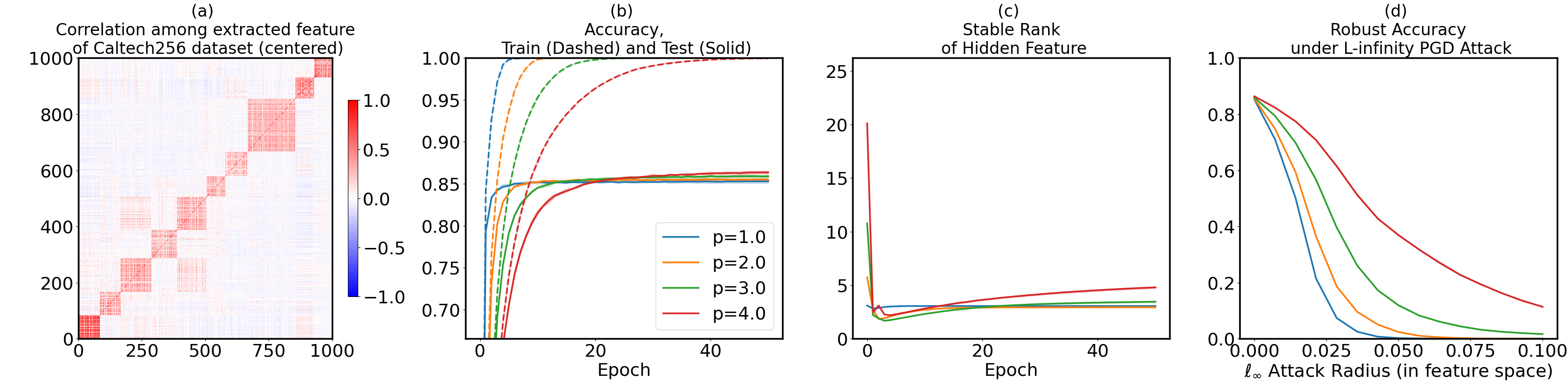}
  \vspace{-0.7cm}
  \caption{Classification on Caltech256 dataset (relabeled into 10 superclasses) with a pre-trained ResNet152 as a fixed feature extractor.}
  \label{fig_caltech256_training}
  \vspace{-0.4cm}
\end{figure*}

\myparagraph{Additional experiments} To further illustrate the effectiveness of pReLU activation in improving the adversarial robustness of the trained network, we conduct additional experiments, in Appendix \ref{app_mnist}, of training pReLU networks for the original digit classification task on MNIST and test the robustness of the trained network with different types of attacks. Despite that our theoretical results only suggest robustness gain can be achieved under datasets with subclasses, the additional experimental results show that the pReLU networks are still much more robust than their ReLU counterpart even for the original digit classification task, which clearly does not follow our data assumption.

\subsubsection{Image classification with pre-trained feature extractor}
Our next experiment considers a transfer learning setting where we use some intermediate layer (more precisely, the feature layer before the fully-connected layers) of a pre-trained ResNet152 on ImageNet as a feature extractor and classify the extracted feature from Caltech256 \cite{griffin2007caltech} object classification dataset with pReLU networks. The main reason behind this setting is that we expect the extracted feature representation of each class to be clustered, and this is verified in Figure \ref{fig_caltech256_training}. 

We intend to have a classification task with subclasses, thus we group the original 256 classes in the dataset into 10 superclasses (each superclass has no semantic meaning in this case) and train pReLU networks of width $h=2000$ that predict the superclass label. Then we obtain the feature representation of the dataset from our feature extractor, center the feature, and normalize, same as we did for the MNIST dataset. The training process is exactly the same as for the MNIST dataset, and we summarize the results in Figure \ref{fig_caltech256_training}. Even for this multi-class classification task, still pReLU achieves higher test accuracy and is more robust when $p$ gets larger. 
\section{Conclusion}
By introducing a generalized pReLU activation, we resolve the non-robustness issue, caused by the implicit bias of gradient flow, of ReLU networks when trained on a classification task in the presence of latent subclasses with small inter-subclass correlations. Future work includes formal analyses on neural alignment in pReLU networks as well as more empirical evaluation of pReLU activation for its ability to improve the robustness of neural networks. 

\section*{Acknowledgement}
The authors thank the support of the NSF-Simons Research Collaborations on the Mathematical and Scientific Foundations of Deep Learning (NSF grant 2031985), and the ONR MURI Program (ONR grant 503405-78051). 
The authors thank the feedback from anonymous reviewers, which has greatly improved our experimental results.
The authors thank Enrique Mallada, Jeremias Sulam, and Ambar Pal for their suggestions and comments at various stages of this research project, and also thank Kyle Poe for carefully reading the manuscript. 

\section*{Impact Statement}
While this work is mostly theoretical, it tackles the issue of nonrobustness of neural networks trained by gradient-based algorithms, which potentially leads to more robust, reliable, and trustworthy neural networks in many application domains.

\bibliographystyle{icml2024}
\bibliography{ref.bib}

\newpage
\onecolumn
\appendix
\section{Additional Discussion on Validating Conjecture \ref{conj_conv}}\label{app_add_conj}
\subsection{Estimating \texorpdfstring{$\mathrm{dist}(f_p,F)$}{dist(f\_p,F)}}
To estimate the distance $\mathrm{dist}(f_p,F)=\inf_{c>0}\sup_{\mathbb{S}^{D-1}}|cf_p(\boldsymbol{x})-F(\boldsymbol{x})|$ between a trained network $f_p$ and a classifier $F$ (or $F^{(p)}$), we first pick an $\hat{c}>0$ that yields the least-square match between $\hat{c}f_p(\boldsymbol{x})$ and $F(\boldsymbol{x})$ over a set of points sampled from $\mathrm{Unif}(\mathbb{S}^{D-1})$. Then given this choice of $\hat{c}$, we run normalized projected gradient ascent on $|\hat{c}f_p(\boldsymbol{x})-F(\boldsymbol{x})|^2$ starting from an initial $x$ sampled from $\mathrm{Unif}(\mathbb{S}^{D-1})$, and repeat a large number of times, the maximum value of $|\hat{c}f_p(\boldsymbol{x})-F(\boldsymbol{x})|$ at the end of Normalized PGA over all runs give an estimate of $\mathrm{dist}(f_p,F)$.

\begin{algorithm}
    \caption{Estimating $\mathrm{dist}(f_p,F)$}

    \textbf{Input}: Network $f_p$; Classifier $F$ (or $F^{(p)})$; step size $\alpha$; sample numbers $N_1$,  $N_2$;
    
    \textbf{Do}:
    \begin{enumerate}
        \item Sample $\boldsymbol{x}, i=1,\cdots,N_1$ from $\mathrm{Unif}(\mathbb{S}^{D-1})$; \qquad $\hat{c}\la \arg\min_{c>0}\sum_{i=1}^{N_1}|cf_p(\boldsymbol{x})-F(\boldsymbol{x})|^2$
        \item Sample new $\boldsymbol{x}, i=1,\cdots,N_2$ from $\mathrm{Unif}(\mathbb{S}^{D-1})$;

        $\ell(\cdot) \la |\hat{c}f_p(\cdot)-F(\cdot)|^2$;
        
        \textbf{For} $i=1,\cdots,N_2$:

        \qquad $\boldsymbol{x}^{(0)}\la \boldsymbol{x}$;
        
        \qquad \textbf{For} $t=1,\cdots, \mathsf{max\_iter}$:

        \qquad \qquad $\boldsymbol{x}^{(t)}\la \boldsymbol{x}^{(t-1)}+\alpha \frac{\nabla_x\ell(x^{(t-1)})}{\|\nabla_x\ell(x^{(t-1)})\|}$; \texttt{\% normalized gradient ascent}
        
        \qquad \qquad $\boldsymbol{x}^{(t)}\la \frac{\boldsymbol{x}^{(t)}}{\|\boldsymbol{x}^{(t)}\|}$; \texttt{\% projection onto unit sphere}

    \end{enumerate}

    \textbf{Return} $\max_{i}|\hat{c}f_p(\boldsymbol{x}^{(\mathsf{max\_iter})}_i)-F(\boldsymbol{x}^{(\mathsf{max\_iter})}_i)|$
    \label{algo_approx_model}
\end{algorithm}
\subsection{Additional experiment}
We conduct the same experiment as in Section \ref{ssec:val}, with more subclasses $K=20,K_1=8$, the results are still well aligned with Conjecture \ref{conj_conv}. Notably, as $K$ the total number of subclasses increases, the trained ReLU network is more vulnerable to $l_2$ attacks, compared to the case in Section \ref{ssec:val}, while generalized pReLU still is robust to these attacks.
\begin{figure*}[ht]
  \centering
  \includegraphics[width=\linewidth]{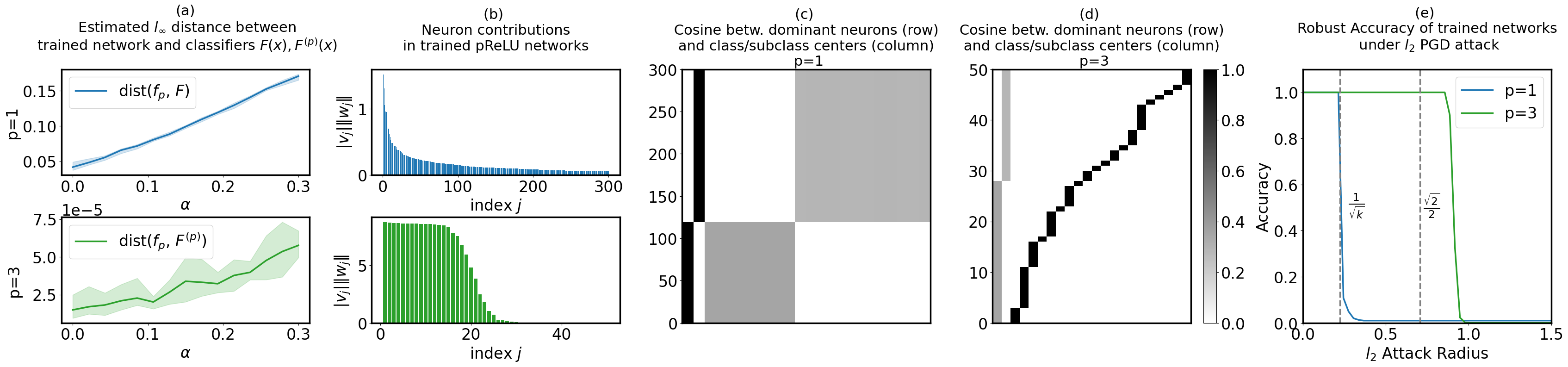}
  \vspace{-0.7cm}
  \caption{Additional Numerical experiment ($K=20,K_1=8$) validates Conjecture \ref{conj_conv}. \textbf{(a)} We train pReLU networks using SGD with small initialization, then estimate the distance $\text{dist}(f_p,F)$ between the trained network $f_p$ and the classifier $F$, when $p=1$ (top plot); When $p=3$, we estimate $\text{dist}(f_p,F^{(p)})$ instead (bottom plot). The training is done under different choices of intra-subclass variance $\alpha$ and repeated 10 times per $\alpha$; the Solid line shows the average and the shade denotes the region between max and min values. \textbf{(b)} Given a trained network obtained from an instance of this training ($\alpha=0.1$), we reorder the neurons w.r.t. their contributions $|v_j|\|\boldsymbol{w}_j\|$ and then plot the contributions in a bar plot; \textbf{(c)(d)} Given neurons with large contributions, we plot a colormap, with each pixel represents some  $\cos(\boldsymbol{w}_j,\boldsymbol{\mu})$, where $\boldsymbol{\mu}$ could be either average class centers $\bar{\boldsymbol{\mu}}_+$ and $\bar{\boldsymbol{\mu}}_-$ or subclass centers $\boldsymbol{\mu}_k,k\in[K]$. Note: For visibility, the neurons are reordered again so that neurons aligned with the same $\boldsymbol{\mu}$ are grouped together. \textbf{(e)} Lastly, we carry out $l_2$ PGD attack on a test dataset and plot the robust accuracy of the trained network under different choices of attack radius.}
  \label{fig_conjecture_val_addi}
\end{figure*}

\newpage
\section{Additional experiments on MNIST dataset}
\label{app_mnist}
To further illustrate the effectiveness of pReLU activation in improving the adversarial robustness of the trained network, we conduct the following additional experiments on MNIST dataset:

\myparagraph{MNIST digits classification} We follow the same experiment setting in Section \ref{ssec:mnist} and train the network to classify the digits instead of their parity. Figure \ref{fig_mnist_training_10_classes} reports the training results. Despite that our theoretical results only suggest robustness gain can be achieved under datasets with subclasses, these additional experimental results show that the pReLU networks are still much more robust than their ReLU counterpart even for the original digit classification task, which clearly does not follow our data assumption.

\begin{figure*}[ht]
  \centering
  \vspace{-0.2cm}
  \includegraphics[width=\linewidth]{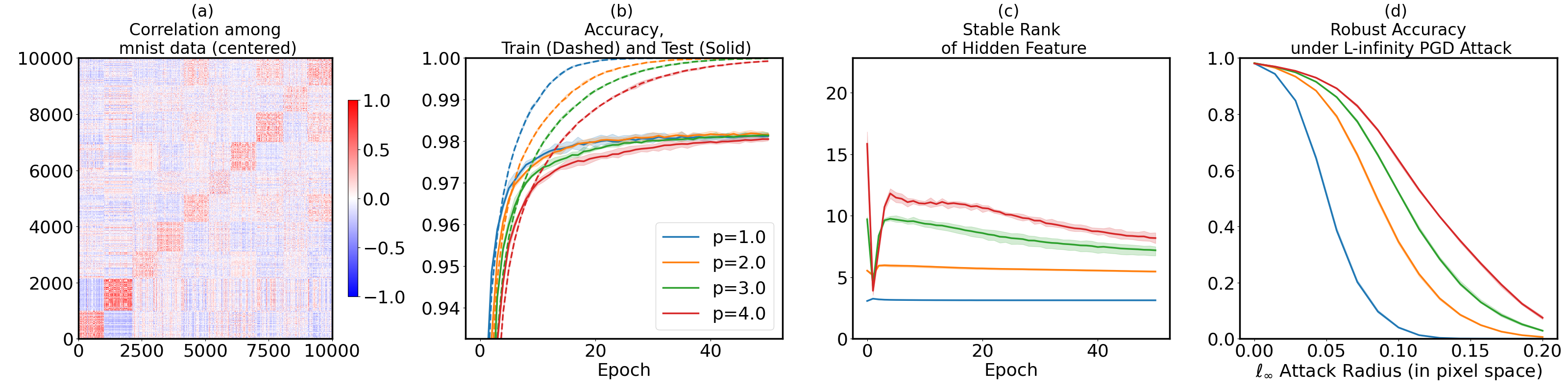}
  \vspace{-0.9cm}
  \caption{Digits classification on MNIST dataset with pReLU networks. \textbf{(a)} We plot the data correlation as a colormap, where each pixel represents some $\cos\lp x_i,x_j\rp$ between two centered data $x_i,x_j$ from MNIST training dataset; \textbf{(b)} We run Adam with batch size $1000$ to train a pReLU network under Kaiming initialization (repeated 10 times), then plot the training/testing accuracy during training for different choice of $p$; \textbf{(c)} We stack the hidden post-activation representation of each training sample into a matrix and compute its stable rank, and plot the evolution of this stable rank during training; \textbf{(d)} After training for $50$ epoch, we carry out APGD $l_\infty$-attack on MNIST test dataset (in pixel space) and plot the robust accuracy of the trained pReLU network under different choice of attack radius.} 
  \label{fig_mnist_training_10_classes}
\end{figure*}

\myparagraph{Evaluating robustness under different attacks} Lastly, we check the adversarial robustness of the trained networks above under attacks of different norms. The results are summarized in the table below, and they show that the robustness gained from pReLU activation is agnostic to the choice of attacks.
\begin{table}[ht]
\caption{Robust accuracy of pReLU networks under different attacks. The reported accuracy is the mean value over 5 runs with random initialization. Bold text indicates the best accuracy within the same row.}
\begin{tabular}{l|llll}
 & ReLU ($p=1$) & ReLU ($p=2$) & ReLU ($p=3$) & ReLU ($p=4$) \\ \hline
 Clean Acc.& 0.981($\pm 0.000$) & \textbf{0.982} ($\pm 0.000$)& \textbf{0.982} ($\pm 0.000$)& 0.980 ($\pm 0.000$)\\ \hline
 Robust Acc. ($\ell_\infty$, radius$=0.05$)& 0.512 ($\pm 0.006$)& 0.844 ($\pm 0.002$)& 0.892 ($\pm 0.002$)& \textbf{0.913} ($\pm 0.001$)\\
 Robust Acc. ($\ell_\infty$, radius$=0.1$)& 0.040 ($\pm 0.002$)& 0.346 ($\pm 0.005$)& 0.522 ($\pm 0.004$)& \textbf{0.637} ($\pm 0.004$)\\ \hline
 Robust Acc. ($\ell_2$, radius$=1$)& 0.301 ($\pm 0.004$)& 0.640 ($\pm 0.006$)& 0.726 ($\pm 0.004$)& \textbf{0.775} ($\pm 0.002$)\\
 Robust Acc. ($\ell_2$, radius$=2$)& 0.007 ($\pm 0.001$)& 0.101 ($\pm 0.002$)& 0.165 ($\pm 0.003$)& \textbf{0.239} ($\pm 0.003$)\\ \hline
 Robust Acc. ($\ell_1$, radius$=5$)& 0.500 ($\pm 0.007$)& 0.744 ($\pm 0.007$)& 0.780 ($\pm 0.004$)& \textbf{0.807} ($\pm 0.002$)\\
 Robust Acc. ($\ell_1$, radius$=10$)& 0.098 ($\pm 0.004$)& 0.294 ($\pm 0.002$)& 0.354 ($\pm 0.002$)& \textbf{0.402} ($\pm 0.003$)
\end{tabular}
\label{tabel_mnist}
\end{table}

\newpage

\section{Proofs for Proposition \ref{prop_gen} and Theorem \ref{thm_robust}}\label{app_pf_thm}
\setcounter{subsection}{-1}
\subsection{Verifying the claim regarding realizability of $F(\mathbf{x})$ and $F^{(p)}(\mathbf{x})$}
Before we present our poofs for the main results, we start with verifying our claim regarding realizability of $F(\boldsymbol{x})$ and $F^{(p)}(\boldsymbol{x})$:
\begin{claim}[restated]
    The following two statements are true:
    \begin{itemize}[leftmargin=9pt,topsep=0pt,parsep=0pt]
       \item 
        When $p=1$, let $I_+$ and $I_-$ be any two non-empty and disjoint subsets of $[h]$, let $\boldsymbol{\lambda}:=(\lambda_j) \in \mathbb{R}^h_{\geq 0}$ be any vector such that $\sum_{j\in I_+}\lambda_j^2=\sqrt{K_1}$ and $\sum_{j\in I_-}\lambda_j^2=\sqrt{K_2}$, and let
        \begin{align*}
        & \boldsymbol{w}_j=\lambda_j\bar{\boldsymbol{\mu}}_+,\ v_j=\lambda_j, &\forall j\in I_+\,,\\
        & \boldsymbol{w}_j=\lambda_j\bar{\boldsymbol{\mu}}_-,\ v_j=-\lambda_j, &\forall j\in I_-\,,\\
        & \boldsymbol{w}_j = 0,\  v_j = 0, &\text{otherwise}. 
        \end{align*}
        Then $f_p(x;\{\boldsymbol{w}_j,v_j\}_{j=1}^h)\equiv F(\boldsymbol{x})\,.$
        \item When $p\geq 1$, let $I_1,\cdots, I_K$ be any $K$ non-empty and disjoint subsets of $[h]$, let $\boldsymbol{\lambda}:=(\lambda_j) \in \mathbb{R}^h_{\geq 0}$ be any vector such that $\sum_{j\in I_k}\lambda_j^2=1,\forall k\in[K]$, and let 
        \begin{align*}
        & \boldsymbol{w}_j=\lambda_j\boldsymbol{\mu}_k,\ v_j=\lambda_j, &\forall j\in I_k, k\leq K_1\,,\\
        & \boldsymbol{w}_j=\lambda_j\boldsymbol{\mu}_k,\ v_j=-\lambda_j, &\forall j\in I_k, k>K_1\,,\\
        & \boldsymbol{w}_j = 0,\ v_j = 0 & \text{otherwise}. 
        \end{align*}
        Then $f_p(x;\{\boldsymbol{w}_j,v_j\}_{j=1}^h)\equiv F^{(p)}(\boldsymbol{x})\,.$
        \end{itemize}
\end{claim}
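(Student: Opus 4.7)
The plan is to substitute the specified weights directly into
\[
f_p(\boldsymbol{x};\{\boldsymbol{w}_j,v_j\}_{j=1}^h)=\sum_{j=1}^h v_j\,\frac{[\act(\langle \boldsymbol{x},\boldsymbol{w}_j\rangle)]^p}{\|\boldsymbol{w}_j\|^{p-1}},
\]
use the positive $1$-homogeneity of $\act$, and then collapse each group of neurons using the prescribed normalization on $\boldsymbol{\lambda}$. Neurons with $(\boldsymbol{w}_j,v_j)=(0,0)$ contribute nothing (setting the convention $v_j\cdot[\act(0)]^p/\|\boldsymbol{w}_j\|^{p-1}=0$ when $v_j=0$), so both sums effectively range over the designated index sets $I_\pm$ (respectively $I_1,\dots,I_K$).

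For the $p=1$ case, I would first note that $\|\bar{\boldsymbol{\mu}}_+\|=\|\bar{\boldsymbol{\mu}}_-\|=1$ (the subclass centers being orthonormal), and that for $\lambda_j\geq 0$ the ReLU gives $\act(\langle \boldsymbol{x},\lambda_j\bar{\boldsymbol{\mu}}_\pm\rangle)=\lambda_j\act(\langle \boldsymbol{x},\bar{\boldsymbol{\mu}}_\pm\rangle)$. Hence for $j\in I_+$ the contribution is $v_j\act(\langle \boldsymbol{x},\boldsymbol{w}_j\rangle)=\lambda_j^2\,\act(\langle \boldsymbol{x},\bar{\boldsymbol{\mu}}_+\rangle)$, and summing over $I_+$ produces $\bigl(\sum_{j\in I_+}\lambda_j^2\bigr)\act(\langle \boldsymbol{x},\bar{\boldsymbol{\mu}}_+\rangle)=\sqrt{K_1}\,\act(\langle \boldsymbol{x},\bar{\boldsymbol{\mu}}_+\rangle)$. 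The $I_-$ group is handled identically (with the sign flip from $v_j=-\lambda_j$) and yields $-\sqrt{K_2}\,\act(\langle \boldsymbol{x},\bar{\boldsymbol{\mu}}_-\rangle)$. Adding the two recovers $F(\boldsymbol{x})$ as defined in \eqref{eq_f0}.

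For the $p\geq 1$ case, the only new ingredient is the weight-normalization factor. For $j\in I_k$ with $\boldsymbol{w}_j=\lambda_j\boldsymbol{\mu}_k$, one has $\|\boldsymbol{w}_j\|=\lambda_j$ and $\act(\langle \boldsymbol{x},\boldsymbol{w}_j\rangle)^p=\lambda_j^p\,\act(\langle \boldsymbol{x},\boldsymbol{\mu}_k\rangle)^p$, so
\[
v_j\,\frac{[\act(\langle \boldsymbol{x},\boldsymbol{w}_j\rangle)]^p}{\|\boldsymbol{w}_j\|^{p-1}}=\pm\lambda_j\cdot\frac{\lambda_j^p\,[\act(\langle \boldsymbol{x},\boldsymbol{\mu}_k\rangle)]^p}{\lambda_j^{p-1}}=\pm\lambda_j^2\,[\act(\langle \boldsymbol{x},\boldsymbol{\mu}_k\rangle)]^p,
\]
with the sign determined by whether $k\leq K_1$ or $k>K_1$. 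Summing over $j\in I_k$ and using $\sum_{j\in I_k}\lambda_j^2=1$ collapses the contribution of $I_k$ to $\pm[\act(\langle \boldsymbol{x},\boldsymbol{\mu}_k\rangle)]^p$, and summing over $k\in[K]$ yields exactly $F^{(p)}(\boldsymbol{x})$ as defined in \eqref{eq_fp}.

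There is no real obstacle here: the argument is entirely algebraic and relies only on (i) positive homogeneity of $\act=\mathsf{ReLU}$, (ii) unit-norm of each $\boldsymbol{\mu}_k$ (and of $\bar{\boldsymbol{\mu}}_\pm$), (iii) the specific exponents in the pReLU definition that cancel $\lambda_j^{p-1}$ against $\|\boldsymbol{w}_j\|^{p-1}$, and (iv) the normalization constraints on $\boldsymbol{\lambda}$. The one small care to exercise is a consistent convention for the $0/0$ form at neurons with $\boldsymbol{w}_j=0$, which is harmless because those neurons carry $v_j=0$.
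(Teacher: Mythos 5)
Your proposal is correct and follows essentially the same route as the paper's verification: direct substitution into $f_p$, cancellation of $\lambda_j^{p-1}$ against $\|\boldsymbol{w}_j\|^{p-1}$ via positive homogeneity of the ReLU, and collapsing each index group using the normalization constraints on $\boldsymbol{\lambda}$. Your explicit remark about the $0/0$ convention at zero neurons is a minor point the paper handles implicitly by simply omitting those terms, and it does not change the argument.
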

\begin{proof}
    \myparagraph{Statement 1: When $p=1$}. Let $I_+,I_-, \boldsymbol{\lambda}$ be define as in the statement, then we have
    \begin{align*}
        f_p(\boldsymbol{x};\{\boldsymbol{w}_j,v_j\}_{j=1}^h) &=\; \sum_{j=1}^hv_j\sigma(\lan \boldsymbol{x},\boldsymbol{w}_j\ran)\\
        &=\; \sum_{j\in I_+}v_j\sigma(\lan \boldsymbol{x},\boldsymbol{w}_j\ran)+\sum_{j\in I_-}v_j\sigma(\lan \boldsymbol{x},\boldsymbol{w}_j\ran)\\
        &=\; \sum_{j\in I_+}\lambda_j\sigma(\lan \boldsymbol{x},\lambda_j\bar{\boldsymbol{\mu}}_+\ran)-\sum_{j\in I_-}\lambda_j\sigma(\lan \boldsymbol{x},\lambda_j\bar{\boldsymbol{\mu}}_-\ran)\\
        &=\; \sum_{j\in I_+}\lambda_j^2\sigma(\lan \boldsymbol{x},\bar{\boldsymbol{\mu}}_+\ran)-\sum_{j\in I_-}\lambda_j^2\sigma(\lan \boldsymbol{x},\bar{\boldsymbol{\mu}}_-\ran)\\
        &=\; \sqrt{K_1}\sigma(\lan \boldsymbol{x},\bar{\boldsymbol{\mu}}_+\ran)-\sqrt{K_2}\sigma(\lan \boldsymbol{x},\bar{\boldsymbol{\mu}}_-\ran)=F(\boldsymbol{\boldsymbol{x}})\,.
    \end{align*}
    \myparagraph{Statement 2: When $p\geq 1$}. Let $\{I_k\}^K_{k=1},\boldsymbol{\lambda}$ be define as in the statement, then we have
    \begin{align*}
        f_p(\boldsymbol{x};\{\boldsymbol{w}_j,v_j\}_{j=1}^h) &=\; \sum_{j=1}^hv_j\frac{\sigma^p(\lan \boldsymbol{x},\boldsymbol{w}_j\ran)}{\|\boldsymbol{w}_j\|^{p-1}}\\
        &=\; \sum_{k\leq K_1}\sum_{j\in I_k}v_j\frac{\sigma^p(\lan \boldsymbol{x},\boldsymbol{w}_j\ran)}{\|\boldsymbol{w}_j\|^{p-1}}+\sum_{k> K_1}\sum_{j\in I_k}v_j\frac{\sigma^p(\lan \boldsymbol{x},\boldsymbol{w}_j\ran)}{\|\boldsymbol{w}_j\|^{p-1}}\\
        &=\; \sum_{k\leq K_1}\sum_{j\in I_k}\lambda_j\frac{\sigma^p(\lan \boldsymbol{x},\lambda_j \boldsymbol{\mu}_k\ran)}{\|\lambda_j\boldsymbol{\mu}_k\|^{p-1}}-\sum_{k> K_1}\sum_{j\in I_k}\lambda_j\frac{\sigma^p(\lan \boldsymbol{x},\lambda_j\boldsymbol{\mu}_k\ran)}{\|\lambda_j\boldsymbol{\mu}_k\|^{p-1}}\\
        &=\; \sum_{k\leq K_1}\sum_{j\in I_k}\lambda_j^2\sigma^p(\lan \boldsymbol{x},\boldsymbol{\mu}_k\ran)-\sum_{k> K_1}\sum_{j\in I_k}\lambda_j^2\sigma^p(\lan \boldsymbol{x},\boldsymbol{\mu}_k\ran)\\
        &=\; \sum_{k\leq K_1}\sigma^p(\lan \boldsymbol{x},\boldsymbol{\mu}_k\ran)-\sum_{k> K_1}\sigma^p(\lan \boldsymbol{x},\boldsymbol{\mu}_k\ran)=F^{(p)}(\boldsymbol{\boldsymbol{x}})\,.
    \end{align*}
\end{proof}
\subsection{Auxiliary lemmas}
The proof will use some basic facts in probability theory and we list them below.
\begin{lemma}\label{lem_split_prob}
    Let $\mathcal{E}_1,\mathcal{E}_2$ be two events defined on some probability space, then
    \be
        \prob(\mathcal{E}_1)\leq \prob(\mathcal{E}_1\cap \mathcal{E}_2)+\prob(\mathcal{E}_2^c)
    \ee
\end{lemma}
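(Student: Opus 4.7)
The plan is to decompose the event $\mathcal{E}_1$ according to whether $\mathcal{E}_2$ occurs or not. Writing $\mathcal{E}_1 = (\mathcal{E}_1 \cap \mathcal{E}_2) \cup (\mathcal{E}_1 \cap \mathcal{E}_2^c)$, which is a disjoint union since $\mathcal{E}_2$ and $\mathcal{E}_2^c$ partition the sample space, finite additivity of the probability measure gives the identity $\prob(\mathcal{E}_1) = \prob(\mathcal{E}_1 \cap \mathcal{E}_2) + \prob(\mathcal{E}_1 \cap \mathcal{E}_2^c)$.

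The second and final step is to bound the last term by monotonicity: since $\mathcal{E}_1 \cap \mathcal{E}_2^c \subseteq \mathcal{E}_2^c$, we have $\prob(\mathcal{E}_1 \cap \mathcal{E}_2^c) \leq \prob(\mathcal{E}_2^c)$. Substituting into the identity above yields the claimed inequality. Equivalently, one could observe $\mathcal{E}_1 \subseteq (\mathcal{E}_1 \cap \mathcal{E}_2) \cup \mathcal{E}_2^c$ and apply subadditivity (the union bound) in a single line, but the disjoint-decomposition route makes the two ingredients (additivity and monotonicity) most transparent.

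There is no real obstacle here, as this is a standard one-line measure-theoretic fact; the lemma is being recorded simply as a convenient tool to be invoked repeatedly in the subsequent proofs of Proposition \ref{prop_gen} and Theorem \ref{thm_robust}, where $\mathcal{E}_2$ will typically be a high-probability ``good'' event (e.g.\ concentration of the Gaussian noise) on which the classifier's behavior can be controlled deterministically.
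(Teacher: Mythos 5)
Your proof is correct and follows essentially the same route as the paper: decompose $\mathcal{E}_1$ over $\mathcal{E}_2$ and $\mathcal{E}_2^c$, then bound $\prob(\mathcal{E}_1\cap\mathcal{E}_2^c)$ by $\prob(\mathcal{E}_2^c)$ via monotonicity. The only cosmetic difference is that you invoke additivity of the disjoint union where the paper is content with subadditivity, which changes nothing of substance.
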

\begin{proof}
    $\prob(\mathcal{E}_1)=\prob\lp\mathcal{E}_1\cap\lp \mathcal{E}_2\cup \mathcal{E}_2^c\rp\rp=\prob\lp\lp \mathcal{E}_1\cap\mathcal{E}_2\rp\cup\lp \mathcal{E}_1\cap\mathcal{E}_2^c\rp\rp\leq \prob(\mathcal{E}_1\cap \mathcal{E}_2)+\prob(\mathcal{E}_1\cap\mathcal{E}_2^c)\leq \prob(\mathcal{E}_1\cap \mathcal{E}_2)+\prob(\mathcal{E}_2^c)$.
\end{proof}
\begin{lemma}[Hoeffding inequality]\label{lem_heoffding}
    For any unit vector $\boldsymbol{\mu}\in\mathbb{S}^{D-1}$, we have
    \be
        \prob_{\boldsymbol{\varepsilon}\sim \mathcal{N}\lp 0,\frac{\var^2}{D}\boldsymbol{I}_D\rp}\lp |\lan \boldsymbol{\mu},\boldsymbol{\varepsilon}\ran|>t\rp\leq 2\exp\lp -\frac{CDt^2}{\var^2}\rp\,,
    \ee
    for some constant $C>0$.
\end{lemma}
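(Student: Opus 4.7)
The plan is to reduce the claim to a one-dimensional Gaussian tail bound. First I would observe that since $\boldsymbol{\varepsilon}\sim \mathcal{N}\lp 0,\tfrac{\var^2}{D}\boldsymbol{I}_D\rp$, the inner product $\lan \boldsymbol{\mu},\boldsymbol{\varepsilon}\ran$ is a linear combination of independent Gaussian coordinates, hence itself a centered Gaussian. Its variance is $\boldsymbol{\mu}^\top \tfrac{\var^2}{D}\boldsymbol{I}_D\, \boldsymbol{\mu} = \tfrac{\var^2}{D}\|\boldsymbol{\mu}\|^2 = \tfrac{\var^2}{D}$, using the assumption $\boldsymbol{\mu}\in\mathbb{S}^{D-1}$. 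So the problem collapses to bounding $\prob(|Z|>t)$ for $Z\sim \mathcal{N}(0,\var^2/D)$.

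Next I would invoke the standard Gaussian tail bound $\prob(|Z|>t)\leq 2\exp\!\lp -t^2/(2\sigma^2)\rp$, which itself follows from a Chernoff argument: $\prob(Z>t)\leq \inf_{s>0}\mathbb{E}[e^{sZ}]e^{-st} = \inf_{s>0}\exp(s^2\sigma^2/2 - st)$, optimizing at $s=t/\sigma^2$ to get $\exp(-t^2/(2\sigma^2))$, then symmetrize with a union bound on $\{Z>t\}\cup\{-Z>t\}$. Plugging in $\sigma^2 = \var^2/D$ yields
\be
    \prob\lp|\lan \boldsymbol{\mu},\boldsymbol{\varepsilon}\ran|>t\rp \leq 2\exp\lp -\frac{Dt^2}{2\var^2}\rp,
\ee
so the claim holds with the explicit constant $C = 1/2$. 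There is no real obstacle here; the lemma is essentially a restatement of the fact that Gaussian random vectors are $\tfrac{\var^2}{D}$-sub-Gaussian in every unit direction, and the ``Hoeffding'' label in the statement refers to this sub-Gaussian concentration viewpoint rather than Hoeffding's original discrete bounded-variable inequality.
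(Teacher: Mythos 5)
Your proof is correct: projecting the isotropic Gaussian onto the unit vector $\boldsymbol{\mu}$ gives a one-dimensional $\mathcal{N}(0,\var^2/D)$ variable, and the Chernoff/sub-Gaussian tail bound yields the stated inequality with the explicit constant $C=1/2$. The paper does not spell out a proof of this auxiliary lemma (it is used as a standard fact), and your argument is exactly the standard one it implicitly relies on, so there is nothing to add.
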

\subsection{Proof for Proposition \ref{prop_gen}}
\begin{customprop}{1}[Test error on clean data, restated] 
    Given classifiers $F(\boldsymbol{x})$, $F^{(p)}(\boldsymbol{x})$ defined in \eqref{eq_f0},\eqref{eq_fp}, and a test sample $(x,y)\sim \mathcal{D}_{X,Y}$, we have, for some constant $C>0$, 
    \begin{align*}
        &\;\prob\lp F(\boldsymbol{x})y>0\rp \geq 1-4\exp\lp -\frac{CD}{4\var^2K}\rp\\
        &\;\prob\lp F^{(p)}(\boldsymbol{x})y>0\rp\geq 1-2(K+1)\exp\lp -\frac{CD}{\var^2K^2}\rp,\ \forall p\geq 1\,.
    \end{align*}
\end{customprop}
\begin{proof}
    The proof is split into parts: We first prove the bound for $F(\boldsymbol{x})$, then show the one for $F^{(p)}(\boldsymbol{x})$. But in both cases we have 
    \be
        \prob_{(\boldsymbol{x},y)\sim \mathcal{D}_{X,Y}}\lp G(\boldsymbol{x})y>0\rp = \sum_{k=1}^K\prob_{(\boldsymbol{x},y)\sim \mathcal{D}_{X,Y}}\lp G(\boldsymbol{x})y>0 \mid z=k\rp\prob\lp z=k\rp\,,
    \ee
    where $G$ can be either $F$ or $F^{(p)}$. Thus, it suffices to show
    \be
        \prob_{(\boldsymbol{x},y)\sim \mathcal{D}_{X,Y}}\lp G(\boldsymbol{x})y>0 \mid z=k\rp\geq 1-4\exp\lp -\frac{CD}{\var^2K}\rp, \forall k=1,\cdots,K\,.
    \ee
    \paragraph{Bound for $F(\boldsymbol{x})$} we start with the case of $G$ being $F$. When $k\leq K_1$, we have 
    \be
        \prob_{(\boldsymbol{x},y)\sim \mathcal{D}_{X,Y}}\lp F(\boldsymbol{x})y>0 \mid z=k\rp=\prob_{\boldsymbol{\varepsilon}\sim \mathcal{N}\lp 0,\frac{\var^2}{D}\boldsymbol{I}_D\rp}\lp F(\boldsymbol{\mu}_k+\boldsymbol{\varepsilon})>0\rp\,,
    \ee
    and then,
    \begin{align}
        &\;\prob_{\boldsymbol{\varepsilon}\sim \mathcal{N}\lp 0,\frac{\var^2}{D}\boldsymbol{I}_D\rp}\lp F(\boldsymbol{\mu}_k+\boldsymbol{\varepsilon})>0\rp \nonumber\\
        =&\;1-\prob_{\boldsymbol{\varepsilon}\sim \mathcal{N}\lp 0,\frac{\var^2}{D}\boldsymbol{I}_D\rp}\lp F(\boldsymbol{\mu}_k+\boldsymbol{\varepsilon})<0\rp\nonumber\\
        =&\;1-\prob_{\boldsymbol{\varepsilon}\sim \mathcal{N}\lp 0,\frac{\var^2}{D}\boldsymbol{I}_D\rp}\lp \sqrt{K_1}\act\lp \frac{1}{\sqrt{K_1}}+\lan \boldsymbol{\varepsilon},\bar{\boldsymbol{\mu}}_+\ran\rp-\sqrt{K_2}\act\lp \lan \boldsymbol{\varepsilon},\bar{\boldsymbol{\mu}}_-\ran\rp<0, \mathcal{E}_1\rp \nonumber\\
        \geq &\;1-\prob_{\boldsymbol{\varepsilon}\sim \mathcal{N}\lp 0,\frac{\var^2}{D}\boldsymbol{I}_D\rp}\lp \sqrt{K_1}\lp \frac{1}{\sqrt{K_1}}-\lvt \lan \boldsymbol{\varepsilon},\bar{\boldsymbol{\mu}}_+\ran\rvt \rp-\sqrt{K_2} \lvt \lan \boldsymbol{\varepsilon},\bar{\boldsymbol{\mu}}_-\ran\rvt<0\rp \label{app_eq_ref_1}\\
        = &\;1-\prob_{\boldsymbol{\varepsilon}\sim \mathcal{N}\lp 0,\frac{\var^2}{D}\boldsymbol{I}_D\rp}\lp 1-\sqrt{K_1}\lvt \lan \boldsymbol{\varepsilon},\bar{\boldsymbol{\mu}}_+\ran\rvt -\sqrt{K_2} \lvt \lan \boldsymbol{\varepsilon},\bar{\boldsymbol{\mu}}_-\ran\rvt<0\rp \nonumber\\
        \geq &\;1-\prob_{\boldsymbol{\varepsilon}\sim \mathcal{N}\lp 0,\frac{\var^2}{D}\boldsymbol{I}_D\rp}\lp \lvt \lan \boldsymbol{\varepsilon},\bar{\boldsymbol{\mu}}_+\ran\rvt>\frac{1}{2\sqrt{K_1}}\rp  -\prob_{\boldsymbol{\varepsilon}\sim \mathcal{N}\lp 0,\frac{\var^2}{D}\boldsymbol{I}_D\rp}\lp \lvt \lan \boldsymbol{\varepsilon},\bar{\boldsymbol{\mu}}_-\ran\rvt>\frac{1}{2\sqrt{K_2}}\rp \nonumber\\
        \geq &\;1-2\exp\lp -\frac{CD}{4\var^2 K_1}\rp -2\exp\lp -\frac{CD}{4\var^2 K_2}\rp  \leq 1-4\exp\lp -\frac{CD}{4\var^2K}\rp\,,
    \end{align}
    where \eqref{app_eq_ref_1} uses the fact that $\act(x)$ is non-decreasing w.r.t. $x$., and that $\act(x)\leq |x|$. The last line uses Lemma \ref{lem_heoffding}. The proof of the case $k>K_1$ is identical to the one above by the symmetry of the problem. 
    \paragraph{Bound for $F^{(p)}(\boldsymbol{x})$}
    When $k\leq K_1$, we have, again,
    \be
        \prob_{(\boldsymbol{x},y)\sim \mathcal{D}_{X,Y}}\lp F^{(p)}(\boldsymbol{x})y>0 \mid z=k\rp=\prob_{\boldsymbol{\varepsilon}\sim \mathcal{N}\lp 0,\frac{\var^2}{D}\boldsymbol{I}_D\rp}\lp F^{(p)}(\boldsymbol{\mu}_k+\boldsymbol{\varepsilon})>0\rp\,,
    \ee
    we define the event $\mathcal{E}_1:=\lb|\lan \boldsymbol{\mu}_k, \boldsymbol{\varepsilon}\ran|<1\rb$. Then by Lemma \ref{lem_split_prob},
    \begin{align}
        &\;\prob_{\boldsymbol{\varepsilon}\sim \mathcal{N}\lp 0,\frac{\var^2}{D}\boldsymbol{I}_D\rp}\lp F^{(p)}(\boldsymbol{\mu}_k+\boldsymbol{\varepsilon})>0\rp \nonumber\\
        =&\;1-\prob_{\boldsymbol{\varepsilon}\sim \mathcal{N}\lp 0,\frac{\var^2}{D}\boldsymbol{I}_D\rp}\lp F^{(p)}(\boldsymbol{\mu}_k+\boldsymbol{\varepsilon})<0\rp\nonumber\\
        \leq &\;1-\prob_{\boldsymbol{\varepsilon}\sim \mathcal{N}\lp 0,\frac{\var^2}{D}\boldsymbol{I}_D\rp}\lp F^{(p)}(\boldsymbol{\mu}_k+\boldsymbol{\varepsilon})<0, \mathcal{E}_1\rp -\prob\lp \mathcal{E}_1^c\rp\nonumber\\
        \leq &\;1-\prob_{\boldsymbol{\varepsilon}\sim \mathcal{N}\lp 0,\frac{\var^2}{D}\boldsymbol{I}_D\rp}\lp 
        \act^p\lp 1+\lan \boldsymbol{\mu}_k,\boldsymbol{\varepsilon}\ran\rp +\sum_{l\leq K_1,l\neq k}\act^p\lp \lan \boldsymbol{\mu}_l,\boldsymbol{\varepsilon}\ran\rp -\sum_{K_1<l\leq K_2}\act^p\lp \lan \boldsymbol{\mu}_l,\boldsymbol{\varepsilon}\ran\rp<0
        , \mathcal{E}_1\rp -\prob\lp \mathcal{E}_1^c\rp\nonumber\\
        \leq &\;1-\prob_{\boldsymbol{\varepsilon}\sim \mathcal{N}\lp 0,\frac{\var^2}{D}\boldsymbol{I}_D\rp}\lp 
        \lp 1-\lvt \lan \boldsymbol{\mu}_k,\boldsymbol{\varepsilon}\ran\rvt\rp^p-\sum_{l\neq k}\lvt \lan \boldsymbol{\mu}_l,\boldsymbol{\varepsilon}\ran\rvt^p<0
        , \mathcal{E}_1\rp -\prob\lp \mathcal{E}_1^c\rp\label{app_eq_ref_2}\\
        \leq &\;1-\prob_{\boldsymbol{\varepsilon}\sim \mathcal{N}\lp 0,\frac{\var^2}{D}\boldsymbol{I}_D\rp}\lp 
        1-\lvt \lan \boldsymbol{\mu}_k,\boldsymbol{\varepsilon}\ran\rvt-\lp\sum_{l\neq k}\lvt \lan \boldsymbol{\mu}_l,\boldsymbol{\varepsilon}\ran\rvt^p\rp^{1/p}<0, \mathcal{E}_1
        \rp -\prob\lp \mathcal{E}_1^c\rp\nonumber\\
        \leq &\;1-\prob_{\boldsymbol{\varepsilon}\sim \mathcal{N}\lp 0,\frac{\var^2}{D}\boldsymbol{I}_D\rp}\lp 
        1-\lvt \lan \boldsymbol{\mu}_k,\boldsymbol{\varepsilon}\ran\rvt-\sum_{l\neq k}\lvt \lan \boldsymbol{\mu}_l,\boldsymbol{\varepsilon}\ran\rvt<0, \mathcal{E}_1
        \rp -\prob\lp \mathcal{E}_1^c\rp\label{app_eq_ref_3}\\
        \leq &\;1-\prob_{\boldsymbol{\varepsilon}\sim \mathcal{N}\lp 0,\frac{\var^2}{D}\boldsymbol{I}_D\rp}\lp 
        1-\lvt \lan \boldsymbol{\mu}_k,\boldsymbol{\varepsilon}\ran\rvt-\sum_{l\neq k}\lvt \lan \boldsymbol{\mu}_l,\boldsymbol{\varepsilon}\ran\rvt<0
        \rp -\prob\lp \mathcal{E}_1^c\rp\nonumber\\
        \leq &\;1-\sum_{k=1}^K\prob_{\boldsymbol{\varepsilon}\sim \mathcal{N}\lp 0,\frac{\var^2}{D}\boldsymbol{I}_D\rp}\lp \lvt \lan \boldsymbol{\mu}_k,\boldsymbol{\varepsilon}\ran\rvt>\frac{1}{K}\rp -\prob\lp \mathcal{E}_1^c\rp\nonumber\\
        \leq &\;1-2K\exp\lp -\frac{CD}{\var^2 K^2}\rp-2\exp\lp -\frac{CD}{\var^2}\rp\leq 1-2(K+1)\exp\lp -\frac{CD}{\var^2 K^2}\rp\,,
    \end{align}
    where \eqref{app_eq_ref_2} uses the fact that under the event $\mathcal{E}_1$, one has $\act^p\lp 1+\lan \boldsymbol{\mu}_k,\boldsymbol{\varepsilon}\ran\rp\geq \lp 1-\lvt \lan \boldsymbol{\mu}_k,\boldsymbol{\varepsilon}\ran\rvt\rp^p$, and \eqref{app_eq_ref_3} uses the fact that $\|x\|_p\leq \|x\|_1$ for any vector $x$ and $p\geq 1$. The last line uses Lemma \ref{lem_heoffding}. The proof of the case $k>K_1$ is identical to the one above by the symmetry of the problem. 
\end{proof}
\subsection{Proof for Theorem \ref{thm_robust}}
\begin{customthm}{1}[$l_2$-Adversarial Robustness, restated]
    Given classifiers $F(\boldsymbol{x})$, $F^{(p)}(\boldsymbol{x})$ defined in \eqref{eq_f0},\eqref{eq_fp}, and a test sample $(x,y)\sim \mathcal{D}_{X,Y}$, the following statement is true for some constant $C>0$:
    \begin{itemize}[leftmargin=9pt,topsep=0pt,parsep=0pt]
        \item $\lp \text{Non-robustness of } F(\boldsymbol{x}) \text{ against }\mathcal{O}\lp\frac{1}{\sqrt{K}}\rp \text{-attack} \rp$ We let $\boldsymbol{d}_0:=\frac{\sqrt{K_1}\bar{\boldsymbol{\mu}}_+-\sqrt{K_2}\bar{\boldsymbol{\mu}}_-}{\sqrt{K}}\in\mathbb{S}^{D-1}$, then for some $\rho>0$,
        \begin{align*}
            &\;\prob\!\lp F\!\lp \boldsymbol{x}-\frac{y(1+\rho)}{\sqrt{K}}\boldsymbol{d}_0\rp y\!>\!0\rp \!\leq\! 2\exp\!\lp -\frac{CD\rho^2}{K\var^2}\rp\!.
        \end{align*}
        \item $\lp \text{Robustness of } F^{(p)}(\boldsymbol{x}) \text{ against }\mathcal{O}\lp 1\rp \text{-attack }\rp$ We let $p\geq 2$, then for some $0\leq\delta<\sqrt{2}$,
        \begin{align*}
            &\;\prob\lp \min_{\|\boldsymbol{d}\|\leq 1} F^{(p)}\lp \boldsymbol{x}+\frac{\sqrt{2}-\delta}{2}\boldsymbol{d}\rp y>0\rp\geq 1-2(K+1)\exp\lp -\frac{CD\delta^2}{2K^2\var^2}\rp\,.
        \end{align*}
    \end{itemize}    
\end{customthm}
\begin{proof}
    This proof is split into two parts. One for $F(\boldsymbol{x})$ and another for $F^{(p)}(\boldsymbol{x})$.
    \paragraph{Non-robustness of $F(\boldsymbol{x})$} Since
    \be
        \prob_{(\boldsymbol{x},y)\sim \mathcal{D}_{X,Y}}\lp F\lp \boldsymbol{x}-\frac{y(1+\rho)}{\sqrt{K}}\boldsymbol{d}_0\rp y>0\rp = \sum_{k=1}^K\prob_{(\boldsymbol{x},y)\sim \mathcal{D}_{X,Y}}\lp F\lp \boldsymbol{x}-\frac{y(1+\rho)}{\sqrt{K}}\boldsymbol{d}_0\rp y>0 \mid z=k\rp\prob\lp z=k\rp\,,
    \ee
    It suffices to show
    \be
        \prob_{(\boldsymbol{x},y)\sim \mathcal{D}_{X,Y}}\lp F\lp \boldsymbol{x}-\frac{y(1+\rho)}{\sqrt{K}}\boldsymbol{d}_0\rp y>0 \mid z=k \rp \leq 2\exp\lp -\frac{CD^2K}{(\rho-1)^2\var^2}\rp\,, \forall k\leq K
    \ee
    When $k\leq K_1$, we have
    \begin{align}
        &\;\prob_{(\boldsymbol{x},y)\sim \mathcal{D}_{X,Y}}\lp F\lp \boldsymbol{x}-\frac{y(1+\rho)}{\sqrt{K}}\boldsymbol{d}_0\rp y>0 \mid z=k \rp\nonumber\\
        =&\; \prob_{\boldsymbol{\varepsilon}\sim \mathcal{N}\lp 0,\frac{\var^2}{D}\boldsymbol{I}_D\rp} \lp F\lp \boldsymbol{\mu}_k +\boldsymbol{\varepsilon}-\frac{1+\rho}{\sqrt{K}}\boldsymbol{d}_0\rp >0 \rp\nonumber\\
        =&\; \prob_{\boldsymbol{\varepsilon}\sim \mathcal{N}\lp 0,\frac{\var^2}{D}\boldsymbol{I}_D\rp} \lp 
        \sqrt{K_1}\act\lp \frac{1}{\sqrt{K_1}}+\lan \boldsymbol{\varepsilon}, \bar{\boldsymbol{\mu}}_+\ran -\frac{(1+\rho) \sqrt{K_1}}{K}\rp -\sqrt{K_2}\act\lp \lan \boldsymbol{\varepsilon}, \bar{\boldsymbol{\mu}}_-\ran+ \frac{(1+\rho) \sqrt{K_2}}{K}\rp
        >0 \rp\,.\label{app_eq_robust_1}
    \end{align}
    To proceed, we define the event $\mathcal{E}_2:=\{ \lan \boldsymbol{\varepsilon}, \bar{\boldsymbol{\mu}}_+\ran-\frac{(1+\rho) \sqrt{K_1}}{\sqrt{K}}+\frac{1}{\sqrt{K_1}}\geq 0\}$, then by Lemma \ref{lem_split_prob}
    \begin{align}
        &\;\eqref{app_eq_robust_1}\nonumber\\
        \leq &\;\prob_{\boldsymbol{\varepsilon}\sim \mathcal{N}\lp 0,\frac{\var^2}{D}\boldsymbol{I}_D\rp} \lp 
        \sqrt{K_1}\act\lp \frac{1}{\sqrt{K_1}}+\lan \boldsymbol{\varepsilon}, \bar{\boldsymbol{\mu}}_+\ran -\frac{(1+\rho) \sqrt{K_1}}{K}\rp -\sqrt{K_2}\act\lp \lan \boldsymbol{\varepsilon}, \bar{\boldsymbol{\mu}}_-\ran+ \frac{(1+\rho) \sqrt{K_2}}{K}\rp
        >0,  \mathcal{E}_2\rp\nonumber\\
        &\; \qquad\qquad +\prob\lp \sqrt{K_1}\act\lp \frac{1}{\sqrt{K_1}}+\lan \boldsymbol{\varepsilon}, \bar{\boldsymbol{\mu}}_+\ran -\frac{(1+\rho) \sqrt{K_1}}{K}\rp -\sqrt{K_2}\act\lp \lan \boldsymbol{\varepsilon}, \bar{\boldsymbol{\mu}}_-\ran+ \frac{(1+\rho) \sqrt{K_2}}{K}\rp
        >0, \mathcal{E}_2^c\rp\nonumber\\
        =&\; \prob_{\boldsymbol{\varepsilon}\sim \mathcal{N}\lp 0,\frac{\var^2}{D}\boldsymbol{I}_D\rp} \lp 
        \sqrt{K_1}\act\lp \frac{1}{\sqrt{K_1}}+\lan \boldsymbol{\varepsilon}, \bar{\boldsymbol{\mu}}_+\ran -\frac{(1+\rho) \sqrt{K_1}}{K}\rp -\sqrt{K_2}\act\lp \lan \boldsymbol{\varepsilon}, \bar{\boldsymbol{\mu}}_-\ran+ \frac{(1+\rho) \sqrt{K_2}}{K}\rp
        >0,  \mathcal{E}_2\rp\label{app_eq_ref_4}\\
        \leq &\;\prob_{\boldsymbol{\varepsilon}\sim \mathcal{N}\lp 0,\frac{\var^2}{D}\boldsymbol{I}_D\rp} \lp 
        1+ \sqrt{K_1}\lan \boldsymbol{\varepsilon}, \bar{\boldsymbol{\mu}}_+\ran - \frac{(1+\rho) K_1}{K} -\sqrt{K_2} \lan \boldsymbol{\varepsilon}, \bar{\boldsymbol{\mu}}_-\ran - \frac{(1+\rho) K_2}{K} >0
        ,  \mathcal{E}_2\rp\label{app_eq_ref_5}\\
        \leq &\;\prob_{\boldsymbol{\varepsilon}\sim \mathcal{N}\lp 0,\frac{\var^2}{D}\boldsymbol{I}_D\rp} \lp 
        1+ \sqrt{K}\lvt\lan \boldsymbol{\varepsilon}, \boldsymbol{d}_0\ran\rvt - (1+\rho) >0\rp\nonumber\\
        \leq &\;\prob_{\boldsymbol{\varepsilon}\sim \mathcal{N}\lp 0,\frac{\var^2}{D}\boldsymbol{I}_D\rp} \lp 
        \lvt\lan \boldsymbol{\varepsilon}, \boldsymbol{d}_0\ran\rvt >\frac{\rho}{\sqrt{K}}\rp\leq 2\exp\lp -\frac{CD\rho^2}{K\var^2}\rp\,,
    \end{align}
    where \eqref{app_eq_ref_4} is because the second probability is $0$, and \eqref{app_eq_ref_5} uses again the monotonicity of ReLU $\act(x)$. The last line uses Lemma \ref{lem_heoffding}. The proof of the case $k>K_1$ is identical to the one above by the symmetry of the problem. 
    \paragraph{Robustness of $F^{(p)}(\boldsymbol{x})$}
    It suffices to show
    \be
        \prob_{(\boldsymbol{x},y)\sim \mathcal{D}_{X,Y}}\lp \min_{\|\boldsymbol{d}\|\leq 1} F^{(p)}\lp \boldsymbol{x}+\frac{\sqrt{2}-\delta}{2}d\rp y<0 \mid z=k \rp \leq 2(K_2+2)\exp\lp -\frac{CD\delta^2}{2(K_2+1)^2\var^2}\rp\,.
    \ee
    When $k\leq K_1$, we have
    \begin{align}
        &\;\prob_{(\boldsymbol{x},y)\sim \mathcal{D}_{X,Y}}\lp \min_{\|\boldsymbol{d}\|\leq 1} F^{(p)}\lp \boldsymbol{x}+\frac{\sqrt{2}-\delta}{2}d\rp y<0 \mid z=k \rp\nonumber\\
        =&\;\prob_{\boldsymbol{\varepsilon}\sim \mathcal{N}\lp 0,\frac{\var^2}{D}\boldsymbol{I}_D\rp}\lp \min_{\|\boldsymbol{d}\|\leq 1} F^{(p)}\lp \boldsymbol{\mu}_k+\boldsymbol{\varepsilon}+\frac{\sqrt{2}-\delta}{2}d\rp <0\rp\nonumber\\
        =&\; \prob_{\boldsymbol{\varepsilon}\sim \mathcal{N}\lp 0,\frac{\var^2}{D}\boldsymbol{I}_D\rp}\lp 
        \min_{\|\boldsymbol{d}\|\leq 1} 
        \lhp\act^p\lp 1+\lan \boldsymbol{\mu}_k, \boldsymbol{\varepsilon}\ran +\frac{\sqrt{2}-\delta}{2}\lan \boldsymbol{d},\boldsymbol{\mu}_k\ran\rp\right.\right.\nonumber\\
        &\; \qquad\qquad\qquad\qquad+ \sum_{l\neq k, 1\leq l\leq K_1} \act^p \lp \lan \boldsymbol{\mu}_l,\boldsymbol{\varepsilon}\ran +\frac{\sqrt{2}-\delta}{2}\lan \boldsymbol{d},\boldsymbol{\mu}_l\ran\rp\nonumber\\
        &\; \qquad\qquad\qquad\qquad- \left.\left.\sum_{K_1+1\leq l\leq K_2} \act^p \lp \lan \boldsymbol{\mu}_l,\boldsymbol{\varepsilon}\ran +\frac{\sqrt{2}-\delta}{2}\lan \boldsymbol{d},\boldsymbol{\mu}_l\ran\rp\rhp<0\rp\label{app_eq_robust_2}
    \end{align}
    To proceed, we define the event 
    \be 
    \mathcal{E}_3:=\lb 1+\lan \boldsymbol{\mu}_k, \boldsymbol{\varepsilon}\ran +\frac{\sqrt{2}-\delta}{2}\lan \boldsymbol{d},\boldsymbol{\mu}_k\ran\geq 0,\forall d\in\mathbb{S}^{D-1}\rb\,,\ee
    and for ease of presentation, we write
    \begin{align*}
        G(\boldsymbol{\varepsilon},d)&\;:=\act^p\lp 1+\lan \boldsymbol{\mu}_k, \boldsymbol{\varepsilon}\ran +\frac{\sqrt{2}-\delta}{2}\lan \boldsymbol{d},\boldsymbol{\mu}_k\ran\rp + \sum_{l\neq k, 1\leq l\leq K_1} \act^p \lp \lan \boldsymbol{\mu}_l,\boldsymbol{\varepsilon}\ran +\frac{\sqrt{2}-\delta}{2}\lan \boldsymbol{d},\boldsymbol{\mu}_l\ran\rp\\
        &\;\qquad\qquad\qquad\qquad\qquad\qquad\qquad\qquad -\sum_{l=K_1+1}^{K_2} \act^p \lp \lan \boldsymbol{\mu}_l,\boldsymbol{\varepsilon}\ran +\frac{\sqrt{2}-\delta}{2}\lan \boldsymbol{d},\boldsymbol{\mu}_l\ran\rp\,.
    \end{align*}
    Then, by Lemma \ref{lem_split_prob},
    \begin{align}
        \eqref{app_eq_robust_2}&\;= \prob_{\boldsymbol{\varepsilon}\sim \mathcal{N}\lp 0,\frac{\var^2}{D}\boldsymbol{I}_D\rp}\lp \min_{\|\boldsymbol{d}\|\leq 1} G(\boldsymbol{\varepsilon},d) <0\rp\nonumber\\
        &\;\leq \prob_{\boldsymbol{\varepsilon}\sim \mathcal{N}\lp 0,\frac{\var^2}{D}\boldsymbol{I}_D\rp}\lp \min_{\|\boldsymbol{d}\|\leq 1} G(\boldsymbol{\varepsilon},d) <0,\mathcal{E}_3\rp +\prob\lp \mathcal{E}_3^c\rp\,.\label{app_eq_robust_3}
    \end{align}
    Now under the event $\mathcal{E}_3$, we can lower bound $G(\boldsymbol{\varepsilon},d)$ by
    \begin{align}
        &\;G(\boldsymbol{\varepsilon},d)\nonumber\\
        =&\; \act^p\lp 1+\lan \boldsymbol{\mu}_k, \boldsymbol{\varepsilon}\ran +\frac{\sqrt{2}-\delta}{2}\lan \boldsymbol{d},\boldsymbol{\mu}_k\ran\rp + \sum_{l\neq k, 1\leq l\leq K_1} \act^p \lp \lan \boldsymbol{\mu}_l,\boldsymbol{\varepsilon}\ran +\frac{\sqrt{2}-\delta}{2}\lan \boldsymbol{d},\boldsymbol{\mu}_l\ran\rp\nonumber\\
        &\;\qquad\qquad\qquad\qquad\qquad\qquad\qquad\qquad -\sum_{l=K_1+1}^{K_2} \act^p \lp \lan \boldsymbol{\mu}_l,\boldsymbol{\varepsilon}\ran +\frac{\sqrt{2}-\delta}{2}\lan \boldsymbol{d},\boldsymbol{\mu}_l\ran\rp\,.\\
        \geq &\; \lp 1+\lan \boldsymbol{\mu}_k, \boldsymbol{\varepsilon}\ran +\frac{\sqrt{2}-\delta}{2}\lan \boldsymbol{d},\boldsymbol{\mu}_k\ran\rp^p - \sum_{l=K_1+1}^{K_2}  \lp \lvt\lan \boldsymbol{\mu}_l,\boldsymbol{\varepsilon}\ran\rvt +\frac{\sqrt{2}-\delta}{2}\lvt\lan \boldsymbol{d},\boldsymbol{\mu}_l\ran\rvt\rp^p\,.
    \end{align}
    Thus, we have
    \begin{align}
        \eqref{app_eq_robust_3}\nonumber\\
        \leq &\;\prob_{\boldsymbol{\varepsilon}\sim \mathcal{N}\lp 0,\frac{\var^2}{D}\boldsymbol{I}_D\rp}\lp \min_{\|\boldsymbol{d}\|\leq 1} \lp 1+\lan \boldsymbol{\mu}_k, \boldsymbol{\varepsilon}\ran +\frac{\sqrt{2}-\delta}{2}\lan \boldsymbol{d},\boldsymbol{\mu}_k\ran\rp^p -\!\! \sum_{l=K_1+1}^{K_2}  \lp \lvt\lan \boldsymbol{\mu}_l,\boldsymbol{\varepsilon}\ran\rvt +\frac{\sqrt{2}-\delta}{2}\lvt\lan \boldsymbol{d},\boldsymbol{\mu}_l\ran\rvt\rp^p <0,\mathcal{E}_3\rp\nonumber\\
        &\qquad\qquad\qquad+\prob\lp \mathcal{E}_3^c\rp\nonumber\\
        =&\; \prob_{\boldsymbol{\varepsilon}\sim \mathcal{N}\lp 0,\frac{\var^2}{D}\boldsymbol{I}_D\rp}\lp \min_{\|\boldsymbol{d}\|\leq 1} 1+\lan \boldsymbol{\mu}_k, \boldsymbol{\varepsilon}\ran +\frac{\sqrt{2}-\delta}{2}\lan \boldsymbol{d},\boldsymbol{\mu}_k\ran -\!\! \lp\sum_{l=K_1+1}^{K_2}  \lp \lvt\lan \boldsymbol{\mu}_l,\boldsymbol{\varepsilon}\ran\rvt +\frac{\sqrt{2}-\delta}{2}\lvt\lan \boldsymbol{d},\boldsymbol{\mu}_l\ran\rvt\rp^p\rp^\frac{1}{p} <0,\mathcal{E}_3\rp\nonumber\\
        &\qquad\qquad\qquad+\prob\lp \mathcal{E}_3^c\rp\nonumber\\
        \leq&\; \prob_{\boldsymbol{\varepsilon}\sim \mathcal{N}\lp 0,\frac{\var^2}{D}\boldsymbol{I}_D\rp}\lp \underbrace{\min_{\|\boldsymbol{d}\|\leq 1} 1 +\frac{\sqrt{2}-\delta}{2}\lan \boldsymbol{d},\boldsymbol{\mu}_k\ran - \frac{\sqrt{2}-\delta}{2}\lp\sum_{l=K_1+1}^{K_2}  \lp \lvt\lan \boldsymbol{d},\boldsymbol{\mu}_l\ran\rvt\rp^p\rp^\frac{1}{p}}_{M^*(\delta)}\right.\nonumber\\
        &\;\qquad\qquad\qquad\qquad\qquad\qquad\left.-\lp\sum_{l=K_1+1}^{K_2}  \lp \lvt\lan \boldsymbol{\mu}_l,\boldsymbol{\varepsilon}\ran\rvt \rp^p\rp^\frac{1}{p} -\lvt\lan \boldsymbol{\mu}_k, \boldsymbol{\varepsilon}\ran\rvt<0,\mathcal{E}_3\rp+\prob\lp \mathcal{E}_3^c\rp\label{app_eq_ref_6}\\
        \leq &\; \prob_{\boldsymbol{\varepsilon}\sim \mathcal{N}\lp 0,\frac{\var^2}{D}\boldsymbol{I}_D\rp}\lp \lvt\lan \boldsymbol{\mu}_k, \boldsymbol{\varepsilon}\ran\rvt+ \lp\sum_{l=K_1+1}^{K_2}  \lp \lvt\lan \boldsymbol{\mu}_l,\boldsymbol{\varepsilon}\ran\rvt \rp^p\rp^\frac{1}{p} >M^*(\delta)\rp+\prob\lp \mathcal{E}_3^c\rp\nonumber\\
        \leq &\; \prob_{\boldsymbol{\varepsilon}\sim \mathcal{N}\lp 0,\frac{\var^2}{D}\boldsymbol{I}_D\rp}\lp \lvt\lan \boldsymbol{\mu}_k, \boldsymbol{\varepsilon}\ran\rvt+ \sum_{l=K_1+1}^{K_2}   \lvt\lan \boldsymbol{\mu}_l,\boldsymbol{\varepsilon}\ran\rvt  >M^*(\delta)\rp+\prob\lp \mathcal{E}_3^c\rp\label{app_eq_ref_7}\\
        \leq &\; \prob_{\boldsymbol{\varepsilon}\sim \mathcal{N}\lp 0,\frac{\var^2}{D}\boldsymbol{I}_D\rp}\lp \lvt\lan \boldsymbol{\mu}_k, \boldsymbol{\varepsilon}\ran\rvt >\frac{M^*(\delta)}{K_2+1}\rp  +\sum_{l=K_1+1}^{K_2} \prob_{\boldsymbol{\varepsilon}\sim \mathcal{N}\lp 0,\frac{\var^2}{D}\boldsymbol{I}_D\rp}\lp   \lvt\lan \boldsymbol{\mu}_l,\boldsymbol{\varepsilon}\ran\rvt  >\frac{M^*(\delta)}{K_2+1}\rp+\prob\lp \mathcal{E}_3^c\rp\nonumber\\
        \leq &\; \prob_{\boldsymbol{\varepsilon}\sim \mathcal{N}\lp 0,\frac{\var^2}{D}\boldsymbol{I}_D\rp}\lp \lvt\lan \boldsymbol{\mu}_k, \boldsymbol{\varepsilon}\ran\rvt >\frac{M^*(\delta)}{K_2+1}\rp  +\sum_{l=K_1+1}^{K_2} \prob_{\boldsymbol{\varepsilon}\sim \mathcal{N}\lp 0,\frac{\var^2}{D}\boldsymbol{I}_D\rp}\lp   \lvt\lan \boldsymbol{\mu}_l,\boldsymbol{\varepsilon}\ran\rvt  >\frac{M^*(\delta)}{K_2+1}\rp\nonumber\\
        &\;\qquad\qquad\qquad\qquad\qquad+\prob_{\boldsymbol{\varepsilon}\sim \mathcal{N}\lp 0,\frac{\var^2}{D}\boldsymbol{I}_D\rp}\lp \lvt\lan \boldsymbol{\mu}_k, \boldsymbol{\varepsilon}\ran\rvt >1-\frac{\sqrt{2}}{2}\rp\nonumber\\
        \leq &\; 2(K_2+1)\exp\lp -\frac{CD(M^*(\delta))^2}{(K_2+1)^2\var^2}\rp+2\exp\lp -\frac{CD}{4\var^2}\rp\leq 2(K_2+2)\exp\lp -\frac{CD(M^*(\delta))^2}{(K_2+1)^2\var^2}\rp\,,
    \end{align}
    where \eqref{app_eq_ref_6} uses the sub-additivity of $\ell_p$ norm, and \eqref{app_eq_ref_7} uses again the inequality $\|\boldsymbol{x}\|_p\leq \|\boldsymbol{x}\|_1$ for any $\boldsymbol{x}$ and $p\geq 1$. The last line uses Lemma \ref{lem_heoffding}. The remaining thing is to show that $M^*(\delta)=\frac{\delta}{\sqrt{2}}$. First, by the property of $l_p$ norm (when $p\geq 2$), we have
    \be
        \lp\sum_{l=K_1+1}^{K_2}  \lp \lvt\lan \boldsymbol{d},\boldsymbol{\mu}_l\ran\rvt\rp^p\rp^\frac{1}{p} \leq \sqrt{\sum_{l=K_1+1}^{K_2}  \lvt\lan \boldsymbol{d},\boldsymbol{\mu}_l\ran\rvt^2}\,,
    \ee
    then 
    \begin{align}
        M^*(\delta)=&\;\min_{\|\boldsymbol{d}\|\leq 1} 1 -\frac{\sqrt{2}-\delta}{2}\lan \boldsymbol{d},\boldsymbol{\mu}_k\ran - \frac{\sqrt{2}-\delta}{2}\lp\sum_{l=K_1+1}^{K_2}  \lp \lvt\lan \boldsymbol{d},\boldsymbol{\mu}_l\ran\rvt\rp^p\rp^\frac{1}{p}\nonumber\\
        \geq &\;\min_{\|\boldsymbol{d}\|\leq 1} 1 -\frac{\sqrt{2}-\delta}{2}\lvt\lan \boldsymbol{d},\boldsymbol{\mu}_k\ran\rvt - \frac{\sqrt{2}-\delta}{2}\sqrt{\sum_{l=K_1+1}^{K_2}  \lvt\lan \boldsymbol{d},\boldsymbol{\mu}_l\ran\rvt^2}\nonumber\\
        \geq &\;\min_{\|\boldsymbol{d}\|\leq 1} 1 -\frac{\sqrt{2}-\delta}{2}\sqrt{2}\sqrt{\lvt\lan \boldsymbol{d},\boldsymbol{\mu}_k\ran\rvt^2+\sum_{l=K_1+1}^{K_2}  \lvt\lan \boldsymbol{d},\boldsymbol{\mu}_l\ran\rvt^2}\label{app_eq_ref_10}\\
        \geq &\; \min_{\|\boldsymbol{d}\|\leq 1} 1 -\frac{\sqrt{2}-\delta}{2}\sqrt{2} = \frac{\delta}{\sqrt{2}}\,,
        \label{app_eq_robust_4}
    \end{align}
    where \eqref{app_eq_ref_10} uses the fact that $\sqrt{a}+\sqrt{b}\leq \sqrt{2(a+b)}$ for any $a,b\geq 0$, and \eqref{app_eq_robust_4} uses the fact that $\boldsymbol{\mu}_1,\cdots,\boldsymbol{\mu}_K$ are an orthonormal basis, thus $\sqrt{\sum_{l=1}^K|\lan \boldsymbol{d},\boldsymbol{\mu}_l\ran|^2}\leq \|\boldsymbol{d}\|\leq 1$.
    
    We have proved $M^*(\delta)\geq \frac{\delta}{\sqrt{2}}$, and this lower bound can be attained by $\boldsymbol{d}^*=\frac{-\boldsymbol{\mu}_k+\boldsymbol{\mu}_l}{\sqrt{2}}$ for any $K_1+1\leq l\leq K_2$. Therefore $M^*(\delta)=\frac{\delta}{\sqrt{2}}$. Finally, we have
    \begin{align}
        \prob_{(\boldsymbol{x},y)\sim \mathcal{D}_{X,Y}}\lp \min_{\|\boldsymbol{d}\|\leq 1} F^{(p)}\lp \boldsymbol{x}+\frac{\sqrt{2}-\delta}{2}\boldsymbol{d}\rp y<0 \mid z=k \rp &\leq\; 2(K_2+2)\exp\lp -\frac{CD(M^*(\delta))^2}{(K_2+1)^2\var^2}\rp\nonumber\\
        &=\; 2(K_2+2)\exp\lp -\frac{CD\delta^2}{2(K_2+1)^2\var^2}\rp\nonumber\\
        &\leq\; 2(K+1)\exp\lp -\frac{CD\delta^2}{2K^2\var^2}\rp\,.
    \end{align}
    The proof of the case $k>K_1$ is identical to the one above by the symmetry of the problem.
\end{proof}
\subsection{Complementary results to Theorem \ref{thm_robust}}\label{app_robust_comp}
\begin{theorem}[$l_2$-Adversarial Robustness, complementary to Theorem \ref{thm_robust}]
    Given classifiers $F(\boldsymbol{x})$, $F^{(p)}(\boldsymbol{x})$ defined in \eqref{eq_f0},\eqref{eq_fp}, and a test sample $(x,y)\sim \mathcal{D}_{X,Y}$, the following statement is true for some constant $C>0$:
    \begin{itemize}[leftmargin=9pt,topsep=0pt,parsep=0pt]
        \item For some $0\leq \rho \leq 1$,
        \begin{align*}
            &\;\prob\lp \min_{\|\boldsymbol{d}\|\leq 1}F\lp \boldsymbol{x}+\frac{(1-\rho)}{\sqrt{K}}\boldsymbol{d}\rp y> 0\rp \geq1- 2\exp\lp -\frac{CD\rho^2}{K\var^2}\rp.
        \end{align*}
        \item For some $0<\delta$,
        \begin{align*}
            &\;\prob\lp \min_{\|\boldsymbol{d}\|\leq 1} F^{(p)}\lp \boldsymbol{x}+\frac{\sqrt{2}+\delta}{2}\boldsymbol{d}\rp y>0\rp\leq 4\exp\lp \frac{CD \delta^2}{8K^2\var^2}\rp\,.
        \end{align*}
    \end{itemize}    
\end{theorem}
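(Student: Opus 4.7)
The plan is to handle each statement conditionally on the subclass label $z=k$, using the symmetry between the positive and negative classes to assume without loss of generality $k\leq K_1$ (so $y=+1$), after which $\boldsymbol{x}=\boldsymbol{\mu}_k+\boldsymbol{\varepsilon}$ with $\boldsymbol{\varepsilon}\sim\mathcal{N}(\boldsymbol{0},\tfrac{\alpha^2}{D}\boldsymbol{I})$. All probability control then reduces, as in the proof of Theorem \ref{thm_robust}, to Hoeffding (Lemma \ref{lem_heoffding}) on unit-norm projections of $\boldsymbol{\varepsilon}$, combined with union bounds. For the first (robustness of $F$) statement, only the two noise projections $\eta_\pm = \langle \bar{\boldsymbol{\mu}}_\pm, \boldsymbol{\varepsilon}\rangle$ enter $F$. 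Writing any admissible attack as $\boldsymbol{d} = a\bar{\boldsymbol{\mu}}_+ + b\bar{\boldsymbol{\mu}}_- + \boldsymbol{e}$ with $\boldsymbol{e}\perp\{\bar{\boldsymbol{\mu}}_+,\bar{\boldsymbol{\mu}}_-\}$ and $a^2+b^2+\|\boldsymbol{e}\|^2\leq 1$, applying $\sigma(z)\geq z$ to the positive-class term and $\sigma(z)\leq |z|$ to the negative-class term gives
\begin{equation*}
F(\boldsymbol{x}+r\boldsymbol{d}) \geq 1 + r\sqrt{K_1}\,a - r\sqrt{K_2}|b| - \sqrt{K_1}|\eta_+| - \sqrt{K_2}|\eta_-|.
\end{equation*}
Minimizing the deterministic part over $a^2+b^2\leq 1$ yields $-r\sqrt{K}$ by Cauchy--Schwarz; with $r=(1-\rho)/\sqrt{K}$ this leaves $\min_{\|\boldsymbol{d}\|\leq 1}F(\boldsymbol{x}+r\boldsymbol{d})\geq \rho - \sqrt{K_1}|\eta_+|-\sqrt{K_2}|\eta_-|$, which is positive whenever $|\eta_+|\leq \rho/(2\sqrt{K_1})$ and $|\eta_-|\leq \rho/(2\sqrt{K_2})$. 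Each such event fails with Gaussian tail $2\exp(-CD\rho^2/(4K\alpha^2))$, and a union bound with constants absorbed into $C$ delivers the advertised rate.

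For the second (non-robustness of $F^{(p)}$) statement, the plan is to exhibit the explicit attack $\boldsymbol{d}^\ast = (-\boldsymbol{\mu}_k + \boldsymbol{\mu}_l)/\sqrt{2}$ for an arbitrary $l > K_1$ (this is exactly the minimizer isolated at the end of the proof of Theorem \ref{thm_robust}) and use $\prob(\min_{\boldsymbol{d}}F^{(p)}(\boldsymbol{x}+r\boldsymbol{d})\,y>0)\leq \prob(F^{(p)}(\boldsymbol{x}+r\boldsymbol{d}^\ast)>0)$. Setting $a = \tfrac12 - \tfrac{\delta}{2\sqrt{2}}$ and $b = \tfrac12 + \tfrac{\delta}{2\sqrt{2}}$, orthonormality of $\{\boldsymbol{\mu}_j\}$ yields
\begin{equation*}
F^{(p)}(\boldsymbol{x}+r\boldsymbol{d}^\ast) = \sigma^p(a+\eta_k) - \sigma^p(b+\eta_l) + \sum_{j\leq K_1,\,j\neq k}\sigma^p(\eta_j) - \sum_{j>K_1,\,j\neq l}\sigma^p(\eta_j),
\end{equation*}
with $\eta_j=\langle \boldsymbol{\mu}_j,\boldsymbol{\varepsilon}\rangle$. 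On the event $\mathcal{E}:=\{\max_j|\eta_j|\leq t\}$ this is at most $(a+t)_+^p - (b-t)^p + (K_1-1)t^p$; a mean-value expansion shows $(b-t)^p - (a+t)^p \geq p\,\xi^{p-1}(\delta/\sqrt{2}-2t)$ for some $\xi\in[a+t,b-t]$, and choosing $t=\Theta(\delta/K)$ makes the whole expression strictly negative. Union-bounding Hoeffding over the $K$ projections controls $\prob(\mathcal{E}^c)$ by $2K\exp(-CDt^2/\alpha^2)$, which matches (up to constants absorbable into $C$) the stated $4\exp(-CD\delta^2/(8K^2\alpha^2))$ tail; the exponent as printed must be interpreted as negative, since a positive sign would make the bound vacuous.

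The main analytic obstacle will be the $p$-dependent expansion in the second statement: controlling $(b-t)^p-(a+t)^p$ for all $p\geq 2$ requires care at $\delta=\sqrt{2}$, where $a$ changes sign. For $\delta\leq\sqrt{2}$ one has $a\geq 0$ and the lower bound $\xi^{p-1}\geq t^{p-1}$ suffices; for $\delta>\sqrt{2}$ the term $\sigma^p(a+\eta_k)$ flattens to zero once $\eta_k\leq -a$, which actually simplifies the argument but requires a separate treatment compatible with the same choice of $t$. Relative to this, the first statement is routine, relying only on the linearity of $\sigma$ on positive arguments and the Cauchy--Schwarz coupling of $a$ and $b$ under $a^2+b^2\leq 1$.
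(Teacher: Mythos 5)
Your proposal is correct and follows essentially the same route as the paper's proof: condition on the subclass $z=k$, use Hoeffding (Lemma \ref{lem_heoffding}) with union bounds, apply $\sigma(z)\geq z$ and $\sigma(z)\leq|z|$ together with Cauchy--Schwarz over the span of $\bar{\boldsymbol{\mu}}_\pm$ for the $F$ bound, and for $F^{(p)}$ upper-bound the minimum by the same explicit attack $\boldsymbol{d}^*=(-\boldsymbol{\mu}_k+\boldsymbol{\mu}_l)/\sqrt{2}$, showing the output is negative on a high-probability event; your mean-value expansion of the $p$-th powers with threshold $t=\Theta(\delta/K)$ simply replaces the paper's monotonicity/superadditivity chain and plays the same role. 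The prefactor $2K$ your union bound produces is no worse than what the paper's own final step actually yields once its sum-to-single-coordinate reduction is done as a proper union bound, and you are right that the printed exponent in the second bound must be read as negative.
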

The proof has the same spirit as those for Theorem \ref{thm_robust} so we state it briefly.
\begin{proof}
    \textbf{Robustness of $F(\boldsymbol{x})$, complementary result} It suffices to show that
    \be
        \prob_{(\boldsymbol{x},y)\sim \mathcal{D}_{X,Y}}\lp \min_{\|\boldsymbol{d}\|\leq 1} F\lp \boldsymbol{x}+\frac{(1-\rho)}{\sqrt{K}}\boldsymbol{x}\rp y< 0 \mid z=k \rp \leq 4\exp\lp -\frac{CD\rho^2}{K\var^2}\rp\,, \forall k\leq K\,.
    \ee
    When $k\leq K_1$, we have
    {\small
    \begin{align}
        &\;\prob_{(\boldsymbol{x},y)\sim \mathcal{D}_{X,Y}}\lp \min_{\|\boldsymbol{d}\|\leq 1} F\lp \boldsymbol{x}+\frac{(1-\rho)}{\sqrt{K}}\boldsymbol{x}\rp y< 0 \mid z=k \rp\nonumber\\
        =&\;\prob_{\boldsymbol{\varepsilon}\sim \mathcal{N}\lp 0,\frac{\var^2}{D}\boldsymbol{I}_D\rp}\lp \min_{\|\boldsymbol{d}\|\leq 1} F\lp \boldsymbol{\mu}_k+\boldsymbol{\varepsilon}+\frac{(1-\rho)}{\sqrt{K}}\boldsymbol{x}\rp <0\rp\nonumber\\
        =&\; \prob_{\boldsymbol{\varepsilon}\sim \mathcal{N}\lp 0,\frac{\var^2}{D}\boldsymbol{I}_D\rp}\lp \min_{\|\boldsymbol{d}\|\leq 1} 
        \lhp \sqrt{K_1}\act\lp \frac{1}{\sqrt{K_1}}+\lan\boldsymbol{\varepsilon},\bar{\boldsymbol{\mu}}_+\ran+\frac{1-\rho}{\sqrt{K}}\lan \boldsymbol{d},\bar{\boldsymbol{\mu}}_+\ran\rp -\sqrt{K_2}\act\lp \lan \boldsymbol{\varepsilon},\bar{\boldsymbol{\mu}}_-\ran+\frac{1-\rho}{\sqrt{K}}\lan \boldsymbol{d},\bar{\boldsymbol{\mu}}_-\ran\rp\rhp
        <0\rp\label{eq_robust_4}
    \end{align}
    }
    If we still let $d_0:=\frac{\sqrt{K_1}\bar{\boldsymbol{\mu}}_+-\sqrt{K_2}\bar{\boldsymbol{\mu}}_-}{\sqrt{K}}\in\mathbb{S}^{D-1}$, then by the fact that $|x|\geq \sigma(x)\geq x$ for any $x$, we have
    {\small
    \begin{align}
        &\eqref{eq_robust_4}\nonumber\\
        &\leq \prob_{\boldsymbol{\varepsilon}\sim \mathcal{N}\lp 0,\frac{\var^2}{D}\boldsymbol{I}_D\rp}\lp \min_{\|\boldsymbol{d}\|\leq 1} 
        \lhp 1+\sqrt{K_1}\lan\boldsymbol{\varepsilon},\bar{\boldsymbol{\mu}}_+\ran+\sqrt{K_1}\frac{1-\rho}{\sqrt{K}}\lan \boldsymbol{d},\bar{\boldsymbol{\mu}}_+\ran -\sqrt{K_2}\lvt\lan \boldsymbol{\varepsilon},\bar{\boldsymbol{\mu}}_-\ran\rvt-\sqrt{K_2}\frac{1-\rho}{\sqrt{K}}\lvt\lan \boldsymbol{d},\bar{\boldsymbol{\mu}}_-\ran\rvt\rhp
        <0\rp\nonumber\\
        &\leq \prob_{\boldsymbol{\varepsilon}\sim \mathcal{N}\lp 0,\frac{\var^2}{D}\boldsymbol{I}_D\rp}\lp \min_{\|\boldsymbol{d}\|\leq 1} 
        \lhp 1-\sqrt{K_1}\frac{1-\rho}{\sqrt{K}}\lvt\lan \boldsymbol{d},\bar{\boldsymbol{\mu}}_+\ran\rvt -\sqrt{K_2}\frac{1-\rho}{\sqrt{K}}\lvt\lan \boldsymbol{d},\bar{\boldsymbol{\mu}}_-\ran\rvt\rhp-\sqrt{K_2}\lvt\lan \boldsymbol{\varepsilon},\bar{\boldsymbol{\mu}}_-\ran\rvt-\sqrt{K_1}\lvt\lan\boldsymbol{\varepsilon},\bar{\boldsymbol{\mu}}_+\ran\rvt
        <0\rp\nonumber\\
        &\leq \prob_{\boldsymbol{\varepsilon}\sim \mathcal{N}\lp 0,\frac{\var^2}{D}\boldsymbol{I}_D\rp}\lp \min_{\|\boldsymbol{d}\|\leq 1} 
        \lhp 1- \frac{1-\rho}{\sqrt{K}} \sqrt{K_1+K_2}\sqrt{\lvt\lan \boldsymbol{d},\bar{\boldsymbol{\mu}}_+\ran\rvt^2+\lvt\lan \boldsymbol{d},\bar{\boldsymbol{\mu}}_-\ran\rvt^2}\rhp-\sqrt{K_2}\lvt\lan \boldsymbol{\varepsilon},\bar{\boldsymbol{\mu}}_-\ran\rvt-\sqrt{K_1}\lvt\lan\boldsymbol{\varepsilon},\bar{\boldsymbol{\mu}}_+\ran\rvt
        <0\rp\label{app_eq_ref_11}\\
        &\leq \prob_{\boldsymbol{\varepsilon}\sim \mathcal{N}\lp 0,\frac{\var^2}{D}\boldsymbol{I}_D\rp}\lp \min_{\|\boldsymbol{d}\|\leq 1} 
        \lhp 1- (1-\rho)\rhp-\sqrt{K_2}\lvt\lan \boldsymbol{\varepsilon},\bar{\boldsymbol{\mu}}_-\ran\rvt-\sqrt{K_1}\lvt\lan\boldsymbol{\varepsilon},\bar{\boldsymbol{\mu}}_+\ran\rvt
        <0\rp\label{app_eq_ref_12}\\
        &\leq \prob_{\boldsymbol{\varepsilon}\sim \mathcal{N}\lp 0,\frac{\var^2}{D}\boldsymbol{I}_D\rp}\lp \sqrt{K_2}\lvt\lan \boldsymbol{\varepsilon},\bar{\boldsymbol{\mu}}_-\ran\rvt+\sqrt{K_1}\lvt\lan\boldsymbol{\varepsilon},\bar{\boldsymbol{\mu}}_+\ran\rvt
        >\rho \rp\nonumber\\
        &\leq \prob_{\boldsymbol{\varepsilon}\sim \mathcal{N}\lp 0,\frac{\var^2}{D}\boldsymbol{I}_D\rp}\lp \lvt\lan \boldsymbol{\varepsilon},\bar{\boldsymbol{\mu}}_-\ran\rvt+\lvt\lan\boldsymbol{\varepsilon},\bar{\boldsymbol{\mu}}_+\ran\rvt
        >\frac{\rho}{\sqrt{K}} \rp\nonumber\\
        &\leq 2\prob_{\boldsymbol{\varepsilon}\sim \mathcal{N}\lp 0,\frac{\var^2}{D}\boldsymbol{I}_D\rp}\lp \lvt\lan \boldsymbol{\varepsilon},\bar{\boldsymbol{\mu}}_-\ran\rvt>\frac{\rho}{2\sqrt{K}}\rp\leq 4\exp\lp -\frac{CD\rho^2}{4K\var^2}\rp\,,
    \end{align}
    }
    where \eqref{app_eq_ref_11} uses the fact that $ab+cd\leq \sqrt{a^2+c^2}\sqrt{b^2+d^2}$ for any $a,b,c,d\in\mathbb{R}$, a simple application of Cauchy-Schwarz inequality, and \eqref{app_eq_ref_12} uses the fact that $\bar{\boldsymbol{\mu}}_+,\bar{\boldsymbol{\mu}}_-$ are orthonormal, thus $\sqrt{\lvt\lan \boldsymbol{d},\bar{\boldsymbol{\mu}}_+\ran\rvt^2+\lvt\lan \boldsymbol{d},\bar{\boldsymbol{\mu}}_-\ran\rvt^2}\leq \|d\|\leq 1$.
    The last line uses Lemma \ref{lem_heoffding}. The proof of the case $k>K_1$ is identical to the one above by the symmetry of the problem.
    
    \textbf{Non-robustness of $F^{(p)}(\boldsymbol{x})$, complementary result} It suffices to show that
    \be
        \prob_{(\boldsymbol{x},y)\sim \mathcal{D}_{X,Y}}\lp \min_{\|\boldsymbol{d}\|\leq 1} F^{(p)}\lp \boldsymbol{x}+\frac{\sqrt{2}+\delta}{2}\boldsymbol{d}\rp y> 0 \mid z=k \rp \leq 4\exp\lp \frac{CD \delta^2}{8K^2\var^2}\rp\,, \forall k\leq K\,.
    \ee
    When $k\leq K_1$, we define $\boldsymbol{d}_k=\frac{-\boldsymbol{\mu}_k+\boldsymbol{\mu}_{K}}{\sqrt{2}}$, then
    \begin{align}
        &\prob_{(\boldsymbol{x},y)\sim \mathcal{D}_{X,Y}}\lp \min_{\|\boldsymbol{d}\|\leq 1} F^{(p)}\lp \boldsymbol{x}+\frac{\sqrt{2}+\delta}{2}\boldsymbol{d}\rp y> 0 \mid z=k \rp\nonumber\\
        \leq &\; \prob_{(\boldsymbol{x},y)\sim \mathcal{D}_{X,Y}}\lp  F^{(p)}\lp \boldsymbol{x}+\frac{\sqrt{2}+\delta}{2}\boldsymbol{d}_k\rp y> 0 \mid z=k \rp\nonumber\\
        = &\;  \prob_{\boldsymbol{\varepsilon}\sim \mathcal{N}\lp 0,\frac{\var^2}{D}\boldsymbol{I}_D\rp}\lp F^{(p)}\lp \boldsymbol{\mu}_k+\boldsymbol{\varepsilon}+\frac{\sqrt{2}+\delta}{2}\boldsymbol{d}_k\rp > 0\rp\nonumber\\
        \leq  &\;  \prob_{\boldsymbol{\varepsilon}\sim \mathcal{N}\lp 0,\frac{\var^2}{D}\boldsymbol{I}_D\rp}\lp 
        \act^p\lp \frac{1-\delta/\sqrt{2}}{2} + \lan \boldsymbol{\varepsilon}, \boldsymbol{\mu}_k\ran \rp - \act^p\lp\frac{1+\delta/\sqrt{2}}{2} + \lan \boldsymbol{\varepsilon}, \boldsymbol{\mu}_{K}\ran \rp + \sum_{l\neq k,l\neq K}|\lan \boldsymbol{\varepsilon}, \boldsymbol{\mu}_l\ran|^p>0
        \rp\nonumber\\
        \leq &\; \prob_{\boldsymbol{\varepsilon}\sim \mathcal{N}\lp 0,\frac{\var^2}{D}\boldsymbol{I}_D\rp}\lp 
        \act^p\lp \frac{1-\delta/\sqrt{2}}{2} + \lan \boldsymbol{\varepsilon}, \boldsymbol{\mu}_k\ran \rp  + \lp \sum_{l\neq k,l\neq K}|\lan \boldsymbol{\varepsilon}, \boldsymbol{\mu}_l\ran|\rp^p>\act^p\lp\frac{1+\delta/\sqrt{2}}{2}+ \lan \boldsymbol{\varepsilon}, \boldsymbol{\mu}_{K}\ran \rp
        \rp\nonumber\\
        \leq &\; \prob_{\boldsymbol{\varepsilon}\sim \mathcal{N}\lp 0,\frac{\var^2}{D}\boldsymbol{I}_D\rp}\lp 
        \act^p\lp \frac{1-\delta/\sqrt{2}}{2} + \lvt\lan \boldsymbol{\varepsilon}, \boldsymbol{\mu}_k\ran\rvt \rp  + \lp \sum_{l\neq k,l\neq K}|\lan \boldsymbol{\varepsilon}, \boldsymbol{\mu}_l\ran|\rp^p>\act^p\lp\frac{1+\delta/\sqrt{2}}{2}- \lvt\lan \boldsymbol{\varepsilon}, \boldsymbol{\mu}_{K}\ran\rvt \rp
        \rp\label{app_eq_ref_8}\\
        \leq &\; \prob_{\boldsymbol{\varepsilon}\sim \mathcal{N}\lp 0,\frac{\var^2}{D}\boldsymbol{I}_D\rp}\lp 
        \lp\act\lp \frac{1-\delta/\sqrt{2}}{2} + \lvt\lan \boldsymbol{\varepsilon}, \boldsymbol{\mu}_k\ran\rvt \rp  +  \sum_{l\neq k,l\neq K}|\lan \boldsymbol{\varepsilon}, \boldsymbol{\mu}_l\ran|\rp^p>\act^p\lp\frac{1+\delta/\sqrt{2}}{2}- \lvt\lan \boldsymbol{\varepsilon}, \boldsymbol{\mu}_{K}\ran\rvt \rp
        \rp\label{app_eq_ref_9}\\
        \leq &\; \prob_{\boldsymbol{\varepsilon}\sim \mathcal{N}\lp 0,\frac{\var^2}{D}\boldsymbol{I}_D\rp}\lp 
        \act\lp \frac{1-\delta/\sqrt{2}}{2} + \lvt\lan \boldsymbol{\varepsilon}, \boldsymbol{\mu}_k\ran\rvt \rp  +  \sum_{l\neq k,l\neq K}|\lan \boldsymbol{\varepsilon}, \boldsymbol{\mu}_l\ran|>\act\lp\frac{1+\delta/\sqrt{2}}{2}- \lvt\lan \boldsymbol{\varepsilon}, \boldsymbol{\mu}_{K}\ran\rvt \rp
        \rp\nonumber\\
        \leq &\; \prob_{\boldsymbol{\varepsilon}\sim \mathcal{N}\lp 0,\frac{\var^2}{D}\boldsymbol{I}_D\rp}\lp 
        \act\lp \frac{1-\delta/\sqrt{2}}{2} + \lvt\lan \boldsymbol{\varepsilon}, \boldsymbol{\mu}_k\ran\rvt \rp  +  \sum_{l\neq k,l\neq K}|\lan \boldsymbol{\varepsilon}, \boldsymbol{\mu}_l\ran|>\frac{1+\delta/\sqrt{2}}{2}- \lvt\lan \boldsymbol{\varepsilon}, \boldsymbol{\mu}_{K}\ran\rvt \rp\,,
        \label{eq_robust_5}
    \end{align}
    where \eqref{app_eq_ref_8} uses the fact that $\act^p(x)$ is non-decreasing w.r.t. $x$, and \eqref{app_eq_ref_9} uses the fact that $(a+b)^p\geq a^p+b^p$ for any $a,b>0$. We define the event 
    \be 
    \mathcal{E}_5:=\lb \frac{1-\delta/\sqrt{2}}{2} + \lan \boldsymbol{\varepsilon}, \boldsymbol{\mu}_k\ran>0\rb\,.\ee
    Then by Lemma \ref{lem_split_prob},
    \begin{align}
        \eqref{eq_robust_5}&\leq \prob_{\boldsymbol{\varepsilon}\sim \mathcal{N}\lp 0,\frac{\var^2}{D}\boldsymbol{I}_D\rp}\lp 
        \frac{1-\delta/\sqrt{2}}{2} + \lvt\lan \boldsymbol{\varepsilon}, \boldsymbol{\mu}_k\ran\rvt   +  \sum_{l\neq k,l\neq K}|\lan \boldsymbol{\varepsilon}, \boldsymbol{\mu}_l\ran|>\frac{1+\delta/\sqrt{2}}{2}- \lvt\lan \boldsymbol{\varepsilon}, \boldsymbol{\mu}_{K}\ran\rvt, \mathcal{E}_5
        \rp\nonumber\\
        &\qquad\qquad\qquad\qquad+ \prob_{\boldsymbol{\varepsilon}\sim \mathcal{N}\lp 0,\frac{\var^2}{D}\boldsymbol{I}_D\rp}\lp 
        \lvt\lan \boldsymbol{\varepsilon}, \boldsymbol{\mu}_k\ran\rvt   +  \sum_{l\neq k,l\neq K}|\lan \boldsymbol{\varepsilon}, \boldsymbol{\mu}_l\ran|>\frac{1+\delta/\sqrt{2}}{2}- \lvt\lan \boldsymbol{\varepsilon}, \boldsymbol{\mu}_{K}\ran\rvt, \mathcal{E}_5^c\rp\nonumber\\
        &\leq \prob_{\boldsymbol{\varepsilon}\sim \mathcal{N}\lp 0,\frac{\var^2}{D}\boldsymbol{I}_D\rp}\lp 
        \lvt\lan \boldsymbol{\varepsilon}, \boldsymbol{\mu}_{K}\ran\rvt+\lvt\lan \boldsymbol{\varepsilon}, \boldsymbol{\mu}_k\ran\rvt   +  \sum_{l\neq k,l\neq K}|\lan \boldsymbol{\varepsilon}, \boldsymbol{\mu}_l\ran|>\frac{\delta}{\sqrt{2}}
        \rp\nonumber\\
        &\qquad\qquad\qquad\qquad+ \prob_{\boldsymbol{\varepsilon}\sim \mathcal{N}\lp 0,\frac{\var^2}{D}\boldsymbol{I}_D\rp}\lp 
        \lvt\lan \boldsymbol{\varepsilon}, \boldsymbol{\mu}_{K}\ran\rvt+\lvt\lan \boldsymbol{\varepsilon}, \boldsymbol{\mu}_k\ran\rvt   +  \sum_{l\neq k,l\neq K}|\lan \boldsymbol{\varepsilon}, \boldsymbol{\mu}_l\ran|>\frac{1+\delta/\sqrt{2}}{2}
        \rp\nonumber\\
        &\leq 2\prob_{\boldsymbol{\varepsilon}\sim \mathcal{N}\lp 0,\frac{\var^2}{D}\boldsymbol{I}_D\rp}\lp 
        \sum_{1\leq l\leq K}|\lan \boldsymbol{\varepsilon}, \boldsymbol{\mu}_l\ran|>\frac{\delta}{2\sqrt{2}}
        \rp\nonumber\\
        &\leq 2\prob_{\boldsymbol{\varepsilon}\sim \mathcal{N}\lp 0,\frac{\var^2}{D}\boldsymbol{I}_D\rp}\lp 
        |\lan \boldsymbol{\varepsilon}, \boldsymbol{\mu}_1\ran|>\frac{\delta}{2\sqrt{2}K}\rp\leq 4\exp\lp \frac{CD \delta^2}{8K^2\var^2}
        \rp\,.
    \end{align}

    The last line uses Lemma \ref{lem_heoffding}. When $k>K_1$,  the proof is similar with $d_k:=\frac{-\boldsymbol{\mu}_k+\boldsymbol{\mu}_1}{\sqrt{2}}$.
\end{proof}

\newpage
\section{Proofs for Lemma \ref{lem_small_norm_informal} and Theorem \ref{thm_align}}\label{app_pf_align}
\subsection{Alignment bias illustrated}\label{app_align_bias_vis}
As we showed in Lemma \ref{lem_small_norm_informal}, during the alignment phase $t\leq T$, one have the following approximation
\be
    \frac{d}{dt} \frac{\boldsymbol{w}_j(t)}{\|\boldsymbol{w}_j(t)\|}\simeq \sign(v_j(0)) \mathcal{P}^\perp_{\boldsymbol{w}_j(t)} x^{(p)}(\boldsymbol{w}_j(t))\,,
\ee
with
\be
    x^{(p)}(w)=\sum_{k=1}^K\gamma_k(w)y_k\boldsymbol{x}_k\cdot p[\cos(\boldsymbol{x}_k,w)]^{p-1}\,,
\ee
which essentially shows that when $\boldsymbol{w}_j$ is a \emph{positive neuron} ($\sign(v_j(0))>0$), then gradient flow dynamics during alignment phase pushes $\boldsymbol{w}_j/\|\boldsymbol{w}_j\|$ toward the direction of $x^{(p)}(w)$.

Notably, $x^{(p)}(\boldsymbol{w}_j)$ critically depends on $p$. Roughly speaking, when $p=1$, $x^{(p)}(\boldsymbol{w}_j)$ are more aligned with $\bar{\boldsymbol{\mu}}_+$ and $\bar{\boldsymbol{\mu}}_-$, while when $p>3$, $x^{(p)}(\boldsymbol{w}_j)$ are more aligned with one of the subclass centers, thus by moving toward $x^{(p)}(\boldsymbol{w}_j)$ in direction, the neurons are likely to align with average class centers in the former case, and with subclass centers in the latter case. We elaborate this statement here with a toy example.

\begin{figure*}[!h]
  \centering
  \includegraphics[width=0.5\linewidth]{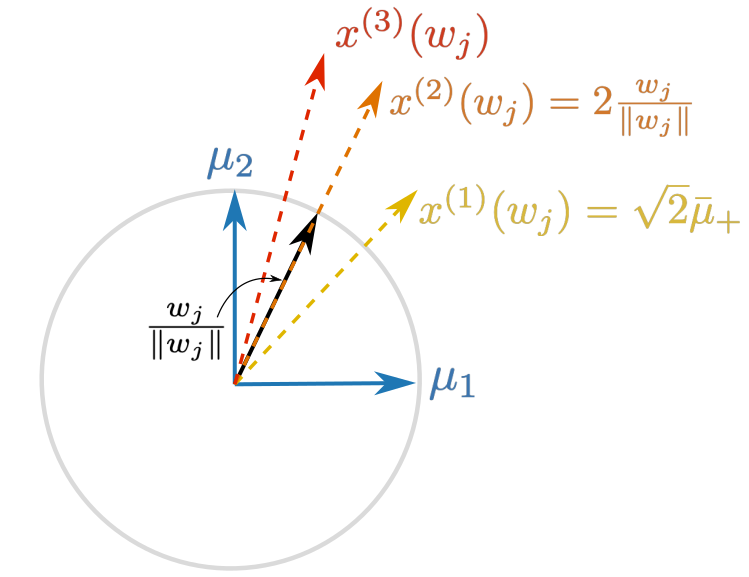}
  \caption{Alignment bias visualized. During alignment phase $\boldsymbol{w}_j$ is moving toward $x^{(p)}(\boldsymbol{w}_j)$ in direction. When $p=1$, $x^{(1)}(\boldsymbol{w}_j)$ is aligned with average class center $\bar{\boldsymbol{\mu}}_+$; When $p=3$, $x^{(p)}(\boldsymbol{w}_j)$ is more aligned with one of the subclass centers $\boldsymbol{\mu}_1$ and $\boldsymbol{\mu}_2$, depending on which one is closer to $\boldsymbol{w}_j$ in cosine distance.}
  \label{fig_align_bias}
\end{figure*}

Suppose the dataset ($K=2, K_1=2$) only contains two orthogonal $\boldsymbol{\mu}_1$, and $\boldsymbol{\mu}_2$ in the $2$-d plane and they both have positive labels. Given a positive neuron $\boldsymbol{w}_j$ that is activated by both $\boldsymbol{\mu}_1$, and $\boldsymbol{\mu}_2$, as shown in Figure \ref{fig_align_bias}. During alignment phase $\frac{\boldsymbol{w}_j}{\|\boldsymbol{w}_j\|}$ is moving towards the direction of $x^{(p)}(\boldsymbol{w}_j)$, which is
\begin{itemize}
    \item when $p=1$,
    \be
        x^{(1)}(\boldsymbol{w}_j)=\sum_{k=1}^K\gamma_k(\boldsymbol{w}_j)y_k\boldsymbol{x}_k=\boldsymbol{\mu}_1+\boldsymbol{\mu}_2=\sqrt{2}\bar{\boldsymbol{\mu}}_+\,,
    \ee
    exactly aligned with average class center $\bar{\boldsymbol{\mu}}_+$.
    \item when $p=2$,
    \be
        x^{(2)}(\boldsymbol{w}_j)=\sum_{k=1}^K\gamma_k(\boldsymbol{w}_j)y_k\boldsymbol{x}_k\cdot 2[\cos(\boldsymbol{x}_k,w)]=2\lp\boldsymbol{\mu}_1\cos\lp \boldsymbol{\mu}_1,\boldsymbol{w}_j\rp+\boldsymbol{\mu}_2\cos\lp \boldsymbol{\mu}_2,\boldsymbol{w}_j\rp\rp=2\frac{\boldsymbol{w}_j}{\|\boldsymbol{w}_j\|}\,,
    \ee
    exactly aligned with $\boldsymbol{w}_j$ itself.
    \item when $p=3$,
    \be
        x^{(3)}(w)=\sum_{k=1}^K\gamma_k(\boldsymbol{w}_j)y_k\boldsymbol{x}_k\cdot 3[\cos(\boldsymbol{x}_k,\boldsymbol{w}_j)]^2=3\lp\boldsymbol{\mu}_1\cos\lp \boldsymbol{\mu}_1,\boldsymbol{w}_j\rp^2+\boldsymbol{\mu}_2\cos\lp \boldsymbol{\mu}_2,\boldsymbol{w}_j\rp^2\rp\,,
    \ee
    getting closer to either $\boldsymbol{\mu}_1$ or $\boldsymbol{\mu}_2$, depending which one is closer to $\boldsymbol{w}_j$ in cosine distance.
\end{itemize}
Although this example is even more simplified than the one in Section \ref{sec:theoretical_analsysis}, it is easy to visualize and keeps the core relationship between the alignment bias of the neurons and the pReLU activation. From this, we see how the alignment bias is altered under different choices of $p$.

\subsection{Auxiliary Lemma}
We first prove the following, most analyses on gradient flow with small initialization~\cite{boursier2022gradient,boursier2024early,min2023early} have similar results, saying that the norm of the neurons stays close to zero during the alignment phase.
\begin{lemma}\label{lem_small_norm}
    Given some initialization in \eqref{eq_init}, then for any $\epsilon\leq \frac{1}{4\sqrt{h} M^2}$, any solution to the gradient flow dynamics under the simplified training dataset satisfies
    \be
        \max_j\|\boldsymbol{w}_j(t)\|^2\leq \frac{2\epsilon M^2}{\sqrt{h}},\quad \max|f_p(\boldsymbol{x}_k;\boldsymbol{\theta}(t))|\leq 2\epsilon \sqrt{h}M^2\,,
    \ee
    $\forall t\leq \frac{1}{4K}\log\frac{1}{\sqrt{h}\epsilon}
    $.
\end{lemma}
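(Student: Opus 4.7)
The strategy is a continuity (bootstrap) argument on $\Phi(t) := \max_j \|\boldsymbol{w}_j(t)\|^2$, exploiting the $2$-positive-homogeneity of each neuron's contribution and the balanced initialization. First I would invoke the conservation law $\frac{d}{dt}(v_j^2 - \|\boldsymbol{w}_j\|^2) = 0$ (standard for $2$-positively-homogeneous networks under Clarke subgradient flow), which propagates the balanced initialization to $|v_j(t)| = \|\boldsymbol{w}_j(t)\|$ for all $t$. Writing $g_j(\boldsymbol{x}) := v_j [\sigma(\langle \boldsymbol{x}, \boldsymbol{w}_j\rangle)]^p/\|\boldsymbol{w}_j\|^{p-1}$ for the contribution of neuron $j$ and applying Euler's identity for its $1$-positive-homogeneity in $\boldsymbol{w}_j$, namely $\langle \boldsymbol{w}_j, \nabla_{\boldsymbol{w}_j} g_j(\boldsymbol{x})\rangle = g_j(\boldsymbol{x})$, and plugging into $\frac{1}{2}\frac{d}{dt}\|\boldsymbol{w}_j\|^2 = \langle \boldsymbol{w}_j, \dot{\boldsymbol{w}}_j\rangle$ gives
\[
\frac{d}{dt}\|\boldsymbol{w}_j(t)\|^2 \;=\; -2\sum_{k=1}^K y_k\, \lambda_k(t)\, g_j(\boldsymbol{x}_k),
\]
where $\lambda_k(t) := \ell_0'(y_k f_p(\boldsymbol{x}_k;\boldsymbol{\theta}(t)))$ is the scalar loss derivative, bounded by $\max(e^{|f_p(\boldsymbol{x}_k)|}, 1)$. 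Since $\|\boldsymbol{x}_k\| = 1$ and $|v_j| = \|\boldsymbol{w}_j\|$, we obtain $|g_j(\boldsymbol{x}_k)| \leq \|\boldsymbol{w}_j\|^2$, and therefore $|f_p(\boldsymbol{x}_k;\boldsymbol{\theta}(t))| \leq \sum_j \|\boldsymbol{w}_j\|^2 \leq h\Phi(t)$.

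For the bootstrap closure, set $T^\star := \sup\{t \geq 0 : \Phi(s) \leq 2\epsilon M^2/\sqrt{h} \text{ for all } s \in [0,t]\}$, which is positive by the small initialization. On $[0, T^\star]$, the hypothesis $\epsilon \leq 1/(4\sqrt{h}M^2)$ gives $|f_p(\boldsymbol{x}_k;\boldsymbol{\theta}(t))| \leq 2\sqrt{h}\epsilon M^2 \leq 1/2$, so $|\lambda_k| \leq e^{1/2} \leq 2$ for either exponential or logistic loss. Plugging back,
\[
\left|\frac{d}{dt}\|\boldsymbol{w}_j(t)\|^2\right| \;\leq\; 2K \cdot 2 \cdot \|\boldsymbol{w}_j(t)\|^2 \;=\; 4K\|\boldsymbol{w}_j(t)\|^2,
\]
and Gronwall's inequality yields $\|\boldsymbol{w}_j(t)\|^2 \leq \epsilon^2 M^2 e^{4Kt}$. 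For $t \leq \frac{1}{4K}\log\frac{1}{\sqrt{h}\epsilon}$, this is at most $\epsilon M^2/\sqrt{h}$, strictly below the bootstrap threshold $2\epsilon M^2/\sqrt{h}$, which closes the argument. Hence $T^\star \geq \frac{1}{4K}\log\frac{1}{\sqrt{h}\epsilon}$, giving the first claim; the second bound $\max_k|f_p(\boldsymbol{x}_k;\boldsymbol{\theta}(t))| \leq 2\epsilon\sqrt{h}M^2$ then follows directly from $|f_p(\boldsymbol{x}_k;\boldsymbol{\theta}(t))| \leq h\Phi(t)$ on this interval.

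The main technical obstacle is the non-smoothness of $\sigma$ at the origin when $p=1$: the conservation law, Euler's identity, and the chain rule used above must be interpreted in the Clarke subdifferential sense along an absolutely continuous trajectory. These facts are standard for definable $2$-positively-homogeneous networks (as exploited in the cited works of Lyu--Li and Ji--Telgarsky), and the pointwise estimates used here are inequalities that are robust to the particular subgradient selection at the measure-zero set of times where $\langle \boldsymbol{x}_k, \boldsymbol{w}_j\rangle = 0$. For $p > 1$ the activation is $C^1$ and the argument proceeds classically.
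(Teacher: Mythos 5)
Your proposal is correct and follows essentially the same route as the paper's proof: balancedness plus homogeneity give the differential inequality for $\|\boldsymbol{w}_j\|^2$, the loss derivative is bounded by a constant (about $2$) while the outputs remain below $2\epsilon\sqrt{h}M^2$, Grönwall yields $\|\boldsymbol{w}_j(t)\|^2\leq \epsilon^2M^2e^{4Kt}\leq \epsilon M^2/\sqrt{h}$ on the stated horizon, and the factor-of-two slack closes the argument (the paper phrases this as a contradiction with two stopping times rather than your single bootstrap on $\max_j\|\boldsymbol{w}_j\|^2$, but the mechanism is identical). No substantive gap.
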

and we need the following lemma
\begin{lemma}\label{lem_sc}
    Given nonnegative $z_1,\cdots,z_n$, consider a function
    \be
        g_p(q;\{z_i\}_{i=1}^n)=\lp\sum_{i=1}^nz_i^q\rp\lp\sum_{i=1}^nz_i^{p+1-q}\rp\,,
    \ee
    then $g_p(q;\{z_i\}_{i=1}^n)$ is convex on $\mathbb{R}$. Moreover, as long as $z_i\neq z_j$ for some $i,j$, then $g_p(q;\{z_i\}_{i=1}^n)$ is strictly convex with minimum at $q^*=\frac{p+1}{2}$.
\end{lemma}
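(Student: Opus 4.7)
The plan is to expand the product defining $g_p$ into a double sum of exponentials in $q$ and then exploit a built-in symmetry under $q \mapsto p+1-q$. Setting $a_i := \log z_i$ (assuming $z_i > 0$; indices with $z_i = 0$ can be dropped, since they contribute nothing to either factor in the regime where both exponents are positive, and otherwise we restrict to the positive $z_i$'s), one identity does all the work:
\[
g_p(q;\{z_i\}) \;=\; \sum_{i,j} z_j^{p+1}\, e^{q(a_i - a_j)}.
\]

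With this representation, convexity is immediate: each summand is a nonnegative multiple of an exponential in $q$, hence convex, and a nonnegative combination of convex functions is convex. Differentiating twice I would get $g_p''(q) = \sum_{i,j} z_j^{p+1}(a_i - a_j)^2 e^{q(a_i - a_j)} \geq 0$, with equality on all of $\mathbb{R}$ only when $a_i = a_j$ for every $(i,j)$, i.e., when all $z_i$ coincide. So in the nondegenerate case the second derivative is everywhere strictly positive, giving strict convexity on $\mathbb{R}$.

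For the minimizer, the double-sum representation makes the symmetry $g_p(q) = g_p(p+1-q)$ transparent -- it is just the relabeling $i \leftrightarrow j$. Since a convex function symmetric about a point attains its infimum at that point (uniquely in the strictly convex case), the minimizer is $q^* = (p+1)/2$, as claimed.

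I do not expect any real obstacle here. The proof reduces to one short algebraic identity plus two standard facts about positive exponential sums (sums of convex exponentials are convex; symmetric convex functions minimize on their axis of symmetry). The only care needed is the handling of indices with $z_i = 0$ together with negative $q$, which is handled by isolating those indices at the outset, so that the factored identity holds on all of $\mathbb{R}$ for the remaining positive $z_i$'s.
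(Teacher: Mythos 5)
Your proof is correct and takes essentially the same route as the paper: the key step in both is the double-sum identity giving $g_p''(q)=\sum_{i,j} z_i^q z_j^{p+1-q}(\log z_i-\log z_j)^2\ge 0$, strictly positive unless all $z_i$ coincide. The only cosmetic differences are that you locate the minimizer via the symmetry $g_p(q)=g_p(p+1-q)$ rather than checking $g_p'\bigl(\tfrac{p+1}{2}\bigr)=0$ directly, and that you explicitly handle the $z_i=0$ indices, a degenerate case the paper's proof (which uses $\log z_i$ freely) silently assumes away.
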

we leave their proofs at the end of this section. Lastly, we use the following lemma from~\citet{min2023early}
\begin{lemma}\label{assump_loss}
    For $\ell$ being either exponential or logistic loss, we have
    \be
        |-\nabla_{\hat{y}}\ell (y,\hat{y})-y|\leq 2|\hat{y}|,\forall y\in\{+1,-1\},\quad \forall |\hat{y}|\leq 1\,.
    \ee
\end{lemma}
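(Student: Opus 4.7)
The plan is to reduce the two-loss claim to a one-variable inequality. Since $y \in \{\pm 1\}$, I would set $t := y\hat{y}$ so that $|t| = |\hat{y}| \leq 1$, and in both cases write $-\nabla_{\hat{y}}\ell(y,\hat{y}) = y\cdot g_\ell(t)$ for a smooth scalar function $g_\ell$ depending on the loss. Because $|y|=1$, the stated bound collapses to $|g_\ell(t) - 1| \leq 2|t|$ on $[-1,1]$, i.e., a Lipschitz-type estimate at $t=0$ with $g_\ell(0)=1$ (under the normalization used in \citet{min2023early}).

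For exponential loss $\ell(y,\hat{y}) = e^{-y\hat{y}}$, we have $g_\ell(t) = e^{-t}$, and I would split on the sign of $t$. For $t \in [0,1]$, convexity of $e^{-t}$ at $0$ gives $e^{-t} \geq 1 - t$, hence $|e^{-t} - 1| \leq t \leq 2|t|$. For $t \in [-1,0]$, convexity of $e^s$ on $[0,1]$ places it below its secant line through $(0,1)$ and $(1,e)$, giving $e^{|t|} - 1 \leq (e-1)|t| \leq 2|t|$ since $e-1 < 2$. The logistic case is analogous: $g_\ell$ is a (rescaled) sigmoid, and the mean value theorem combined with a uniform bound on $|g_\ell'|$ over $[-1,1]$ yields the same constant-$2$ estimate. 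An equivalent route is to Taylor-expand $g_\ell$ around $0$, noting that $g_\ell(0)=1$ and that the remainder is controlled by $\sup_{|s|\leq 1}|g_\ell''(s)|\cdot t^2/2$, which is bounded by $|t|$ on the relevant range.

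The only real obstacle is bookkeeping — making sure the particular normalization of the logistic loss in \citet{min2023early} lines up so that $g_\ell(0)=1$ and the universal constant $2$ are simultaneously attainable. Once the correct $g_\ell$ is identified for each loss, the verification is immediate. Conceptually, the lemma says that for scale-one logits the negative loss gradient deviates from the label by only $O(|\hat{y}|)$, i.e., $\ell$ is well-approximated by its linearization at $\hat{y}=0$ on this regime; this is precisely the property that allows Lemma~\ref{lem_small_norm} to reduce the early gradient flow dynamics to a leading-order direction field driven by $y_k \boldsymbol{x}_k$.
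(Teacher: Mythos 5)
The paper itself gives no proof of this lemma---it is imported verbatim from \citet{min2023early}---so the only thing to check is whether your self-contained argument is sound under \emph{this} paper's definitions of the losses. Your exponential-loss half is correct and complete: with $t=y\hat{y}$ you need $|e^{-t}-1|\le 2|t|$ on $[-1,1]$, and the two convexity estimates $1-e^{-t}\le t$ for $t\in[0,1]$ and $e^{|t|}-1\le (e-1)|t|$ for $t\in[-1,0]$ give it with constant $\max\{1,e-1\}<2$. (Your ``equivalent'' Taylor route is loose, though: the remainder bound gives $|e^{-t}-1|\le |t|+\tfrac{e}{2}t^2\le(1+\tfrac{e}{2})|t|\approx 2.36|t|$ near $|t|=1$, which does not recover the constant $2$; keep the secant argument.)

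The genuine gap is the logistic case, which you defer to ``bookkeeping.'' With the logistic loss as defined in Section 2 of this paper, $\ell(y,\hat{y})=\log\lp 1+\exp(-y\hat{y})\rp$, one computes $-\nabla_{\hat{y}}\ell(y,\hat{y})=y\,\frac{e^{-t}}{1+e^{-t}}$ with $t=y\hat{y}$, so in your notation $g_\ell(t)=\frac{1}{1+e^{t}}$ and $g_\ell(0)=\tfrac12$, not $1$. Hence at $\hat{y}=0$ the left-hand side of the lemma equals $\tfrac12$ while the right-hand side is $0$: the inequality, read literally with this normalization, fails at the most important point, and no mean-value or derivative bound can repair it because the obstruction is the value of $g_\ell$ at $t=0$, not its slope. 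The statement is only true for a logistic loss normalized so that $-\nabla_{\hat{y}}\ell(y,0)=y$ (e.g.\ $2\log\lp 1+\exp(-y\hat{y})\rp$, for which $-\nabla_{\hat{y}}\ell-y=-y\tanh(t/2)$ and $|\tanh(t/2)|\le |t|/2\le 2|t|$ finishes the proof), or if one instead bounds the deviation from $-\nabla_{\hat{y}}\ell(y,0)$ rather than from $y$. So as written your proposal establishes the exponential case only; the logistic half needs either the rescaled loss (and a remark reconciling it with the paper's stated definition) or a corrected statement, and this normalization question is precisely the substantive content you left unresolved.
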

\subsection{Proof for Lemma \ref{lem_small_norm_informal}}
\begin{customlem}{1}[restated]
    Given some initialization from \eqref{eq_init}, if $\epsilon=\mathcal{O}( \frac{1}{\sqrt{h}})$, then there exists $T=\boldsymbol{\theta}(\frac{1}{K}\log\frac{1}{\sqrt{h}\epsilon})$ such that the trajectory under gradient flow training with the simplified training dataset almost surely satisfies that $\forall t\leq T$,
\begin{align*}
    &\;\max_j\lV \frac{d}{dt} \frac{\boldsymbol{w}_j(t)}{\|\boldsymbol{w}_j(t)\|}-\sign(v_j(0)) \mathcal{P}^\perp_{\boldsymbol{w}_j(t)} x^{(p)}(\boldsymbol{w}_j(t))\rV=\mathcal{O}\lp\epsilon k\sqrt{h}\rp\,,\label{eq_err}
\end{align*}
where $\mathcal{P}^\perp_{w}=I-\frac{ww^\top}{\|w\|^2}$ and
\be
    x^{(p)}(w)=\sum_{k=1}^K\gamma_k(w)y_k\boldsymbol{x}_k\cdot p[\cos(\boldsymbol{x}_k,w)]^{p-1},
\ee
with $\gamma_k(w)$ being a subgradient of $\act^p(z)$ at $z=\lan \boldsymbol{x}_k,w\ran$.
\end{customlem}
\begin{proof}
    For simplicity, we write $\boldsymbol{w}_j(t)$ as $\boldsymbol{w}_j$.
    
    As we will show in the proof for Lemma \ref{lem_small_norm}, under balanced initialization, 
    \be
        \frac{d}{dt}\boldsymbol{w}_j=-\sum_{k=1}^K\gamma_k(\boldsymbol{w}_j)\nabla_{\hat{y}}\ell(y_k,f_p(\boldsymbol{x}_k;\boldsymbol{\theta})) \|\boldsymbol{w}_j\|\lp \frac{p[\act(\lan \boldsymbol{x}_k,\boldsymbol{w}_j\ran)]^{p-1}}{\|\boldsymbol{w}_j\|^{p-1}}\boldsymbol{x}_k-(p-1)\frac{[\act(\lan \boldsymbol{x}_k,\boldsymbol{w}_j\ran)]^p}{\|\boldsymbol{w}_j\|^{p+1}}\boldsymbol{w}_j\rp\,.
    \ee
    Then for any $j\in[h]$,
    \begin{align*}
        \frac{d}{dt} \frac{\boldsymbol{w}_j}{\|\boldsymbol{w}_j\|}&=\; \mathcal{P}^\perp_{\boldsymbol{w}_j}\cdot \frac{1}{\|\boldsymbol{w}_j\|}\cdot\frac{d}{dt}\boldsymbol{w}_j\\
        &=-\sign(v_j(0))\sum_{k=1}^K\gamma_k(\boldsymbol{w}_j)\nabla_{\hat{y}}\ell(y_k,f_p(\boldsymbol{x}_k;\boldsymbol{\theta}))\mathcal{P}^\perp_{\boldsymbol{w}_j}\lp p\cos\lp \boldsymbol{x}_k,\boldsymbol{w}_j\rp^{p-1}\boldsymbol{x}_k-(p-1)\frac{[\act(\lan \boldsymbol{x}_k,\boldsymbol{w}_j\ran)]^p}{\|\boldsymbol{w}_j\|^{p+1}}\boldsymbol{w}_j\rp\\
        &=\; -\sign(v_j(0))\sum_{i=1}^K\gamma_k(\boldsymbol{w}_j)\nabla_{\hat{y}}\ell(y_k,f_p(\boldsymbol{x}_k;\boldsymbol{\theta}))\mathcal{P}^\perp_{\boldsymbol{w}_j}p\cos\lp \boldsymbol{x}_k,\boldsymbol{w}_j\rp^{p-1}\boldsymbol{x}_k\,,
    \end{align*}
    Then
    \begin{align*}
    &\;\max_j\lV \frac{d}{dt} \frac{\boldsymbol{w}_j(t)}{\|\boldsymbol{w}_j(t)\|}-\sign(v_j(0)) \mathcal{P}^\perp_{\boldsymbol{w}_j(t)} x^{(p)}(\boldsymbol{w}_j(t))\rV\\
    =&\; \max_j\lV \sum_{i=1}^K\gamma_k(\boldsymbol{w}_j)\nabla_{\hat{y}}(-\ell(y_k,f_p(\boldsymbol{x}_k;\boldsymbol{\theta}))-y_k)\mathcal{P}^\perp_{\boldsymbol{w}_j}p\cos\lp \boldsymbol{x}_k,\boldsymbol{w}_j\rp^{p-1}\boldsymbol{x}_k \rV\\
    &\leq \; \max_j\lV \sum_{i=1}^K\gamma_k(\boldsymbol{w}_j)\lvt \nabla_{\hat{y}}(-\ell(y_k,f_p(\boldsymbol{x}_k;\boldsymbol{\theta}))-y_k\rvt \mathcal{P}^\perp_{\boldsymbol{w}_j}p\cos\lp \boldsymbol{x}_k,\boldsymbol{w}_j\rp^{p-1}\boldsymbol{x}_k \rV\leq 2Kp\max_k|f_p(\boldsymbol{x}_k;\boldsymbol{\theta}(t))|\,,
    \end{align*}
    by Lemma \ref{assump_loss}. Finally, by Lemma \ref{lem_small_norm}, we have for any $\epsilon\leq \frac{1}{4\sqrt{h}M^2}$, $\forall t\leq T=\frac{1}{4K}\log\frac{1}{\sqrt{h}\epsilon}$, we have
    \be
        \max_j\lV \frac{d}{dt} \frac{\boldsymbol{w}_j(t)}{\|\boldsymbol{w}_j(t)\|}-\sign(v_j(0)) \mathcal{P}^\perp_{\boldsymbol{w}_j(t)} x^{(p)}(\boldsymbol{w}_j(t))\rV\leq 2Kp\max|f_p(\boldsymbol{x}_k;\boldsymbol{\theta}(t))|\leq 4\epsilon \sqrt{h}M^2Kp\,,
    \ee
    which finishes the proof.
\end{proof}
\subsection{Proof for Theorem \ref{thm_align}}
\begin{customthm}{2}[Alignment bias of neurons, complete statement]\label{thm_align_complete}
    Given some $0<\underline{\delta}<\delta<1$ and a fixed choice of $p\geq 1$, then $\exists \epsilon_0:=\epsilon_0(\delta,p)>0$ such that for any solution of the gradient flow on $f_p(x;\boldsymbol{\theta})$ with the simplified training dataset, starting from some initialization from \eqref{eq_init} with initialization scale $\epsilon<\epsilon_0$, almost surely we have that at any time $t\leq T=\boldsymbol{\theta}(\frac{1}{n}\log\frac{1}{\sqrt{h}\epsilon})$ and 
    
    \begin{itemize}[leftmargin=9pt,topsep=0pt,parsep=0pt]
        \item $\forall j$ with $\sign(v_j(0))>0$, 
    \be
        \left.\frac{d}{dt}\! \cos\!\lp \boldsymbol{w}_j(t), \bar{\boldsymbol{\mu}}_+\rp \rvt_{\cos(\boldsymbol{w}_j(t),\bar{\boldsymbol{\mu}}_+)=1-\delta}\! \begin{cases}
            >0,&\! \text{when } p=1\\
            <0,&\! \text{when } p\geq 3\\
        \end{cases},
    \ee
    and
    \be
        \left.\frac{d}{dt}\! \cos\!\lp \boldsymbol{w}_j(t), \boldsymbol{\mu}_k\rp \rvt_{\cos(\boldsymbol{w}_j(t),\boldsymbol{\mu}_k)=1-\delta}>0,\forall k\leq K_1\,.
    \ee
    \item $\forall j$ with $\sign(v_j(0))<0$, 
    \be
        \left.\frac{d}{dt}\! \cos\!\lp \boldsymbol{w}_j(t), \bar{\boldsymbol{\mu}}_-\rp \rvt_{\cos(\boldsymbol{w}_j(t),\bar{\boldsymbol{\mu}}_-)=1-\delta}\! \begin{cases}
            >0,&\! \text{when } p=1\\
            <0,&\! \text{when } p\geq 3\\
        \end{cases},
    \ee
    and
    \be
        \left.\frac{d}{dt}\! \cos\!\lp \boldsymbol{w}_j(t), \boldsymbol{\mu}_k\rp \rvt_{\cos(\boldsymbol{w}_j(t),\boldsymbol{\mu}_k)=1-\delta}>0,\forall k> K_1\,.
    \ee
    \end{itemize}
\end{customthm}
\begin{proof}
    The proofs for positive neurons and for negative neurons are almost identical, we will prove it for positive neurons $\sign(v_j(0))>0$, i.e. $\sign(v_j(0))=1$. The first part concerns about $\left.\frac{d}{dt} \cos\lp \boldsymbol{w}_j(t), \bar{\boldsymbol{\mu}}_+\rp \rvt_{\cos(\boldsymbol{w}_j(t),\bar{\boldsymbol{\mu}}_+)=1-\delta}$. 
    \begin{align}
        &\frac{d}{dt} \cos\lp \boldsymbol{w}_j(t), \bar{\boldsymbol{\mu}}_+\rp\\
        =&\; \frac{1}{\|\bar{\boldsymbol{\mu}}_+\|}\lan \frac{d}{dt}\frac{\boldsymbol{w}_j(t)}{\|\boldsymbol{w}_j(t)\|},\bar{\boldsymbol{\mu}}_+\ran\\
        =&\; \frac{1}{\|\bar{\boldsymbol{\mu}}_+\|}\lan \mathcal{P}^\perp_{\boldsymbol{w}_j(t)} x^{(p)}(\boldsymbol{w}_j(t)),\bar{\boldsymbol{\mu}}_+\ran + \frac{1}{\|\bar{\boldsymbol{\mu}}_+\|}\lan\frac{d}{dt} \frac{\boldsymbol{w}_j(t)}{\|\boldsymbol{w}_j(t)\|}-\sign(v_j(0)) \mathcal{P}^\perp_{\boldsymbol{w}_j(t)} x^{(p)}(\boldsymbol{w}_j(t)),\bar{\boldsymbol{\mu}}_+\ran\,,
    \end{align}
    \textbf{When $p=1$}, if we can show that for some choice of $\delta>0$,
    \be
        \inf_{\boldsymbol{w}_j\in\mathbb{S}^{D-1}, \cos\lp \boldsymbol{w}_j,\bar{\boldsymbol{\mu}}_+\rp=1-\delta}\frac{1}{\|\bar{\boldsymbol{\mu}}_+\|}\lan \mathcal{P}^\perp_{\boldsymbol{w}_j} x^{(1)}(\boldsymbol{w}_j),\bar{\boldsymbol{\mu}}_+\ran := \Delta_1(\delta) >0\,, 
    \ee
    then we pick $\epsilon\leq \epsilon_0=\frac{\Delta_1(\delta)}{8\sqrt{h}M^2Kp}$, and by Lemma \ref{lem_small_norm_informal}, we have that for $\forall t\leq T=\frac{1}{4K}\log\frac{1}{\sqrt{h}\epsilon}$,
    \begin{align}
        \left.\frac{d}{dt} \cos\lp \boldsymbol{w}_j(t), \bar{\boldsymbol{\mu}}_+\rp \rvt_{\cos(\boldsymbol{w}_j(t),\bar{\boldsymbol{\mu}}_+)=1-\delta}
        &\geq\;  \Delta_1(\delta) - \max_j\lV \frac{d}{dt} \frac{\boldsymbol{w}_j(t)}{\|\boldsymbol{w}_j(t)\|}-\sign(v_j(0)) \mathcal{P}^\perp_{\boldsymbol{w}_j(t)} x^{(p)}(\boldsymbol{w}_j(t))\rV\\
        &\geq\; \Delta_1(\delta) - 4\epsilon \sqrt{h}M^2Kp \geq \frac{\Delta_1(\delta)}{2}>0\,,
    \end{align}
    which is what we stated in the theorem. 
    
    Similarly, \textbf{When $p>3$},  if we can show that for some choice of $\delta>0$,
    \be
        \inf_{\boldsymbol{w}_j\in\mathbb{S}^{D-1}, \cos\lp \boldsymbol{w}_j,\bar{\boldsymbol{\mu}}_+\rp=1-\delta}\frac{1}{\|\bar{\boldsymbol{\mu}}_+\|}\lan \mathcal{P}^\perp_{\boldsymbol{w}_j} x^{(p)}(\boldsymbol{w}_j),\bar{\boldsymbol{\mu}}_+\ran := \Delta_p(\delta) <0\,, 
    \ee
    then we pick $\epsilon\leq \epsilon_0=\frac{\Delta_p(\delta)}{8\sqrt{h}M^2Kp}$, and by Lemma \ref{lem_small_norm_informal}, we have that for $\forall t\leq T=\frac{1}{4K}\log\frac{1}{\sqrt{h}\epsilon}$,
    \begin{align}
        \left.\frac{d}{dt} \cos\lp \boldsymbol{w}_j(t), \bar{\boldsymbol{\mu}}_+\rp \rvt_{\cos(\boldsymbol{w}_j(t),\bar{\boldsymbol{\mu}}_+)=1-\delta}
        &\leq\;  \Delta_p(\delta) + \max_j\lV \frac{d}{dt} \frac{\boldsymbol{w}_j(t)}{\|\boldsymbol{w}_j(t)\|}-\sign(v_j(0)) \mathcal{P}^\perp_{\boldsymbol{w}_j(t)} x^{(p)}(\boldsymbol{w}_j(t))\rV\\
        &\leq\; \Delta_p(\delta) + 4\epsilon \sqrt{h}M^2Kp \leq \frac{\Delta_p(\delta)}{2}<0\,.
    \end{align}
    Therefore, for the first part, it suffices to show 
    \be
        \inf_{\boldsymbol{w}_j\in\mathbb{S}^{D-1}, \cos\lp \boldsymbol{w}_j,\bar{\boldsymbol{\mu}}_+\rp=1-\delta}\frac{1}{\|\bar{\boldsymbol{\mu}}_+\|}\lan \mathcal{P}^\perp_{\boldsymbol{w}_j} x^{(p)}(\boldsymbol{w}_j),\bar{\boldsymbol{\mu}}_+\ran := \Delta_p(\delta)\begin{cases}
            >0, & \text{for } p=1\\
            >0, & \text{for } p\geq 3\\
        \end{cases}
    \ee
    Now there exists $1>\bar{\delta}_1>0$ such that when $\delta>\bar{\delta}_1$, and $\cos\lp \boldsymbol{w}_j,\bar{\boldsymbol{\mu}}_+\rp=\sqrt{1-\delta}$, we have $\gamma_k(\boldsymbol{w}_j)=1,\forall k\leq K_1$ and $\gamma_k(\boldsymbol{w}_j)=0,\forall k> K_1$, i.e., $\boldsymbol{w}_j$ is activated by all $\boldsymbol{x}_k,k\geq K_1$ with positive label and is not activated by any of the $\boldsymbol{x}_k,k> K_1$ with negative label. Moreover, there exists $z_k,k\leq K_1$, such that $\boldsymbol{w}_j=\|\boldsymbol{w}_j\|\sum_{k\leq K_1}z_k\boldsymbol{x}_k$ and $z_k=\lan \boldsymbol{x}_k, \frac{\boldsymbol{w}_j}{\|\boldsymbol{w}_j\|}\ran$, i.e., $\boldsymbol{w}_j$ lies completely within the span of $\boldsymbol{x}_k,k\leq K_1$.
    
    With this, we have
    \begin{align*}
        &\frac{1}{\|\bar{\boldsymbol{\mu}}_+\|}\lan \mathcal{P}^\perp_{\boldsymbol{w}_j} x^{(p)}(\boldsymbol{w}_j),\bar{\boldsymbol{\mu}}_+\ran\\
        =&\; \frac{1}{\|\bar{\boldsymbol{\mu}}_+\|}\lan \lp I-\frac{\boldsymbol{w}_j\boldsymbol{w}_j^T}{\|\boldsymbol{w}_j\|^2}\rp\sum_{k}^K\gamma_k(\boldsymbol{w}_j)y_k\boldsymbol{x}_k\cdot p[\cos(\boldsymbol{x}_k,\boldsymbol{w}_j)]^{p-1},\bar{\boldsymbol{\mu}}_+\ran \\
        =&\; \frac{1}{K}\lan \lp I-\frac{\boldsymbol{w}_j\boldsymbol{w}_j^T}{\|\boldsymbol{w}_j\|^2}\rp\sum_{k\leq K_1}\boldsymbol{x}_k\cdot p[\cos(\boldsymbol{x}_k,\boldsymbol{w}_j)]^{p-1}, \sum_{i\leq K_1}\boldsymbol{x}_k\ran\\
        =&\; \lan \sum_{k\leq K_1}\boldsymbol{x}_k, \sum_{k\leq K_1}\boldsymbol{x}_k\cdot p[\cos(\boldsymbol{x}_k,\boldsymbol{w}_j)]^{p-1}\ran - \lan \sum_{k\leq K_1}\boldsymbol{x}_k, \frac{\boldsymbol{w}_j}{\|\boldsymbol{w}_j\|} \ran \sum_{k\leq K_1}\lan \boldsymbol{x}_k, \frac{\boldsymbol{w}_j}{\|\boldsymbol{w}_j\|}\ran \cdot p[\cos(\boldsymbol{x}_k,\boldsymbol{w}_j)]^{p-1}\\
        =&\; \lan \sum_{k\leq K_1}\boldsymbol{x}_k, \sum_{k\leq K_1}\boldsymbol{x}_k\cdot p \lan \boldsymbol{x}_k, \frac{\boldsymbol{w}_j}{\|\boldsymbol{w}_j\|}\ran^{p-1}\ran - \lan \sum_{k\leq K_1}\boldsymbol{x}_k, \frac{\boldsymbol{w}_j}{\|\boldsymbol{w}_j\|} \ran \sum_{k\leq K_1}\lan \boldsymbol{x}_k, \frac{\boldsymbol{w}_j}{\|\boldsymbol{w}_j\|}\ran \cdot p\lan \boldsymbol{x}_k, \frac{\boldsymbol{w}_j}{\|\boldsymbol{w}_j\|}\ran^{p-1}\\
        =&\;\sum_{k\leq K_1} p \lan \boldsymbol{x}_k, \frac{\boldsymbol{w}_j}{\|\boldsymbol{w}_j\|}\ran^{p-1} - \lp\sum_{k\leq K_1}\lan \boldsymbol{x}_k, \frac{\boldsymbol{w}_j}{\|\boldsymbol{w}_j\|}\ran\rp \lp p\sum_{k\leq K_1}\lan \boldsymbol{x}_k, \frac{\boldsymbol{w}_j}{\|\boldsymbol{w}_j\|}\ran^{p}\rp\\
        =&\;p\sum_{k\leq K_1} \lan \boldsymbol{x}_k, \frac{\boldsymbol{w}_j}{\|\boldsymbol{w}_j\|}\ran^{p-1} - p\lp\sum_{k\leq K_1}\lan \boldsymbol{x}_k, \frac{\boldsymbol{w}_j}{\|\boldsymbol{w}_j\|}\ran\rp \lp \sum_{k\leq K_1}\lan \boldsymbol{x}_k, \frac{\boldsymbol{w}_j}{\|\boldsymbol{w}_j\|}\ran^{p}\rp\\
        =&\;p \lp\sum_{k\leq K_1}z_k^{p-1}-\lp\sum_{k\leq K_1}z_k\rp\lp\sum_{k\leq K_1}z_k^{p}\rp\rp\,,
    \end{align*}
    Since $\boldsymbol{w}_j$ lies completely within the span of $\boldsymbol{x}_k,k\leq K_1$, we have $\sum_{k\leq K_1}z_k^2=1$, then
    \begin{align*}
        &p \lp\sum_{k\leq K_1}z_k^{p-1}-\lp\sum_{k\leq K_1}z_k\rp\lp\sum_{k\leq K_1}z_k^{p}\rp\rp\\
        =& p \lp\lp\sum_{k\leq K_1}z_k^{p-1}\rp\lp\sum_{k\leq K_1}z_k^2\rp-\lp\sum_{k\leq K_1}z_k\rp\lp\sum_{k\leq K_1}z_k^{p}\rp\rp=p\lhp g_p(2;\{z_k\}_{k\leq K_1})-g_p(1;\{z_k\}_{k\leq K_1})\rhp\,,
    \end{align*}
    where $g_p(\cdot;\{z_k\}_{k\leq K_1})$ is defined in Lemma \ref{lem_sc}. 
    
    By Lemma \ref{lem_sc}, when $p=1$, $g_1(\cdot;\{z_k\}_{k\leq K_1})$ is strictly convex and takes minimum at $q^*=\frac{1+p}{2}=1$, thus 
    \be
        g_1(2;\{z_k\}_{k\leq K_1})-g_1(1;\{z_k\}_{k\leq K_1})>0\,,
    \ee
    then we know that 
    \be
        \inf_{\boldsymbol{w}_j\in\mathbb{S}^{D-1}, \cos\lp \boldsymbol{w}_j,\bar{\boldsymbol{\mu}}_+\rp=1-\delta}\frac{1}{\|\bar{\boldsymbol{\mu}}_+\|}\lan \mathcal{P}^\perp_{\boldsymbol{w}_j} x^{(1)}(\boldsymbol{w}_j),\bar{\boldsymbol{\mu}}_+\ran := \Delta_1(\delta)\geq 0\,.
    \ee
    However, $\Delta_1(\delta)$ can not be zero: If this is the case, since the set $\{\boldsymbol{w}_j: \boldsymbol{w}_j\in\mathbb{S}^{D-1}, \cos\lp \boldsymbol{w}_j,\bar{\boldsymbol{\mu}}_+\rp=1-\delta\}$ is compact and $\frac{1}{\|\bar{\boldsymbol{\mu}}_+\|}\lan \mathcal{P}^\perp_{\boldsymbol{w}_j} x^{(1)}(\boldsymbol{w}_j),\bar{\boldsymbol{\mu}}_+\ran$ is continuous on this set. It attains minimum $0$ at some $\boldsymbol{w}_j$, which implies the non-strong convexity of $g_1(\cdot;\{z_k\}_{k\leq K_1})$ that, by Lemma \ref{lem_sc}, requires all $z_k,k\leq K_1$ to be equal to each other (This can only happen if $\cos\lp \boldsymbol{w}_j,\bar{\boldsymbol{\mu}}_+\rp=1$ ). Contradiction. Then one must have
    \be
        \inf_{\boldsymbol{w}_j\in\mathbb{S}^{D-1}, \cos\lp \boldsymbol{w}_j,\bar{\boldsymbol{\mu}}_+\rp=1-\delta}\frac{1}{\|\bar{\boldsymbol{\mu}}_+\|}\lan \mathcal{P}^\perp_{\boldsymbol{w}_j} x^{(1)}(\boldsymbol{w}_j),\bar{\boldsymbol{\mu}}_+\ran := \Delta_1(\delta)>0 \,.
    \ee

    Similarly, by Lemma \ref{lem_sc}, when $p\geq 3$, $g_1(\cdot;\{z_k\}_{k\leq K_1})$ is strictly convex and takes minimum at $q^*=\frac{1+p}{2}\geq 2$, thus 
    \be
        g_1(2;\{z_k\}_{k\leq K_1})-g_1(1;\{z_k\}_{k\leq K_1})<0\,,
    \ee
    then we know that 
    \be
        \inf_{\boldsymbol{w}_j\in\mathbb{S}^{D-1}, \cos\lp \boldsymbol{w}_j,\bar{\boldsymbol{\mu}}_+\rp=1-\delta}\frac{1}{\|\bar{\boldsymbol{\mu}}_+\|}\lan \mathcal{P}^\perp_{\boldsymbol{w}_j} x^{(1)}(\boldsymbol{w}_j),\bar{\boldsymbol{\mu}}_+\ran := \Delta_p(\delta)\leq 0\,.
    \ee
    Using the same argument, we eliminate the case of this infimum being zero. Then one must have
    \be
        \inf_{\boldsymbol{w}_j\in\mathbb{S}^{D-1}, \cos\lp \boldsymbol{w}_j,\bar{\boldsymbol{\mu}}_+\rp=1-\delta}\frac{1}{\|\bar{\boldsymbol{\mu}}_+\|}\lan \mathcal{P}^\perp_{\boldsymbol{w}_j} x^{(1)}(\boldsymbol{w}_j),\bar{\boldsymbol{\mu}}_+\ran := \Delta_p(\delta)<0 \,.
    \ee

    The second part concerns about $\left.\frac{d}{dt} \cos\lp \boldsymbol{w}_j(t), \boldsymbol{\mu}_k\rp \rvt_{\cos(\boldsymbol{w}_j(t),\boldsymbol{\mu}_k)=1-\delta}$, for some $k\leq K_1$. Without loss of generality, we let $k=1$. Thus we intend to show $\left.\frac{d}{dt} \cos\lp \boldsymbol{w}_j(t), \boldsymbol{\mu}_1\rp \rvt_{\cos(\boldsymbol{w}_j(t),\boldsymbol{\mu}_1)=1-\delta}$ is positive.
    
    We also let $\zeta:=1-(1-\delta)^2$, so that the condition $\cos(\boldsymbol{w}_j(t),\boldsymbol{\mu}_1)=1-\delta$ becomes $\cos(\boldsymbol{w}_j(t),\boldsymbol{\mu}_1)=\sqrt{1-\zeta}$. 

    Since $\sum_{k=1}^K\lvt\lan \frac{\boldsymbol{w}_j}{\|\boldsymbol{w}_j\|},\boldsymbol{\mu}_k\ran\rvt^2\leq 1$,  $\cos(\boldsymbol{w}_j(t),\boldsymbol{\mu}_1)=\sqrt{1-\zeta}$ implies $\sum_{l=2}^K\lvt\lan \frac{\boldsymbol{w}_j}{\|\boldsymbol{w}_j\|},\boldsymbol{\mu}_l\ran\rvt^2\leq \zeta$.

    Similar to the first part of the proof, we have 
    \begin{align}
        &\frac{d}{dt} \cos\lp \boldsymbol{w}_j(t), \boldsymbol{\mu}_1\rp\\
        =&\; \lan \frac{d}{dt}\frac{\boldsymbol{w}_j(t)}{\|\boldsymbol{w}_j(t)\|},\boldsymbol{\mu}_1\ran\\
        =&\; \lan \mathcal{P}^\perp_{\boldsymbol{w}_j(t)} x^{(p)}(\boldsymbol{w}_j(t)),\boldsymbol{\mu}_1\ran + \lan\frac{d}{dt} \frac{\boldsymbol{w}_j(t)}{\|\boldsymbol{w}_j(t)\|}-\sign(v_j(0)) \mathcal{P}^\perp_{\boldsymbol{w}_j(t)} x^{(p)}(\boldsymbol{w}_j(t)),\boldsymbol{\mu}_1\ran\,,
    \end{align}
    if we can show that for some choice of $\delta>0$ (or equivalently some $\zeta>0$),
    \be
        \inf_{\boldsymbol{w}_j\in\mathbb{S}^{D-1}, \cos\lp \boldsymbol{w}_j,\boldsymbol{\mu}_1\rp=1-\delta}\lan \mathcal{P}^\perp_{\boldsymbol{w}_j} x^{(1)}(\boldsymbol{w}_j),\boldsymbol{\mu}_1\ran := \Lambda_p(\delta) >0\,, 
    \ee
    then we pick $\epsilon\leq \epsilon_0=\frac{\Lambda_p(\delta)}{8\sqrt{h}M^2Kp}$, and by Lemma \ref{lem_small_norm_informal}, we have that for $\forall t\leq T=\frac{1}{4K}\log\frac{1}{\sqrt{h}\epsilon}$,
    \begin{align}
        \left.\frac{d}{dt} \cos\lp \boldsymbol{w}_j(t), \boldsymbol{\mu}_1\rp \rvt_{\cos(\boldsymbol{w}_j(t),\boldsymbol{\mu}_1)=1-\delta}
        &\geq\;  \Lambda_p(\delta) - \max_j\lV \frac{d}{dt} \frac{\boldsymbol{w}_j(t)}{\|\boldsymbol{w}_j(t)\|}-\sign(v_j(0)) \mathcal{P}^\perp_{\boldsymbol{w}_j(t)} x^{(p)}(\boldsymbol{w}_j(t))\rV\\
        &\geq\; \Lambda_p(\delta) - 4\epsilon \sqrt{h}M^2Kp \geq \frac{\Lambda_p(\delta)}{2}>0\,,
    \end{align}
    which is what we stated in the theorem. The remaining proof is to find a lower bound on $\Lambda_p(\delta)$, which is given by
    \begin{align*}
        &\lan \mathcal{P}^\perp_{\boldsymbol{w}_j} x^{(p)}(\boldsymbol{w}_j),\boldsymbol{\mu}_1\ran\\
        =&\; \lan \lp I-\frac{\boldsymbol{w}_j\boldsymbol{w}_j^T}{\|\boldsymbol{w}_j\|^2}\rp\sum_{k}^K\gamma_k(\boldsymbol{w}_j)y_k\boldsymbol{x}_k\cdot p[\cos(\boldsymbol{x}_k,\boldsymbol{w}_j)]^{p-1},\boldsymbol{\mu}_1\ran \\
        =&\; \lan \lp I-\frac{\boldsymbol{w}_j\boldsymbol{w}_j^T}{\|\boldsymbol{w}_j\|^2}\rp\sum_{k\leq K_1}\gamma_k(\boldsymbol{w}_j)y_k\boldsymbol{x}_k\cdot p[\cos(\boldsymbol{x}_k,\boldsymbol{w}_j)]^{p-1}, \boldsymbol{\mu}_1\ran\\
        =&\; p\cos^{p-1}(\boldsymbol{\mu}_1,\boldsymbol{w}_j)\lp 1-\cos^{2}(\boldsymbol{\mu}_1,\boldsymbol{w}_j)\rp +\sum_{l=2}^K\gamma_l(\boldsymbol{w}_j)y_lp\cos^{p}(\boldsymbol{\mu}_l,\boldsymbol{w}_j)\cos(\boldsymbol{w}_j,\boldsymbol{\mu}_1)\\
        =&\; \sqrt{1-\zeta}\lp p(1-\zeta)^{\frac{p-2}{2}}\zeta +\sum_{l=2}^K\gamma_l(\boldsymbol{w}_j)y_lp\cos^{p}(\boldsymbol{\mu}_l,\boldsymbol{w}_j)\rp\\
        \geq&\;\sqrt{1-\zeta}p\lp (1-\zeta)^{\frac{p-2}{2}}\zeta -\sum_{l=2}^K\lvt\lan \frac{\boldsymbol{w}_j}{\|\boldsymbol{w}_j\|},\boldsymbol{\mu}_l\ran\rvt^p\rp\\
        \geq &\; \sqrt{1-\zeta}p\lp (1-\zeta)^{\frac{p-2}{2}}\zeta -\lp \sum_{l=2}^K\lvt\lan \frac{\boldsymbol{w}_j}{\|\boldsymbol{w}_j\|},\boldsymbol{\mu}_l\ran\rvt^2\rp^{\frac{p}{2}}\rp\\
        \geq &\; \sqrt{1-\zeta}p\lp (1-\zeta)^{\frac{p-2}{2}}\zeta -\zeta^{\frac{p}{2}}\rp\geq \sqrt{1-\zeta}p\lp \lp 1-\frac{p-2}{2}\zeta\rp\zeta -\zeta^{\frac{p}{2}}\rp=p\zeta+o(\zeta)\,.
    \end{align*}
    Therefore, as long as $\zeta>0$ is small enough, which can be achieved by picking some $\delta<\bar{\delta}_2<1$, then $\inf_{\boldsymbol{w}_j\in\mathbb{S}^{D-1}, \cos\lp \boldsymbol{w}_j,\boldsymbol{\mu}_1\rp=1-\delta}\lan \mathcal{P}^\perp_{\boldsymbol{w}_j} x^{(1)}(\boldsymbol{w}_j),\boldsymbol{\mu}_1\ran = \Lambda_p(\delta)$ is positive. 
\end{proof}
\subsection{Proof for Auxiliary Lemmas}
\textbf{Balancedness}: Under GF, balancedness~\citep{Du&Lee} is preserved: $v_j^2(t)-\|\boldsymbol{w}_j(t)\|^2=0,\forall t\geq 0, \forall j\in[h]$, from the fact that:
\begin{align*}
    \frac{d}{dt}\|\boldsymbol{w}_j\|^2&=\;\lan \boldsymbol{w}_j, \dot{w}_j\ran\\
    &=\; -2\sum_{k=1}^K\gamma_k(\boldsymbol{w}_j)\nabla_{\hat{y}}\ell(y_k,f_p(\boldsymbol{x}_k;W,v)) v_j\lp \frac{p[\act(\lan \boldsymbol{x}_k,\boldsymbol{w}_j\ran)]^{p-1}}{\|\boldsymbol{w}_j\|^{p-1}}\lan \boldsymbol{w}_j,\boldsymbol{x}_k\ran-(p-1)\frac{[\act(\lan \boldsymbol{x}_k,\boldsymbol{w}_j\ran)]^p}{\|\boldsymbol{w}_j\|^{p+1}}\|\boldsymbol{w}_j\|^2\rp\\
    &=\;-2\sum_{k=1}^K\gamma_k(\boldsymbol{w}_j)\nabla_{\hat{y}}\ell(y_k,f_p(\boldsymbol{x}_k;W,v)) v_j\lp \frac{p[\act(\lan \boldsymbol{x}_k,\boldsymbol{w}_j\ran)]^{p-1}}{\|\boldsymbol{w}_j\|^{p}}-(p-1)\frac{[\act(\lan \boldsymbol{x}_k,\boldsymbol{w}_j\ran)]^p}{\|\boldsymbol{w}_j\|^{p-1}}\rp\\
    &=\;-2\sum_{k=1}^K\gamma_k(\boldsymbol{w}_j)\nabla_{\hat{y}}\ell(y_k,f_p(\boldsymbol{x}_k;W,v)) v_j\frac{[\act(\lan \boldsymbol{x}_k,\boldsymbol{w}_j\ran)]^p}{\|\boldsymbol{w}_j\|^{p-1}}\\
    &=\;\frac{d}{dt}v_j^2
\end{align*}

In addition, $\sign(v_j(t))=\sign(v_j(0)), \forall t\geq 0,\forall j\in[h]$, and the dynamical behaviors of neurons will be divided into two types, depending on $\sign(v_j(0))$. Therefore, throughout the gradient flow trajectory, we have $v_j=\sign(v_j(0))\|\boldsymbol{w}_j\|$. This fact will be used in the subsequent proof.

\begin{proof}[Proof for Lemma \ref{lem_small_norm}]
    Under gradient flow, we have
    \be
        \frac{d}{dt}\boldsymbol{w}_j=-\sum_{k=1}^K\gamma_k(\boldsymbol{w}_j)\nabla_{\hat{y}}\ell(y_k,f_p(\boldsymbol{x}_k;W,v)) v_j\lp \frac{p[\act(\lan \boldsymbol{x}_k,\boldsymbol{w}_j\ran)]^{p-1}}{\|\boldsymbol{w}_j\|^{p-1}}\boldsymbol{x}_k-(p-1)\frac{[\act(\lan \boldsymbol{x}_k,\boldsymbol{w}_j\ran)]^p}{\|\boldsymbol{w}_j\|^{p+1}}\boldsymbol{w}_j\rp\,.
    \ee
    and for $\|\boldsymbol{w}_j\|$,
    \begin{align*}
    \frac{d}{dt}\|\boldsymbol{w}_j\|^2&=\;\lan \boldsymbol{w}_j, \dot{w}_j\ran\\
    &=\; -2\sum_{k=1}^K\gamma_k(\boldsymbol{w}_j)\nabla_{\hat{y}}\ell(y_k,f_p(\boldsymbol{x}_k;W,v)) v_j\lp \frac{p[\act(\lan \boldsymbol{x}_k,\boldsymbol{w}_j\ran)]^{p-1}}{\|\boldsymbol{w}_j\|^{p-1}}\lan \boldsymbol{w}_j,\boldsymbol{x}_k\ran-(p-1)\frac{[\act(\lan \boldsymbol{x}_k,\boldsymbol{w}_j\ran)]^p}{\|\boldsymbol{w}_j\|^{p+1}}\|\boldsymbol{w}_j\|^2\rp\\
    &=\;-2\sum_{k=1}^K\gamma_k(\boldsymbol{w}_j)\nabla_{\hat{y}}\ell(y_k,f_p(\boldsymbol{x}_k;W,v)) v_j\lp \frac{p[\act(\lan \boldsymbol{x}_k,\boldsymbol{w}_j\ran)]^{p-1}}{\|\boldsymbol{w}_j\|^{p}}-(p-1)\frac{[\act(\lan \boldsymbol{x}_k,\boldsymbol{w}_j\ran)]^p}{\|\boldsymbol{w}_j\|^{p-1}}\rp\\
    &=\;-2\sum_{k=1}^K\gamma_k(\boldsymbol{w}_j)\nabla_{\hat{y}}\ell(y_k,f_p(\boldsymbol{x}_k;W,v)) v_j\frac{[\act(\lan \boldsymbol{x}_k,\boldsymbol{w}_j\ran)]^p}{\|\boldsymbol{w}_j\|^{p-1}}
\end{align*}
    Balanced initialization enforces $v_j=\sign(v_j(0))\|\boldsymbol{w}_j\|$, hence
    \be
        \frac{d}{dt}\|\boldsymbol{w}_j\|^2=-2\sum_{k=1}^K\gamma_k(\boldsymbol{w}_j)\nabla_{\hat{y}}\ell(y_k,f_p(\boldsymbol{x}_k;W,v)) \sign(v_j(0))\|\boldsymbol{w}_j\|^2\frac{[\act(\lan \boldsymbol{x}_k,\boldsymbol{w}_j\ran)]^p}{\|\boldsymbol{w}_j\|^{p}}\,.
    \ee
    Let $T:=\inf\{t:\ \max_{i}|f(\boldsymbol{x}_k;W(t),v(t))|>2\epsilon \sqrt{h}M^2\}$,  then $\forall t\leq T, j\in[h]$, we have 
    \begin{align}
        \frac{d}{dt}\|\boldsymbol{w}_j\|^2
        &=\;-2\sum_{k=1}^K\gamma_k(\boldsymbol{w}_j)\nabla_{\hat{y}}\ell(y_k,f_p(\boldsymbol{x}_k;W,v)) \sign(v_j(0))\|\boldsymbol{w}_j\|^2\frac{[\act(\lan \boldsymbol{x}_k,\boldsymbol{w}_j\ran)]^p}{\|\boldsymbol{w}_j\|^{p}}\nonumber\\
        &=\; -2\sum_{k=1}^K\gamma_k(\boldsymbol{w}_j)\nabla_{\hat{y}}\ell(y_k,f(\boldsymbol{x}_k;W,v))\sign(v_j(0))\|\boldsymbol{w}_j\|^2\frac{(\lan \boldsymbol{x}_k,\boldsymbol{w}_j\ran)^p}{\|\boldsymbol{w}_j\|^{p}}\nonumber\\
        &\leq\;2\sum_{k=1}^K\lvt\nabla_{\hat{y}}\ell(y_k,f(\boldsymbol{x}_k;W,v))\rvt\|\boldsymbol{w}_j\|^2\nonumber\\
        &\leq\; 2\sum_{k=1}^K(|y_k|+2|f(\boldsymbol{x}_k;W,v)|)  \|\boldsymbol{w}_j\|^2\nonumber\\
        &\leq\; 2\sum_{k=1}^K(1+4\epsilon \sqrt{h}M^2)   \|\boldsymbol{w}_j\|\nonumber\\
        &\leq \; 2n(+4\epsilon\sqrt{h}M^2 )\|\boldsymbol{w}_j\|^2\,.&
    \end{align}
    Let $\tau_j:=\inf\{t: \|\boldsymbol{w}_j(t)\|^2>\frac{2\epsilon M^2}{\sqrt{h}}\}$, and let $j^*:=\arg\min_j \tau_j$, then $\tau_{j^*}=\min_{j}\tau_j\leq T$ due to the fact that 
    \ben
        |f(x_{i};W,v)|= \lvt\sum_{j\in[h]}\one_{\lan \boldsymbol{w}_j,\boldsymbol{x}_k\ran>0}v_j\frac{(\lan \boldsymbol{w}_j,\boldsymbol{x}_k\ran)^p}{\|\boldsymbol{w}_j\|^p}\rvt\leq \sum_{j\in[h]} \|\boldsymbol{w}_j\|^2\leq h\max_{j\in[h]}\|\boldsymbol{w}_j\|^2\,,
    \een
    which implies "$|f(\boldsymbol{x}_k;W(t),v(t))|>2\epsilon \sqrt{h}M^2\Rightarrow \exists j, s.t.\|\boldsymbol{w}_j(t)\|^2>\frac{2\epsilon M^2}{\sqrt{h}}$". 
    
    Then for $t\leq \tau_{j^*}$, we have
    \be
        \frac{d}{dt}\|w_{j^*}\|^2\leq 2n(+4\epsilon\sqrt{h}M^2 )\|w_{j^*}\|^2\,.
    \ee
    By Gr\"onwall's inequality, we have $\forall t\leq \tau_{j^*}$
    \begin{align*}
        \|w_{j^*}(t)\|^2&\leq\; \exp\lp 2n(+4\epsilon\sqrt{h}M^2 )t\rp\|w_{j^*}(0)\|^2\,,\\
         &=\;\exp\lp 2n(+4\epsilon\sqrt{h}M^2 )t\rp\epsilon^2\|[W_0]_{:,j^*}\|^2\\
         &\leq \;\exp\lp 2n(+4\epsilon\sqrt{h}M^2 )t\rp\epsilon^2M^2\,.
    \end{align*}
    Suppose $\tau_{j^*}<\frac{1}{4n}\log\lp\frac{1}{\sqrt{h}\epsilon}\rp$, then by the continuity of $\|w_{j^*}(t)\|^2$, we have 
    \begin{align*}
        \frac{2\epsilon M^2}{\sqrt{h}}\leq \|w_{j^*}(\tau_{j^*})\|^2&\leq\; \exp\lp 2n(+4\epsilon\sqrt{h}M^2 )\tau_{j^*}\rp\epsilon^2M^2\\
        &\leq\; \exp\lp 2n(+4\epsilon\sqrt{h}M^2 )\frac{1}{4n}\log\lp\frac{1}{\sqrt{h}\epsilon}\rp\rp\epsilon^2M^2\\
        &\leq\; \exp\lp 
        \frac{1+4\epsilon \sqrt{h}M^2}{2}\log\lp\frac{1}{\sqrt{h}\epsilon}\rp\rp\epsilon^2M^2\\
        &\leq\; \exp\lp 
        \log\lp\frac{1}{\sqrt{h}\epsilon}\rp\rp\epsilon^2M^2=\frac{\epsilon M^2}{\sqrt{h}}\,,
    \end{align*}
    which leads to a contradiction $2\epsilon\leq \epsilon$. Therefore, one must have $T\geq \tau_{j^*}\geq \frac{1}{4n}\log\lp\frac{1}{\sqrt{h}\epsilon}\rp$. This finishes the proof.
\end{proof}
\begin{proof}[Proof of Lemma \ref{lem_sc}]
    Since 
    \be
    g_p'(q;\{z_i\}_{i=1}^n)=\lp\sum_{k=1}^Kz_i^q\log z_i\rp\lp\sum_{k=1}^Kz_i^{p+1-q}\rp-\lp\sum_{k=1}^Kz_i^q\rp\lp\sum_{k=1}^Kz_i^{p+1-q}\log z_i\rp\,,
    \ee
    we immediately find $g_p'(q^*;\{z_i\}_{i=1}^n)=0$. Now we compute the second-order derivative
    \begin{align*}
            g_p''(q;\{z_i\}_{i=1}^n)&\;=-2\lp\sum_{k=1}^Kz_i^q\log z_i\rp\lp\sum_{k=1}^Kz_i^{p+1-q}\log z_i\rp\\
    &\;\quad\quad+\lp\sum_{k=1}^Kz_i^q\rp\lp\sum_{k=1}^Kz_i^{p+1-q}\log^2 z_i\rp+\lp\sum_{k=1}^Kz_i^q\log^2 z_i\rp\lp\sum_{k=1}^Kz_i^{p+1-q}\rp\\
    &=\; \sum_{1\leq i,j\leq n}z_i^qz_j^{p+1-q}(-2\log z_i\log z_j)\\
    &\;\quad\quad+\sum_{1\leq i,j\leq n}z_i^qz_j^{p+1-q}\log^2 z_j+\sum_{1\leq i,j\leq n}z_i^qz_j^{p+1-q}\log^2 z_i\\
    &=\; \sum_{1\leq i,j\leq n}z_i^qz_j^{p+1-q} (\log z_i-\log z_j)^2\geq 0\,,
    \end{align*}
    and the equality holds only when $z_1=\cdots=z_n$. The desired results follow.
\end{proof}

\end{document}